\documentclass{article}
\usepackage[final]{}
\usepackage{enumitem}
\usepackage{lipsum}
\usepackage{subcaption}
\usepackage{url}
\usepackage[utf8]{inputenc} 
\usepackage[T1]{fontenc}    

\usepackage{booktabs}       
\usepackage{amsfonts}       
\usepackage{nicefrac}       
\usepackage{microtype}      
\usepackage{xcolor,colortbl} 
\usepackage{algorithm}
\usepackage{algorithmic}
\usepackage{hhline}
\usepackage{smile}
\usepackage{multicol}
\usepackage{multirow}
\usepackage{graphicx}
\usepackage{wrapfig}
\usepackage[compact]{titlesec}
\usepackage{sidecap}
\usepackage{setspace}
\allowdisplaybreaks

\theoremstyle{plain}
\def\BB{\textcolor{black}}
\ifdefined\final
\usepackage[disable]{todonotes}
\else
\usepackage[textsize=tiny]{todonotes}
\fi
\usepackage{colortbl}
\definecolor{LightCyan}{rgb}{0.8, 0.9, 1}

\newcommand{\method}{DNDM}
\def\blue#1{\textcolor{black}{#1}}
\usepackage{microtype}
\usepackage{graphicx}
\usepackage{booktabs} 
\def \CC {}


\usepackage{hyperref}




\title{
Fast Sampling via Discrete Non-Markov Diffusion Models with Predetermined Transition Time
}

\usepackage{amsmath}
\usepackage{amssymb}
\usepackage{mathtools}
\usepackage{amsthm}

\usepackage[capitalize,noabbrev]{cleveref}

\theoremstyle{plain}
\theoremstyle{definition}
\theoremstyle{remark}

\usepackage[textsize=tiny]{todonotes}


\author{%
 Zixiang Chen~~Huizhuo Yuan~~Yongqian Li~~Yiwen Kou~~Junkai Zhang ~~Quanquan Gu\\
 Department of Computer Science\\ University of California, Los Angeles \\
Los Angeles, CA 90095 \\
\texttt{\{chenzx19,hzyuan,yongqianl,evankou,jkzhang,qgu\}@cs.ucla.edu} 
}

\begin{document}

\maketitle

\begin{abstract}
Discrete diffusion models have emerged as powerful tools for high-quality data generation. Despite their success in discrete spaces, such as text generation tasks, the acceleration of discrete diffusion models remains under-explored. In this paper, we propose discrete non-Markov diffusion models (DNDM), which naturally induce the predetermined transition time set. This enables a training-free sampling algorithm that significantly reduces the number of function evaluations (i.e., calls to the neural network), making the sampling process much faster. Furthermore, we study the transition from finite to infinite step sampling, offering new insights into bridging the gap between discrete and continuous-time processes for discrete diffusion models. Extensive experiments on natural language generation and machine translation tasks demonstrate the superior performance of our method in terms of both generation speed and sample quality compared to existing methods for discrete diffusion models. Codes are available at \url{https://github.com/uclaml/DNDM}.

\end{abstract}

\section{Introduction}
Diffusion-based generative models, as first introduced by~\citet{sohl2015deep}, have shown remarkable capabilities in generating high-quality samples across various domains, including images~\citep{ho2020denoising, song2020improved}, audio \citep{chen2020wavegrad, kong2020diffwave}, and videos \citep{ho2022imagen}.
The diffusion model utilizes an innovative approach comprising a forward process that gradually transforms training data into pure noise and a reverse process that reconstructs clean data from the noise. Throughout the training phase, the model optimizes a neural network by minimizing an objective derived from maximum likelihood estimation. Once trained, the model can generate samples using various decoding strategies, including implicit dynamics \citep{song2020denoising}, analytical processes \citep{bao2022analytic}, or differential equation solvers \citep{song2020score, liu2022pseudo, lu2022dpm}. In particular, \citet{song2020denoising} introduced the denoising diffusion implicit model (DDIM), providing a non-Markov and de-randomized version of the Denoising Diffusion Probabilistic Model (DDPM)~\citep{sohl2015deep, ho2020denoising}, which enables faster generation of high-quality samples.

Although diffusion models were initially introduced for both discrete and continuous-state spaces \citep{sohl2015deep}, these studies have largely focused on Gaussian diffusion processes in continuous-state spaces. Recently, Discrete Denoising Diffusion Probabilistic Models (D3PMs) \citep{austin2021structured} working in discrete-state spaces have gained increasing interest due to their applications in diverse areas such as text generation \citep{hoogeboom2021argmax}, medical record generation \citep{ceritli2023synthesizing}, and protein design \citep{gruver2024protein}. These models, which are distinct from their Gaussian counterparts, employ discrete noises, such as the multinomial distribution, for diffusion processes. Very recently, \cite{zheng2023reparameterized} introduced a reparameterized diffusion model (RDM) that can improve sampling speed and sample quality in text generation tasks. However, their proposed algorithm is a training-based approach. Compared with diffusion models using Gaussian noise, discrete diffusion models remain under-studied, especially regarding training-free sampling acceleration.

In this work, we introduce a training-free approach aiming at enhancing the sampling speed of discrete diffusion models. This approach stems from a unique characteristic of discrete diffusion models: unlike continuous diffusion models, which typically employ Gaussian noise for data corruption~\citep{ho2020denoising, song2020improved,song2020score,song2020denoising}, discrete diffusion models often use categorical white noises~\citep{hoogeboom2021argmax, austin2021structured, zheng2023reparameterized}. 

\begin{wraptable}{t}{0.7\textwidth}
\vspace{-10pt} 
\centering
\caption{Cross Comparison of Diffusion Models.}
\begin{tabular}{c|c|c}
\toprule
 & \textbf{Continuous} & \textbf{Discrete}\\ \midrule
\multirow{2}{*}{\textbf{Markov}} & DDPM & D3PM \\
& \citep{sohl2015deep} & \cite{austin2021structured}  \\ \midrule
\multirow{2}{*}{\textbf{Non-Markov}} & DDIM & \cellcolor{cyan!20}DNDM \\
& \citep{song2020denoising} & \cellcolor{cyan!20}(Ours) \\
\bottomrule
\end{tabular}
\label{tab:comparison}
 \vspace{-10pt} 
\end{wraptable} 
By delving into this special property, we develop a discrete non-Markov diffusion model, together with a design of accelerated algorithm.  Notably, this new sampling technique does not require any modifications to the training objective of diffusion models and is, therefore, training-free. Our contributions are summarized as follows:

\begin{itemize}[leftmargin=*,nosep]
\item We propose discrete non-Markov diffusion models (DNDM), which naturally induces a set of latent variables $\mathcal{T}$, termed as the \textit{transition time set}. This key feature enables us to develop a training-free sampling algorithm that can accelerate a large family of discrete diffusion models. Importantly, DNDM preserves the essential properties of the original discrete diffusion model: for any diffusion trajectory $\{\xb_t\}$ starting from real data $\xb_0$, it provably maintains both the marginal distribution $q(\xb_t)$ and the conditional distribution $q(\xb_0|\xb_t)$. Our method can accelerate the two most widely used discrete diffusion models: multinomial diffusion~\citep{hoogeboom2021argmax} and absorbing diffusions~\citep{austin2021structured}. Similar to how DDIM introduces a de-randomized, faster sampling algorithm compared to DDPM in continuous space, DNDM achieves acceleration through a predetermined transition time set in discrete space (See Table~\ref{tab:comparison}).

\item Based on the predetermined transition time set $\cT$ in DNDM, we design an accelerated sampling algorithm that reduces the required number of neural network function evaluations. In a standard $T$ time-step discrete diffusion process, while D3PM, including Multinomial \citep{ho2020denoising} and absorbing state discrete sampling \citep{austin2021structured}, requires evaluating the neural network function $T$ times, our approach only requires $|\cT|$ function evaluations, where $|\cT|$ is the cardinality of the transition set $\cT$. Moreover, $|\cT|$ is provably less than $T$ and approaches $O(1)$ as $T$ goes to infinity.
We provide both theoretical analysis and empirical experiments showing that the improvement in the number of function evaluations (NFE) is significant. Notably, our algorithm is about $3\times$ faster than baselines for $T = 50$ and about $30\times$ faster for $T = 1000$ while preserving the sample quality.

\item To further illustrate the effectiveness of DNDM, we explore the limit as $T\rightarrow \infty$ and introduce an infinite-step sampling algorithm.
With a pretrained neural network, we can generate an initial noise $\xb_{T}$ and a transition time set $\cT\subseteq [0,1]$ with infinitesimal spacing, such that $|\cT| = O(1)$. This enables the generation of the real data distribution with only $|\cT|$ neural network evaluations. This study offers new insights into bridging the gap between discrete and continuous-time processes for discrete diffusion models. 
\end{itemize}

\noindent\textbf{Notation.}
We use $|\cT|$ to denote the cardinality of the set $\cT$ (excluding repeated elements). We use lowercase letters to denote scalars, boldface lowercase letters to denote vectors, and boldface uppercase letters to denote matrices. The notation $1:N$ indicates the sequence from $1$ through $N$. The symbol $\qb$ designates the real distribution in a diffusion process, while $\pb$ represents the distribution during sampling. With its success probability inside the parentheses, the Bernoulli distribution is denoted by $\mathrm{Bernoulli}(\cdot)$. We further use $\mathrm{Cat}(\xb;\pb)$ to denote a categorical distribution over a one-hot row vector $\xb$ with probabilities given by the row vector $\pb$.

\section{Background}\label{sec:markov}
In this section, we provide the background of discrete diffusion models. We begin by introducing the discrete Markov diffusion model, designed for handling categorical random variables. Specifically, consider a diffusion model trying to generate distributions over a discrete random variable $\mathbf{x} \in \mathbb{R}^{K}$ that is one-hot encoded with $K$ categories, i.e., $\xb$ can be chosen as one of $K$ categories, and for any $k \in [K]$, $\xb$ is categorized as $k$ if $\xb$ aligns with the standard basis vector $\eb_k$. The sequence $\{\xb_t\}_{t=0}^{T}$ represents how this random variable changes over time $0\leq t \leq T$, starting from an $\xb_0 \in \mathbb{R}^{K}$ drawn from the real distribution $\mathbf{q}_{\mathrm{data}}$. In this paper, we focus on the two most widely used D3PMs: multinomial diffusion \citep{hoogeboom2021argmax} and absorbing diffusions \citep{austin2021structured}.


\noindent\textbf{Forward Process.} During the forward process, the real distribution $\qb_{\mathrm{data}}$ is gradually transformed into a noise distribution named $\qb_{\mathrm{noise}}$. The transformation occurs through $T$ steps, with $T$ intermediate latent variables $\xb_{1}, \ldots \xb_{T}$ and update rules given by:
\begin{align}
\xb_{t} = b_{t}\xb_{t-1} + (1-b_{t})\wb_{t}, \qquad t=1, \ldots, T \label{eq:1} 
\end{align}
Here $b_t$ is randomly drawn from a Bernoulli distribution with parameter $\beta_t$, denoted by $b_t \sim \mathrm{Bernoulli}(\beta_t)$, and
$\wb_{t}$ is randomly drawn from the noise distribution $\qb_{\mathrm{noise}}$, while for different $t$ the samples are independent. In this work, we focus on cases where the noise $\qb_{\mathrm{noise}}$ can be either a uniform distribution over the vocabulary $\{1,2, \ldots, K\}$ \citep{hoogeboom2021argmax}, or a point mass with all of the probability mass lying on an absorbing state \citep{austin2021structured}. 
Following this notation, the process in~\eqref{eq:1} defines a Markov process characterized by the transition kernel
\begin{align}
q(\xb_{t}|\xb_{t-1}) = \mathrm{Cat}\big(\xb_{t}; \pb = \beta_{t}\xb_{t-1} + (1- \beta_{t})\qb_{\mathrm{noise}}\big).  
\end{align}
Moreover, the Markov chain property allows us to get samples $\xb_{0:t}$ from $\xb_0$ by multiplying 
 the transition probabilities at each step as $p(\xb_{1:t} | \xb_0) = \prod_{i=1}^t q(\xb_t | \xb_{t-1})$. It further leads to the following marginal distribution.
\begin{align}
q(\xb_{t}|\xb_{0}) = \mathrm{Cat}\big( \xb_t ; \pb = \alpha_{t}\xb_{0} + (1- \alpha_{t})\qb_{\mathrm{noise}}\big), \label{eq:markovt0}
\end{align}
where $\alpha_t: = \Pi_{s=1}^{t}\beta_s$ is determined by the sequence of $\beta_t$ of our choice and decreases from $1$ to $0$.

\noindent\textbf{Reverse Process.} 
Given the forward Markov process, the reverse process can be derived by Bayes' rule \citep{hoogeboom2021argmax,austin2021structured,zheng2023reparameterized}. The conditional probability $q(\xb_{t-1}|\xb_{0}, \xb_{t})$ can be determined by $q(\xb_{t-1}|\xb_{0}, \xb_{t}) = q(\xb_{t}|\xb_{t-1})q(\xb_{t-1}|\xb_0)/q(\xb_t|\xb_0)$. The reverse process can be used for synthetic data generation by sampling from the noise distribution $q_{\mathrm{noise}}$ and repeatedly applying a learned predictor (neural network)  $p_{\btheta}(\cdot|\xb_t)$ parameterized by $\btheta$:
\begin{align}
p_{\btheta}(\xb_{T}) = q_{\mathrm{noise}}(\xb_{T}), \qquad q_{\btheta}(\xb_{t-1}|\xb_t) = \int_{\hat{\xb}_0}q(\xb_{t-1} | \xb_t, \hat{\xb}_0) p_{\btheta}(\hat{\xb}_0|\xb_t)d \hat{\xb}_0. \label{eq:generation}
\end{align}
We note that the reverse process $q(\xb_{t-1} | \xb_t, \hat{\xb}_0)$ is stochastic and thus requires function evaluation at every step.

\noindent\textbf{Training the Neural Network.} The neural network $p_{\btheta}(\cdot|\xb_t)$ that predicts $\hat{\xb}_0$ is trained by  maximizing the evidence lower bound (ELBO) \citep{sohl2015deep}, 
\begin{align}
\log p_\theta(\xb_0) &\geq  \mathbb{E}_{q(\xb_{1: T} | \xb_0)}\Big[\log \frac{p_{\btheta}(\xb_{0: T})}{q(\xb_{1: T} | \xb_0)}\Big] d \xb_{1: T} \notag \\
&=  \mathbb{E}_{q(\xb_1 | \xb_0)}[\log p_{\btheta}(\xb_0 | \xb_1)]  - \sum_{t=2}^{T}\mathbb{E}_{q(\xb_t | \xb_0)}[\mathrm{KL}(q(\xb_{t-1} | \xb_t, \xb_0) \| p_{\btheta}(\xb_{t-1} | \xb_t)) \notag\\
&\qquad - \EE_{q(\xb_T|\xb_0)}\mathrm{KL}(q(\xb_T | \xb_0) \| p_{\btheta}(\xb_T)), \label{eq:ELBOmain}
\end{align}
Here KL denotes Kullback-Liebler divergence and the last term $\EE_{q(\xb_T|\xb_0)}\mathrm{KL}(q(\xb_T | \xb_0) \| q_{\mathrm{noise}}(\xb_{T}))$ equals zero. Building on this foundation, \citet{austin2021structured} introduced an auxiliary denoising objective, which refines the data predictions $\xb_0$ at each time step. Since this paper primarily focuses on reverse sampling, we leave detailed discussions of these losses to Appendix~\ref{sec:morebackground}.

\section{Discrete Non-Markov Diffusion Models (DNDM)} 

\subsection{Forward and Reverse Process}\label{sec: De}
In this section, we introduce a non-Markov process such that the joint distribution of $(\xb_{0}, \xb_{t})$ remains the same as the one defined with Markov process in Section~\ref{sec:markov}. The new process aims to gradually transform input data $\qb_{\mathrm{data}}$ to the noise distribution $\qb_{\mathrm{noise}}$ through $T$ intermediate latent variables $\xb_{1}, \ldots \xb_{T}$ with the following process:
\begin{align}
\xb_{t} = b_{t}\xb_{t-1} + (1-b_{t})\wb, \label{eq:2}
\end{align}
where $b_{t}$ is independently drawn from the Bernoulli  distribution $\mathrm{Bernoulli }(\beta_t)$ and $\wb$ is drawn from the noise distribution $\qb_{\mathrm{noise}}$. The only difference between~\eqref{eq:2} and \eqref{eq:1} is that we replace $\wb_t$ in~\eqref{eq:1} by $\wb$, which is time-invariant during the diffusion. Therefore, the process in~\eqref{eq:2} becomes non-Markov since $q(\xb_{t}|\xb_{t-1},\ldots,\xb_{0})$ doesn't necessarily equals $q(\xb_{t}|\xb_{t-1})$. The following theorem shows that the conditional distribution $q(\xb_{t}|\xb_{0})$ remains unchanged.
\begin{theorem}\label{thm:same distribution}
For the non-Markov process in~\eqref{eq:2}, we have 
\begin{align*}
q(\xb_{t}|\xb_{0}) = \mathrm{Cat}\big(\xb_t; \pb = \alpha_{t}\xb_{0} + (1- \alpha_{t})\qb_{\mathrm{noise}}\big),   
\end{align*}
where $\alpha_t: = \Pi_{i=1}^{s}\beta_s$ is specified to decrease from $1$ to $0$. 
\end{theorem}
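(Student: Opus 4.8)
The plan is to argue pathwise: unroll the recursion~\eqref{eq:2} and observe that the noise vector $\wb$ behaves as an absorbing state. The key point is that, given $\xb_0$, the iterate $\xb_t$ can take only one of two values --- it equals $\xb_0$ if none of the Bernoulli variables up to time $t$ has flipped to zero, and equals $\wb$ otherwise. It is exactly the time-invariance of $\wb$ that makes the marginal coincide with the Markov one of Section~\ref{sec:markov} despite the loss of the Markov property: the "amount of noise injected" is determined only by whether a flip has occurred, not by how many.

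Concretely, let $A_t := \{b_1 = \cdots = b_t = 1\}$, with $A_0$ the sure event. First I would establish, by induction on $t$, the pathwise identity
\begin{align*}
\xb_t = \mathbf{1}_{A_t}\,\xb_0 + \mathbf{1}_{A_t^c}\,\wb .
\end{align*}
The base case $t=1$ is immediate from~\eqref{eq:2}. For the inductive step: if $b_t = 0$ then $\xb_t = \wb$ and $A_t$ fails, so the identity holds; if $b_t = 1$ then $\xb_t = \xb_{t-1}$ and $A_t = A_{t-1}$, so the identity for $t$ follows from the identity for $t-1$. This records the "freezing" phenomenon --- once a flip occurs, the iterate stays at $\wb$ for all subsequent times.

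Finally I would pass from paths to distributions. Since $b_1,\ldots,b_t$ are independent with $b_s \sim \mathrm{Bernoulli}(\beta_s)$, we have $\Pr(A_t) = \prod_{s=1}^{t}\beta_s = \alpha_t$ and $\Pr(A_t^c) = 1-\alpha_t$; moreover $\wb \sim \qb_{\mathrm{noise}}$ is independent of $(b_1,\ldots,b_t)$ and of $\xb_0$. Conditioning on $\xb_0$ and plugging in the pathwise identity, for any one-hot $\xb$,
\begin{align*}
q(\xb_t = \xb \mid \xb_0) = \alpha_t\, \mathbf{1}\{\xb = \xb_0\} + (1-\alpha_t)\, q_{\mathrm{noise}}(\xb),
\end{align*}
which is the categorical law with probability vector $\alpha_t\xb_0 + (1-\alpha_t)\qb_{\mathrm{noise}}$, i.e. $\mathrm{Cat}\big(\xb_t; \pb = \alpha_t\xb_0 + (1-\alpha_t)\qb_{\mathrm{noise}}\big)$, using that categorical distributions are closed under mixtures and that the parameter of a mixture is the mixture of the parameters. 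The stated monotonicity of $\alpha_t = \prod_{s=1}^t \beta_s$ from $1$ to $0$ is then just the usual design choice on the schedule $\{\beta_t\}$, exactly as in the Markov case. There is no substantial obstacle here; the only points that require care are the independence bookkeeping ($\wb$ independent of $(b_s)_s$ and of $\xb_0$), which is what supplies the mixture weights, and the mixture-closure fact for categorical laws that lets us rewrite the two-point mixture in the $\mathrm{Cat}(\cdot;\pb)$ form.
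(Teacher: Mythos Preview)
Your proposal is correct and takes essentially the same approach as the paper: both establish the pathwise identity $\xb_t = a_t\,\xb_0 + (1-a_t)\,\wb$ with $a_t = b_1\cdots b_t$ (equivalently, your $\mathbf{1}_{A_t}$), observe that $a_t \sim \mathrm{Bernoulli}(\alpha_t)$ by independence, and then read off the categorical marginal. The only cosmetic difference is that the paper unrolls the recursion algebraically and telescopes the sum $\sum_{s=1}^{t}(1-b_s)b_{s+1}\cdots b_t = 1 - b_1\cdots b_t$, whereas you reach the same identity by a short induction with a case split on $b_t$; the content is identical.
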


Using the Bayes' rule, we have $q(\xb_{0}|\xb_t) \propto q(\xb_{t}| \xb_{0})q(\xb_{0})$. Consequently, the condtional distribution $q(\xb_{0}|\xb_t)$ remains consistent with the one induced by the process process in~\eqref{eq:1}. Therefore, neural network $p_{\btheta}(\cdot|\xb_{t})$ trained by the Markov process in~\eqref{eq:1}, remains applicable to our non-Markov process~\eqref{eq:2} (see Appendix~\ref{sec:morebackground} for detail). 

Based on the discrete non-Markov diffusion model, we can give a simple characterization of the reverse process by introducing the transition time.  
\begin{definition}\label{Def:transition time}
Transition time $\tau$ is the time that the token $\xb_{t}$ transition from $\xb_0$ to noise, i.e., $\tau:= \min_{t}\{t|b_{t} = 0\}$. 
\end{definition}
{\color{black}
\begin{remark}
    The concept of transition time has also been introduced in \citet{hoogeboom2021autoregressive}. However, \citet{hoogeboom2021autoregressive} restricts the transition time to be the first time of entering the absorbing state, which is only applicable to absorbing diffusion. Our definition is more general and applicable to discrete diffusion with various noise including multinomial diffusion.
\end{remark}
}
Given the transition time $\tau$, the forward process reduces to:
\begin{align}
\xb_{t} = \ind(\tau>t)\xb_{0} + \ind(\tau\leq t)\wb,\label{eq:deterministic}
\end{align}
which shows that the token will be a real token $\xb_{0}$ before the time $\tau$ and will be the noise $\wb$ after the transition time. Since token only get changed at the transition time $\tau$, we can derive a reverse process based on~\eqref{eq:deterministic},
\begin{align}
\xb_{t-1} = \ind(\tau=t)\xb_{0} + \ind(\tau\not= t)\xb_{t}.\label{eq:version1}
\end{align}
Therefore, the process in \eqref{eq:version1} is de-randomized given transition time $\tau$. Specifically, after independently sampled transition times $\tau$,
$\xb_{t-1}$ becomes deterministically known and fixed if we observe $\xb_{0}$ and $\xb_{t}$.
It is also worth noting that given $\xb_0$ and $\tau$, the exact reverse process \eqref{eq:version1} is Markovian, since $\xb_{t-1}$ solely depends on  $\xb_0, \tau, \xb_t$. Plugging \eqref{eq:version1} into \eqref{eq:generation} gives the generation process. We can prove the ELBO of the DNDM is equivalent to the ELBO of the original process \eqref{eq:ELBOmain} up to some constant, which further supports the neural network $p_{\btheta}(\cdot|\xb_{t})$ trained by the Markov process in~\eqref{eq:1}, remains applicable to \method{}. (See Appendix~\ref{subsubsec: finite time ELBO} for details).
\begin{remark}
\eqref{eq:deterministic} and \eqref{eq:version1} suggest that even though there are $T$ distinct time steps, not every time in the range $1:T$ is crucial for capturing the process. Therefore, our primary focus should be on the most significant time step, i.e., the transition time $\tau$, enabling faster reverse sampling. We further note that although transition happens only at time $\tau$, the transition time is random, differs across runs, and covers the full range from $1$ to $T$ on average.
\end{remark}

\begin{remark}
While \citet{song2020denoising} proposed a non-Markov multinomial diffusion model in Appendix A, DDIM and DNDM are fundamentally different models when specialized to multinomial diffusion. DDIM's discrete process remains stochastic at every step, even with deterministic noise scheduling. In contrast, DNDM achieves full de-randomization by pre-determined transition time $\tau$ (Equation 8 in our paper). By sampling these transition times upfront, DNDM establishes a predetermined transition time set that guides the sampling process, enabling deterministic evolution and faster sampling speed even under the same number of sampling steps, which is not reported under DDIM framework. For detailed technical comparison, see Appendix~\ref{app:ddim}.
\end{remark}

\subsection{Accelerated Reverse Sampling}\label{sec:fast}
In this section,  we demonstrate that sampling from \method{} can lead to accelerated reverse sampling. Although our algorithm is quite general, we focus on text generation in the presentation.

In Section~\ref{sec: De}, we only consider the case of a single token $\xb \in \RR^{K}$ being one hot encoding of $K$ categories. In real applications, we are interested in generating a sentence with multiple tokens. So, we extend the terminology in Section~\ref{sec: De}, and we denote the sequence of tokens at $t$-th time step to be $\xb_{t, 1:N} = [\xb_{t,1}, \ldots, \xb_{t,N}]$ where $\xb_{t,n}$ is the $n$-th token and $N$ is the sequence length. The noise will be added to each token in a sequence independently. Therefore, each token will have its own transition time defined in Definition~\ref{Def:transition time}. We denote the transition time for each token $\xb_{n}$ to be $\tau_{n}$ and further denote the transition time set $\cT := \{\tau_{n}\}_{n=1}^{N}$. Given the transition times $\tau_{n} \in \cT$, our \method{} can now be extended to the sequence with multiple tokens
\begin{align}
\xb_{t-1,n} = \ind(\tau_n=t)\xb_{0,n} + \ind(\tau_{n}\not= t)\xb_{t,n}, \forall n\in [N]\label{eq:multi-token version1}.
\end{align}
\blue{
\textbf{Learning the Reverse Process.}  We first generate the transition times $\tau_{n}$ for $n \in [N]$, then we follow \eqref{eq:multi-token version1} to generate the learned reverse process. Since $\xb_{0,n}$ is unknown in the process, we use the neural network evaluation $p_{\btheta}(\cdot| \xb_t)$ obtained in Section~\ref{sec: De} to predict $\xb_{0,n}$.} In detail, the noisy sequence $\xb_{t,1:N}$ is fed into $p_{\btheta}(\cdot| \xb_{t,1:N})$ and
the prediction tokens $\hat{\xb}_{0, 1:N} \sim p_{\btheta}(\cdot| \xb_{t, 1:N})$ are collected.

\noindent\textbf{Transition time.} Transition time, denoted by $\tau$, is crucial in our reverse process. This is because the reverse sampling becomes deterministic upon using~\eqref{eq:multi-token version1}. Each instance of transition time $\tau$ is a random variable within the set $\{1, 2, \ldots, T\}$. Let's assume it follows the distribution $\cD_{\tau}$. Given the schedule $\{\alpha_{t}\}_{t=0}^{T}$, we can derive the distribution for $\cD_{\tau}$.

\begin{theorem}\label{thm:transition probability}
Each specific transition time $\tau_n$ in Definition~\ref{Def:transition time} is independent. Furthermore, they collectively adhere to the distribution $\cD_{\tau}$, which obeys the rule $\mathbb{P}(\tau_n = t) = \alpha_{t-1} - \alpha_{t}$. 
\end{theorem}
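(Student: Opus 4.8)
The plan is to analyze the transition time $\tau_n$ for a single token (the argument is identical for each $n$, and independence across tokens follows from the independence of the Bernoulli variables $b_{t,n}$ and noise $\wb_n$ across different $n$, since noise is added to each token independently as stated in Section~\ref{sec:fast}). So fix a token and recall from Definition~\ref{Def:transition time} that $\tau = \min_t\{t : b_t = 0\}$ where $b_t \sim \mathrm{Bernoulli}(\beta_t)$ are mutually independent. First I would compute the event $\{\tau > t\}$: this is exactly the event that $b_1 = b_2 = \cdots = b_t = 1$, i.e. no transition has yet occurred by time $t$. By independence of the $b_s$, $\mathbb{P}(\tau > t) = \prod_{s=1}^t \beta_s = \alpha_t$, using the definition $\alpha_t := \prod_{s=1}^t \beta_s$ (with the convention $\alpha_0 = 1$, consistent with $\alpha_t$ decreasing from $1$ to $0$).

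Next I would write the point mass of $\tau$ as a difference of survival probabilities: $\mathbb{P}(\tau = t) = \mathbb{P}(\tau > t-1) - \mathbb{P}(\tau > t) = \alpha_{t-1} - \alpha_t$ for $t \in \{1,\dots,T\}$. This telescoping identity is the core of the claim. I should also check consistency at the boundaries and that this is a valid distribution: the probabilities $\alpha_{t-1}-\alpha_t$ are nonnegative because $\{\alpha_t\}$ is non-increasing, and $\sum_{t=1}^T (\alpha_{t-1} - \alpha_t) = \alpha_0 - \alpha_T = 1 - \alpha_T$. Since $\alpha_T \to 0$ (noise schedule drives $\alpha$ to $0$), in the idealized case $\alpha_T = 0$ this sums to $1$; more carefully, one may either assume $\beta_T$ (or the final schedule value) forces $\alpha_T = 0$, or note that with probability $\alpha_T$ no transition occurs within $1:T$, in which case one conventionally sets $\tau = T$ or treats $\tau = \infty$, and the stated formula describes the law of $\tau$ on $\{1,\dots,T\}$. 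I would state the convention explicitly to keep the statement clean.

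For independence of the $\tau_n$ across tokens: each $\tau_n$ is a deterministic function of the sequence $(b_{t,n})_{t=1}^T$, and these sequences are mutually independent across $n$ (the Bernoulli draws $b_{t,n}$ are independent across both $t$ and $n$ by construction in~\eqref{eq:2} extended tokenwise). Hence the $\tau_n$ are mutually independent, and by the single-token computation they are identically distributed according to $\cD_\tau$ with $\mathbb{P}(\tau_n = t) = \alpha_{t-1} - \alpha_t$.

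I do not anticipate a substantial obstacle here — the result follows directly from the independence structure of the Bernoulli variables and a telescoping argument. The only point requiring care is the boundary behavior at $t = T$ (the residual mass $\alpha_T$), which I would handle by stating the convention that $\alpha_T = 0$ under the chosen schedule (equivalently, forcing a transition at the last step), so that $\cD_\tau$ is a genuine probability distribution on $\{1,\dots,T\}$; this is consistent with the marginal $q(\xb_T|\xb_0) = \qb_{\mathrm{noise}}$ guaranteed by Theorem~\ref{thm:same distribution} at $\alpha_T = 0$.
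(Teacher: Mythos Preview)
Your proposal is correct and takes essentially the same approach as the paper: both arguments reduce to the independence of the Bernoulli variables $b_{t,n}$ and a one-line probability computation. The only cosmetic difference is that the paper computes $\mathbb{P}(\tau = t)$ directly as $\big(\prod_{s=1}^{t-1}\beta_s\big)(1-\beta_t) = \alpha_{t-1}-\alpha_t$, whereas you obtain it as the difference of survival probabilities $\mathbb{P}(\tau > t-1)-\mathbb{P}(\tau>t)$; your added remark on the boundary mass $\alpha_T$ matches the paper's post-proof observation that the distribution is valid precisely when $\alpha_t$ decreases from $1$ to $0$.
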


From Theorem~\ref{thm:transition probability}, we discern that the nature of the diffusion model scheduler, ${\alpha_t}$, clarifies the distribution of $\tau$. Take the linear schedule as an example, as given by \citet{austin2021structured}, the relationship is $\alpha_t = 1 - t/T$. This translates to $\mathbb{P}(\tau_n = t) = 1/T$ for every $t$ in the range $1$ to $T$. As a result, transition time distributes uniformly across each moment in the set $\{1, \ldots, T\}$. Generally, if we express $\alpha_t$ as $g(t/T)$, then we can simplify to $\mathbb{P}(\tau_n = t) = g((t-1)/T) - g(t/T)$, which further refines to $(1/T)|g'(t/T)| + o(1/T)$. This indicates that transitions are more likely where $|g'|$ is large. 

\begin{wrapfigure}{R}{0.55\textwidth}
\vspace{-20pt}
\begin{minipage}{\linewidth}
\begin{algorithm}[H]
\caption{Sampling From DNDM}
\begin{algorithmic}[1]\label{alg:DNDMv1}
\REQUIRE Trained prediction function $p_{\btheta}$, $\qb_{\mathrm{noise}}$, $\cD_{\tau}$
\FOR{$n = 1 \ldots N$}
\STATE Initiate each token $\xb_{T,n} \sim \qb_{\mathrm{noise}}$
\STATE Initiate the transition time $\tau_{n} \sim \cD_{\tau}$
\ENDFOR
\STATE Collect transition time set $\cT = \{\tau_{n}\}_{n=1}^{N}$
\FOR{$t = T \ldots 1$}
\IF{$t \in \cT$}
\STATE Generate $\tilde{\xb}_{0,1:N}$ from $p_{\btheta}(\cdot |\xb_{t, 1:N})$
\FOR{$n = 1 \ldots N$}
\STATE Update $\xb_{t-1,n}$ based on condition of $\tau_{n}$
\ENDFOR
\ELSE
\STATE Update $\xb_{t-1,1:N} = \xb_{t,1:N}$
\ENDIF
\ENDFOR
\STATE \textbf{Return} $\xb_{0,1:N}$
\end{algorithmic}
\end{algorithm}
\end{minipage}
\vspace{-10pt}
  \end{wrapfigure}

In practice, we observed that the shape of the transition time does not need to exactly match the theoretically predicted schedule $\mathcal{D}{\tau}$ in Theorem~\ref{thm:transition probability}. Algorithm~\ref{alg:DNDMv1} works even if $\mathcal{D}{\tau}$ is unknown. In particular, we can approximate the schedule with a Beta distribution by first sampling a time $t \in [0,1]$ from a Beta distribution, then adjusting these samples to fit by multiplying by $T$ and rounding the result to obtain an integer.

\noindent\textbf{Accelerated Sampling.}
According to \eqref{eq:multi-token version1}, a token $\xb_{t-1, n}$ is updated only if step $t$ is the transition time for the $n$-th token. If step $t$ is not the transition time for any token, the sentence from the previous step can be directly copied: $\xb_{t-1, 1:N} = \xb_{t, 1:N}$. As a result, there is no need to do a function evaluation for the current step. Our attention, therefore, can be solely centered on the transition set $\cT$, necessitating function evaluations only for $t$ within $\cT$. For our method, when $N$ is fixed while $T \rightarrow \infty$, the total NFE $|\cT|$ will reach $N$.  On the other hand, when $T$ is fixed and $N \rightarrow \infty$, the NFE $\cT$ will reach $T$ (See Theorem~\ref{thm:1} for detail). It is worth noting that the auto-regressive diffusion model (ARDM) \citep{hoogeboom2021autoregressive} can also achieve at most $N$ NFE when $T = \infty$. However, ARDM only focuses on infinite time steps, while our method here is able to accelerate sampling for finite time steps. More detailed discussion and theoretical analysis can be found in Section \ref{sec:nfe}, where additional experiments also demonstrate that our DNDM achieves an NFE that is less than half of the original Markov sampling method for discrete diffusion.

By incorporating the forward process with different noises, we can develop DNDM-Multi and DNDM-Absorb, which accelerate the Multinomial and Absorbing sampling methods respectively. Recent works have demonstrated that the quality of samples can be enhanced by utilizing supplementary information derived from the neural network, \citep{ghazvininejad2019mask, savinov2021step, chang2022maskgit, he2022diffusionbert, zheng2023reparameterized}. Our \method{} can also be improved using this idea. We call it a discrete non-Markov Diffusion Model with Top-k Transition Time (\method-$k$). Due to the limit of the pages, we leave the detailed Algorithm and discussion to Appendix~\ref{Appendix: Top}.

\subsection{Continous-time (Infinite Step) Reverse Sampling}\label{sec:continuous}
In the context of continuous state spaces, continuous-time processes have been proposed to accommodate algorithms that offer faster sampling speeds and enhanced sample quality~\citep{jolicoeur2021gotta, zhang2022fast, salimans2022progressive, chung2022come, song2020score, dockhorn2021score}. However, the application of continuous-time schemes to discrete-state spaces remains largely unexplored. \citet{campbell2022continuous} first developed a continuous framework for discrete-time diffusion for the Markovian process and randomized sampling, but not in our non-Markovian setting. In this section, we investigate the transition from finite to infinite step sampling, providing new insights into bridging the gap between discrete and continuous-time processes for discrete diffusion models.

\noindent\textbf{Continuous-time Forward and Backward process.}
Recall that the forward process described in~\eqref{eq:2} can be sampled from $\mathbf{x}_{0, n}$ through the following process:
\begin{align}\label{eq:zeroton}
\mathbf{x}_{t, n} = \alpha_t \mathbf{x}_{0, n} + (1 - \alpha_t) \mathbf{q}_{\text{noise}},\quad \alpha_t = \prod_{i=1}^t \beta_i.
\end{align}

\begin{wrapfigure}{R}{0.55\textwidth}
\vspace{-25pt}
\begin{minipage}{\linewidth}
\begin{algorithm}[H]
    \caption{Sampling from DNDM-C}
    \begin{algorithmic}[1]\label{alg:continuous}
    \REQUIRE Trained prediction function $p_{\btheta}$, $\qb_{\mathrm{noise}}$, $\cD_{\tau}$

    \FOR{$n = 1 \ldots N$} 
    \STATE Initiate each token $\xb_{T,n} \sim \qb_{\mathrm{noise}}$
    \STATE Initiate the transition time $\tau_{n} \sim \cD_{\tau}$ and order them as $\tau_{n_1} < \ldots < \tau_{n_N}$
    \ENDFOR
    \FOR{$k = N \ldots 1$}
    \STATE Generate $\tilde{\xb}_{0,1:N}$ from $p_{\btheta}(\cdot|\xb_{\tau_{n_k}, 1:N},\tau_{n_k})$

    \FOR{$n = 1 \ldots N$} 
    \STATE Update $\xb_{\tau_{n_{k-1}},n}$ based on condition of $\tau_{n}$
    \ENDFOR

    \ENDFOR

    \STATE \textbf{Return} $\xb_{0,1:N}$
    \end{algorithmic}
\end{algorithm}
\end{minipage}
\vspace{-10pt}
  \end{wrapfigure}
  
In the previous section, we are constrained to discrete time steps, where we must define a maximum step, denoted by $T$. The values of $\mathbf{x}_t$ are computed only for $t = 1, \ldots, T$.
As a result, during the training process, it is only possible to predict $\xb_0$ at these predetermined time steps. This constraint confines the computation of our reverse process exclusively to these fixed time stamps.
To derive the continuous limit of~\eqref{eq:zeroton}, for each $T$ we rescale~\eqref{eq:zeroton} to a diffusion process on $[0, 1]$, e.g., $\xb_{T, n} = \hat{\xb}_{1, n}, \xb_{0, n} = \hat{\xb}_{0, n}$, and $\xb_{t, n} = \hat{\xb}_{t/T, n}$. Therefore, when $T \rightarrow \infty$, $\hat{\xb}_{t, n}$ represents the continuous process that has values at arbitrary $t \in [0, 1]$.
If the choice of $\alpha_t$ for each $T$ is scale-invariant, we can define a continuous function $\alpha(t)$ as the continuous $\alpha$ schedule of the discrete counterpart\footnote{If we represent $\alpha_t$ with maximum step $T$ as $\alpha_t(T)$, the scale-invariant property states that $\alpha_{ct}(cT) = \alpha_{t}(T)$. The simplest example of such an $\alpha_t$ schedule is $\alpha_t(T) = 1 - t/T$, under which $\alpha(t) = 1 - t$.}.
More specifically, we obtain
\begin{align}\label{eq:continuous_forward}
\hat{\xb}_{t, n} = \alpha(t) \hat{\xb}_{0, n} + (1 - \alpha(t)) \qb_{\text{noise}}, \quad t \in[0, 1].
\end{align}

For the reverse-time process, we define the transition time set $\mathcal{T}:= \{\tau_n\}_{n=1}^N$ consistent with Theorem~\ref{thm:transition probability} and sample it from $\mathbb{P}(\tau_n = t) = -\alpha'(t)$ (we always use decreasing $\alpha(t)$). With $\mathcal{T}$ defined, the updates to $\mathbf{x}_{t, n}$ only occur at $\{\tau_n\}$. Consequently, we arrange $\tau_n$ to obtain an ordered sequence $\tau_{n_k}$, where $\tau_{n_1} < \tau_{n_2} < \ldots <\tau_{n_N}$. When omitting the infinitely many time steps between $\tau_{n_k}$ and $\tau_{n_{k-1}}$, the resulting reverse process is then given by:
\begin{align}
\mathbf{x}_{\tau_{n_{k-1}},n} = \ind(\tau_n=\tau_{n_{k-1}})\mathbf{x}_{0,n} + \ind(\tau_{n}\neq \tau_{n_{k-1}})\mathbf{x}_{\tau_{n_k},n},\label{eq:continuous_tokens}.
\end{align}
for all $n\in [N]$. The detailed algorithm named DNDM-C is shown in Algorithm~\ref{alg:continuous}. 

 \begin{remark} 
Autoregressive Diffusion Model (ARDM) \citep{hoogeboom2021autoregressive} is a discrete diffusion model built upon the autoregressive nature of data. ARDM is shown to be equivalent to a continuous-time absorbing diffusion model and thus provides a unique perspective for discrete diffusion. For continuous-time ($T = \infty$) reverse sampling, both ARDM and our method achieve $N$ NFEs. Unlike ARDM which is limited to absorbing-state transitions, our method provides a unified framework including both absorbing and multinomial diffusions, applicable to both finite time and continuous time diffusions.  For infinite timesteps, \citet{hoogeboom2021autoregressive} also proposed an advanced parallelizing technique that can reduce NFE according to the log-likelihood, which we have not considered in DNDM-C.
\end{remark}

\section{Experiments}\label{sec:experiments}

In this section, we evaluate \method{} and demonstrate its superior performance on two types of tasks: conditional sequence-to-sequence text generation (i.e., machine translation) and unconditional text generation. 
For the fairness of comparison, all the experiments are conducted using a single NVIDIA RTX A6000 GPU with 48 GB memory. Additional experiment details are provided in Appendix~\ref{Appendix: details}.

\subsection{Conditional Text Generation}\label{sec:exp_text}
We evaluate \method{}'s effectiveness on conditional text generation through machine translation tasks. Following \citet{zheng2023reparameterized}, we use Byte Pair Encoding (BPE) \citep{sennrich2016neural} to create a shared vocabulary of words and subwords from both source and target languages. We implement our experiments using FairSeq \citep{ott2019fairseq}, which employs an encoder-decoder architecture. The model uses bi-directional self-attention blocks without causal masking, allowing tokens to attend to both past and future positions during training and inference. The encoder processes the source text, while the decoder generates the target translation.



\noindent\textbf{Datasets.} \CC{ We use the following three datasets to compare with the baselines for machine translation tasks:
(1) $\mathtt{IWSLT14\ DE}$-$\mathtt{EN}$ \citep{cettolo-etal-2014-report}, a dataset with German as the source language and English as the target language. It consists of $174272$ examples (sentence pairs), and each of the validation set and the testing set accounts for $7283$ and $6750$ of the dataset; (2) $\mathtt{WMT14\ EN}$-$\mathtt{DE}$ \citep{bojar-etal-2014-findings}, which is an English-to-German translation dataset consisting of $3967182$ examples. Each of the validation set and the testing set accounts for $3000$ and $3003$ of the dataset; and (3) $\mathtt{WMT16\ EN}$-$\mathtt{RO}$ \citep{bojar-etal-2016-findings}, which is an English-to-Russian translation dataset consisting of $612317$ examples. Each of the validation sets and the testing set accounts for $1999$ and $1999$ of the dataset. The train-validation-test split is fixed across all experiments for all machine translation datasets to ensure fair comparison.}

\noindent\textbf{Performance Metrics.} We use the BLEU score \citep{papineni2002bleu} to evaluate the machine translation quality, where the BLEU score is calculated based on the similarity between the actual target sequence and the predicted target sequence. The sampling speed is measured by wall-clock time (in second).

\noindent\textbf{Baselines.} 
The main baselines we are comparing with are RDM and RDM-$k$ from \citet{zheng2023reparameterized}. Here, we use RDM-$k$ and RDM to denote the sampling method proposed in their paper with and without the usage of top-$k$ selection for the token generation technique (see Appendix~\ref{Appendix: Top} for more details), respectively. RDM and RDM-$k$ are applied to two previously proposed state-of-the-art discrete diffusion models: Multinomial Diffusion \citep{hoogeboom2021argmax} and  Absorbing Diffusion \citep{austin2021structured}.




\noindent\textbf{Results and Discussion.} Tables~\ref{tab:real} and \ref{tab:top} present the performance evaluations of our algorithms in machine translation tasks. Table~\ref{tab:real} presents results for multinomial diffusion, while Table~\ref{tab:top} displays results for absorbing diffusion. Our reported time and BLEU scores are averaged over 5 repeated experiments, except for the baseline RDM experiment\footnote{Due to computational intensity, we did not repeat the 1000-step sampling for the RDM baseline. However, reproducing it was deemed unnecessary as the sampling time is largely stable across repeated experiments, and the precise averaged timing is not critical for demonstrating the speed improvement of \method.}.

From Tables~\ref{tab:real} and~\ref{tab:top}, we observe that methods based on \method{} significantly accelerate the sampling process compared to baseline diffusion models. This acceleration allows for greater flexibility in increasing the number of steps (up to infinity) without imposing a significant computational burden. 
In particular, more \BB{sampling steps} lead to better generation quality (BLEU) at the expense of longer sampling time, as indicated in each column of Tables~\ref{tab:real} and~\ref{tab:top}.
For RDM-based methods, generation time increases linearly with the number of \BB{sampling steps}.
On the contrary, for our \method{}-based method, generation time only increases marginally (See Figure~\ref{fig:step-time} in Section~\ref{sec:add_exp}).
As a result of the difference in the growing speed of sampling time with respect to sampling steps, the more sampling steps, the more speedup \method{} can obtain.

Continuous-time results, as the ultimate limit of increasing sampling steps, are presented in the last row of each dataset with the tag $\infty$. Given that the results with 1000 steps consistently outperform those with 50 steps, we compare $\infty$ with 1000 steps in Table~\ref{tab:real} and~\ref{tab:top}.
For $\mathtt{IWSLT14}$ and $\mathtt{WMT16}$, where the generation BLEU score is relatively high, we observe a consistent performance improvement of up to $0.3$ in BLEU score when utilizing the DNDM-C algorithm, with the exception of a single case in the absorbing diffusion setting for $\mathtt{WMT16}$ without the use of top-$k$ selection.
The performance gain of the continuous-time method on $\mathtt{WMT14}$ is less significant, with both drops and gains. However, $\mathtt{WMT14}$ itself has not reached a high level of performance, with a BLEU score significantly lower than other datasets. In general, training $\mathtt{WMT14}$ poses challenges across all diffusion models, including multinomial diffusion~\citep{hoogeboom2021argmax}, absorbing diffusion~\citep{austin2021structured}, and RDM diffusion~\citep{zheng2023reparameterized}, etc. \blue{We defer a more detailed discussion on WMT14 to Appendix~\ref{app:cond_text_gen}}. Finally, when compared with the results obtained with 50 steps, the performance of DNDM-C demonstrates improvement consistently. 
Furthermore, we note that regardless of the dataset or the method (i.e., RDM or \method{}) employed, top-$k$ token generation consistently outperforms vanilla methods. This approach enhances the BLEU score by approximately $1$-$2$ points without introducing significant increases in sampling time.

 
\newcolumntype{g}{>{\columncolor{LightCyan}\hspace{0pt}}c}

\begin{table*}[ht!]
  \caption{BLEU score comparison of multinomial diffusion on machine translation benchmarks $\mathtt{IWSLT14\ DE}$-$\mathtt{EN}$, $\mathtt{WMT14\ EN}$-$\mathtt{DE}$, and $\mathtt{WMT16\ EN}$-$\mathtt{RO}$. Below the dataset, we present the amount of data used to run the evaluation (sentences).  The blue background highlights our algorithms, and the bold number indicates the best performance within each row and each setting (i.e., with or without top-k).}
    \centering
    \small
\resizebox{\columnwidth}{!}{
    \begin{tabular}{c|c|c|c|g|g|c|c|g|g}
    \toprule 
        \multirow{2}{*}{\textbf{Dataset}} & \multirow{2}{*}{\textbf{Steps}} & \multicolumn{2}{c|}{\textbf{RDM-Multi}} & \multicolumn{2}{c|}{\textbf{DNDM-Multi}} & 
        \multicolumn{2}{c|}{\textbf{RDM-$k$-Multi}} & \multicolumn{2}{c}{\textbf{DNDM-$k$-Multi}} \\
        \cmidrule{3-10}
         &  & \textbf{BLEU} & \textbf{Time (s)} & \cellcolor{white}\textbf{BLEU} & \cellcolor{white}\textbf{Time (s)} & \textbf{BLEU} & \textbf{Time(s)} & \cellcolor{white}\textbf{BLEU} & \cellcolor{white}\textbf{Time (s)} \\
        \midrule
        \multirow{4}{*}{\centering$\mathtt{IWSLT14}$}  & 25 &\textbf{31.26} & 166.9 &30.95 &\textbf{52.9} & \textbf{32.82} & 161.9 & 32.30 &\textbf{52.6} \\
        
           \multirow{4.5}{*}{ (6.75k) }& 50 & \textbf{31.50} & 328.6 & 31.45& \textbf{83.9}& \textbf{32.82} & 321.2 & 32.80 & \textbf{93.2}\\
        
        & 1000 & 31.69 & 6308.9 &\textbf{31.82} & \textbf{191.3} & 32.64 & 6321.3 &  \textbf{33.15} & \textbf{191.5} \\

        & $\infty $ &-  & -& \textbf{31.89} & \textbf{225.2} &  -& -&\textbf{33.44}  & \textbf{228.1} \\
        \midrule
        \multirow{4}{*}{\centering$\mathtt{WMT14}$} & 25 & \textbf{25.25} & 237.3 &25.01 &  \textbf{90.7} & \textbf{26.03} & 230.9 & 25.98& \textbf{90.5}\\
        
          \multirow{4.5}{*}{ (3k) } & 50 & \textbf{25.75} & 466.1 & 25.33& \textbf{138.4} & 26.14 & 500.2 & \textbf{26.37} & \textbf{138.3} \\
        
        & 1000 & 25.66 & 8996.7 & \textbf{25.71} & \textbf{265.4} & 25.82 & 8991.7 &  \textbf{26.88} & \textbf{265.5}\\

        & $\infty $  & - &- & \textbf{24.79} & \textbf{307.5} & - &- & \textbf{26.39} & \textbf{307.3} \\
        \midrule
        \multirow{4}{*}{\centering$\mathtt{WMT16}$} & 25 &\textbf{32.29} &145.2 & 31.97& \textbf{36.4} & \textbf{33.12} & 143.5 & 32.94 & \textbf{36.4}\\
        
          \multirow{4.5}{*}{ (2k) } & 50 & \textbf{32.53} & 286.1&32.50 & \textbf{63.2} & \textbf{33.41} & 312.4 & 33.26 & \textbf{62.7}\\
        
        & 1000 & 32.63 & 5588.9& \textbf{32.86}& \textbf{171.4}& 33.67 & 5601.0 & \textbf{33.79} & \textbf{171.2}\\

        & $\infty $ &-  &- & \textbf{32.91} & \textbf{196.4} &-  &- & \textbf{33.86} & \textbf{196.3} \\
    \bottomrule
    \end{tabular}
    }
    \label{tab:real}
\end{table*}

\begin{table*}[ht!]
 \caption{BLEU score comparison of absorbing diffusion on machine translation benchmarks $\mathtt{IWSLT14\ DE}$-$\mathtt{EN}$, $\mathtt{WMT14\ EN}$-$\mathtt{DE}$, and $\mathtt{WMT16\ EN}$-$\mathtt{RO}$. Below the dataset, we present the amount of data used to run the evaluation (sentences). The blue background highlights our algorithms, and the bold number indicates the best performance within each row and each setting (i.e., with or without top-k).}
    \centering

\resizebox{\columnwidth}{!}{
    \begin{tabular}{c|c|c|c|g|g|c|c|g|g}
    \toprule 
        \multirow{2}{*}{\textbf{Dataset}} & \multirow{2}{*}{\textbf{Steps}} & \multicolumn{2}{c|}{\textbf{RDM-Absorb}} & \multicolumn{2}{c|}{\textbf{DNDM-Absorb}} & 
        \multicolumn{2}{c|}{\textbf{RDM-$k$-Absorb}} & \multicolumn{2}{c}{\textbf{DNDM-$k$-Absorb}} \\
        \cmidrule{3-10}
         &  & \textbf{BLEU} & \textbf{Time (s)} & \cellcolor{white}\textbf{BLEU} & \cellcolor{white}\textbf{Time (s)} & \textbf{BLEU} & \textbf{Time(s)} & \cellcolor{white}\textbf{BLEU} & \cellcolor{white}\textbf{Time (s)} \\
        \midrule
        \multirow{4}{*}{\centering$\mathtt{IWSLT14}$} & 25 & 31.58 & 116.3 & \textbf{32.43} & \textbf{67.2} & \textbf{34.50} & 108.9 & 34.14 & \textbf{67.3} \\
        
          \multirow{4.5}{*}{ (6.75k) } & 50 & 31.80 & 227.2 & \textbf{32.63} & \textbf{95.9}  & \textbf{34.58} & 213.9 & 34.34 & \textbf{96.2}  \\
        
        & 1000 & 31.91 & 4197.4 & \textbf{32.93} & \textbf{161.1}  & \textbf{34.60} & 4205.9 & 34.56 & \textbf{162.3}  \\

        & $\infty $ & - & - &  \textbf{33.03} & \textbf{174.6}  & - & - & \textbf{34.65} & \textbf{180.7}  \\
        \midrule
        \multirow{4}{*}{\centering$\mathtt{WMT14}$} & 25 & 24.97 & 116.4  & \textbf{25.79} & \textbf{68.1} & \textbf{27.50} & 107.5  & 27.18 & \textbf{68.0} \\
        
          \multirow{4.5}{*}{ (3k) } & 50 & 24.95 & 231.1 & \textbf{26.10} & \textbf{102.0}  & \textbf{27.73} & 255.2 & 27.66 & \textbf{102.5}  \\
        
        & 1000 & 25.22 & 4169.4 & \textbf{26.43} & \textbf{178.3}  & 27.75 & 4167.4 & \textbf{27.82} & \textbf{179.1}  \\

        & $\infty $ & -  & - &  \textbf{26.50} & \textbf{180.1}  & - & - & \textbf{27.50} & \textbf{181.2}  \\
        \midrule
        \multirow{4}{*}{\centering$\mathtt{WMT16}$} & 25 & 32.86 & 75.5 & \textbf{33.20} & \textbf{41.2} & 33.92 & 69.9  & \textbf{33.96} & \textbf{41.4} \\
        
           \multirow{4.5}{*}{ (2k) }& 50 & 32.93 & 148.4  & \textbf{33.30} &   \textbf{62.5}  & 34.10 & 166.1 & \textbf{34.20} & \textbf{62.7} \\
        
        & 1000 & 33.25 & 2951.7 & \textbf{33.60} & \textbf{121.3} & \textbf{34.44} & 2718.7 & 34.38 & \textbf{122.7}  \\

        & $\infty $ & - & - &  \textbf{33.42} & \textbf{121.8}  &  - & - & \textbf{34.41} & \textbf{121.9}  \\
    \bottomrule
    \end{tabular}
    }
    \label{tab:top}
\end{table*}
\noindent\textbf{Scaling Law in Sampling Speed.}
For illustrative purposes, we use the example of $\mathtt{IWSLT14}$ to visualize how the sample quality scales regarding sampling speed for different methods. In Figure~\ref{fig:bleuvstime}, we observe the trend of the BLEU score in relation to computational time. Each line in the legend represents a different sampling algorithm, and a steeper slope indicates a larger marginal gain when sampling for longer periods. Figure~\ref{fig:bleuvstime} demonstrates that our algorithm displays nearly linear growth in BLEU score over the log of time, which is remarkable in contrast with the flat curve of the baseline.
Particularly, for multinomial diffusion, the BLEU score increases by 1 in less than 60 seconds of additional sampling time. For absorbing diffusion, \method{} outperforms RDM before RDM samples 50 steps. 
\blue{In Tables~\ref{tab:real2} and~\ref{tab:top2} in Appendix~\ref{sec:nfe}, we further use the average number of function evaluations (NFE) to measure the improved speed within the specified number of sampling steps.}
Additionally, in Figure~\ref{fig:generation}, we visualize how the BLEU score and the generated text change throughout the sampling process.

\begin{figure}[t]
  \centering
  \begin{subfigure}{0.48\textwidth}
    \includegraphics[width=\linewidth]{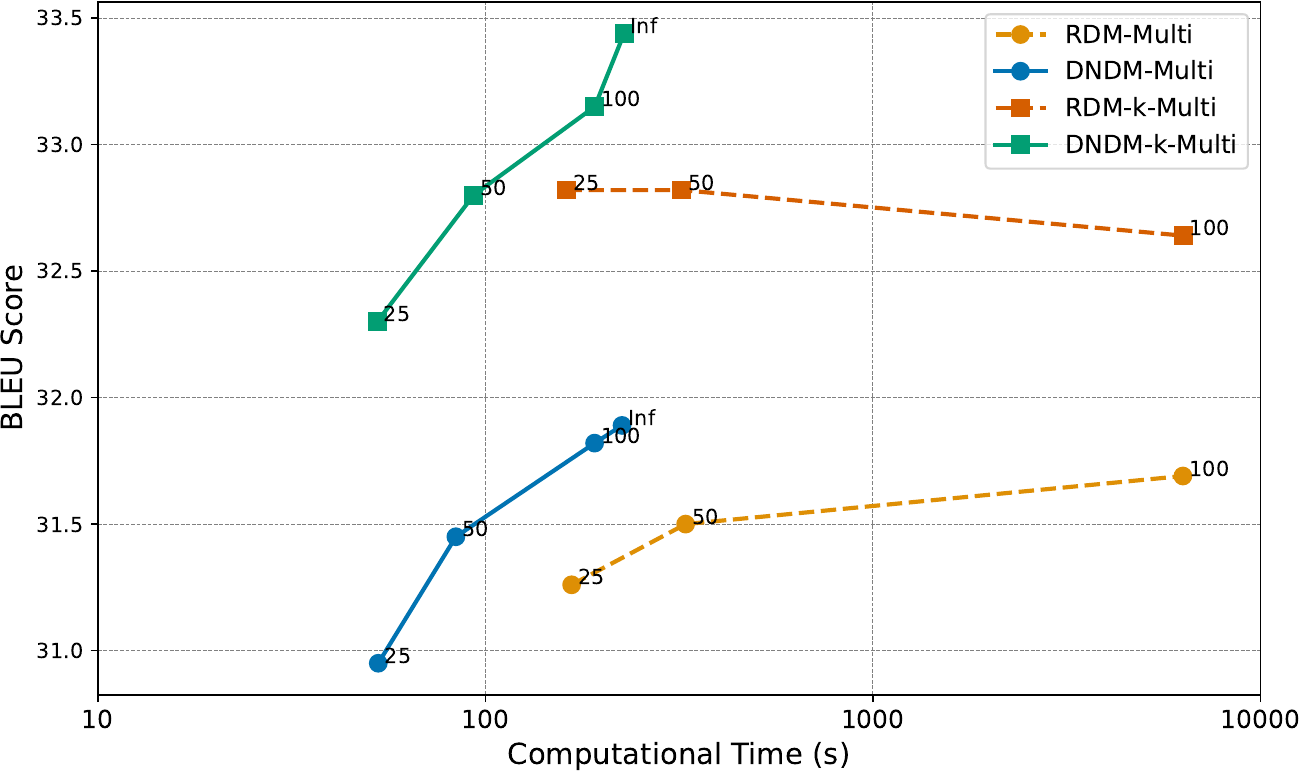}
    \caption{Multinomial Diffusion}
    \label{fig:multi-bleu}
  \end{subfigure}
    \begin{subfigure}{0.48\textwidth}
    \includegraphics[width=\linewidth]{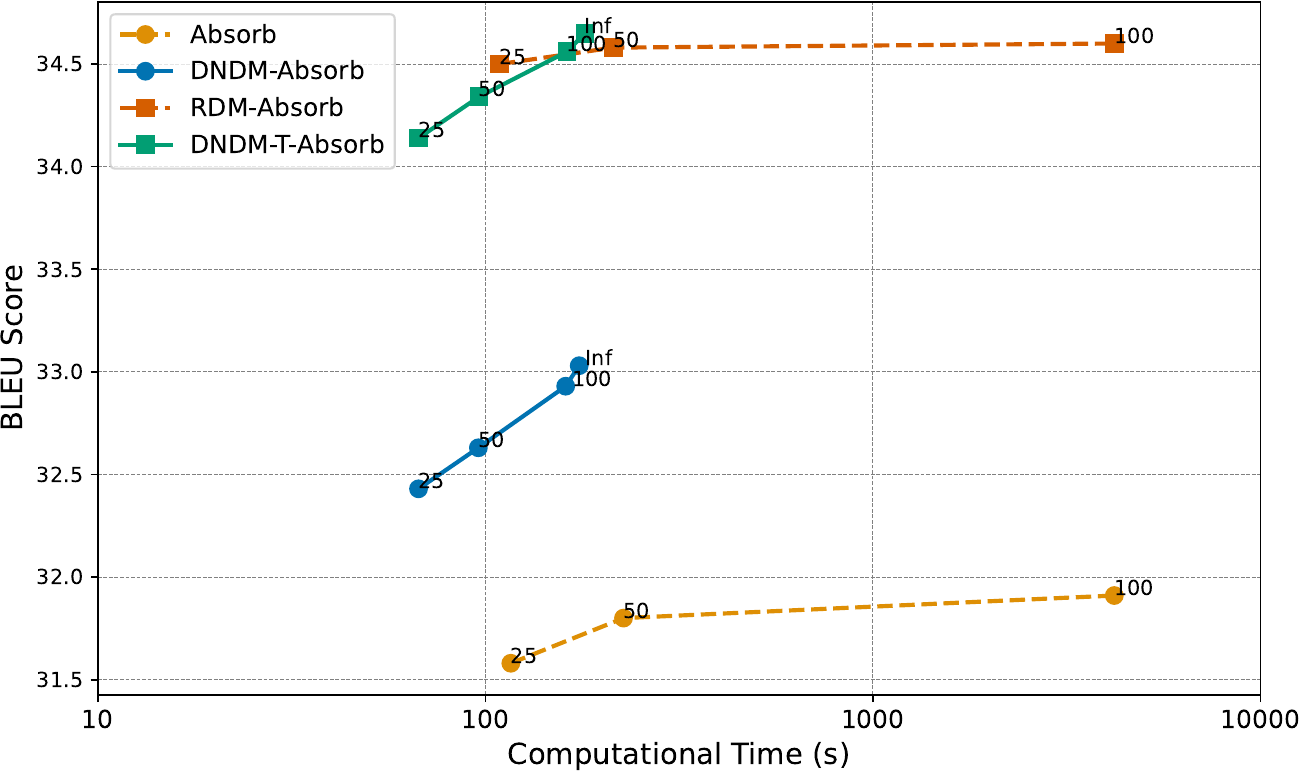}
    \caption{Absorbing Diffusion}
    \label{fig:absorb-bleu}
  \end{subfigure}
  \caption{Generation quality to generation time comparison on $\mathtt{IWSLT14}$. $x$-axis: computational time in seconds; $y$-axis: BLEU score. }
  \label{fig:bleuvstime}
\end{figure}

\begin{figure}[t]
  \centering
  \begin{subfigure}{0.46\textwidth}
    \includegraphics[width=\linewidth]{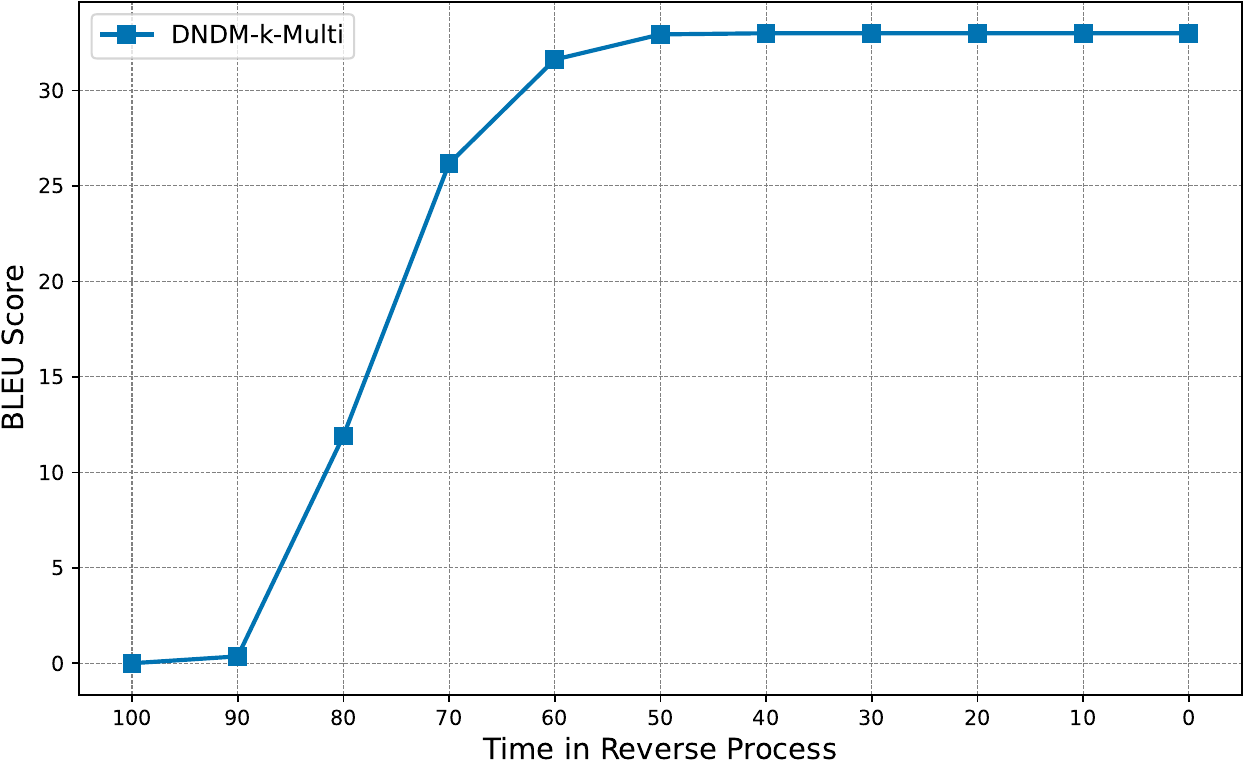}
    \caption{The BLEU Score in the Generation Process}
    \label{fig:bleu}
  \end{subfigure}
  \hspace{0.05\textwidth} 
  \blue{
  \begin{subfigure}{0.46\textwidth}
    \begin{minipage}{\linewidth}
      \small{\textbf{t  =  100}  ~ \textbf{[noise] [noise] [noise] [noise] $\cdots$}}  \\[3pt]
      \small{\textbf{t = 75} \textbf{[noise]} $\cdots$ \textbf{[noise]} \texttt{and we} \textbf{[noise]} $\cdots$ \textbf{[noise]} \texttt{govern}\textbf{[noise] [noise]} \texttt{year} \textbf{[noise]}  } \\[3pt]
      \small{\textbf{t = 67}  \texttt{we} \textbf{[noise] [noise]} \texttt{fello} \textbf{[noise] [noise] [noise]} and we let them \textbf{[noise] [noise]} \texttt{city govern}\textbf{[noise]} \texttt{every year.}}\\[3pt]
      \small{\textbf{t = 39} ~\texttt{we choose some fellows every year and we let them work with city governance every year.}}\\[3pt]
      \small{\textbf{t = 0} ~\texttt{we choose some fellows every year and we let them work with city governance every year.}}
    \end{minipage}
    \caption{Text in the Generation Process}
    \label{fig:text}
  \end{subfigure}
  }
  \caption{We demonstrate the 100-step generation process of  \method{}-$k$-Multi as an example, where the left is the change of the BLEU score along the generation process, and the right is the text at different time steps. As the time goes from 100 to 0, noise is gradually removed until the corresponding English text emerges. Since the transition time follows a Beta distribution as described in Section~\ref{sec:fast}, the majority of transitions occur near the starting time.}
  \label{fig:generation}
\end{figure}

\subsection{Unconditional Text Generation}
For unconditional text generation, we evaluate our approach on language modeling tasks, where the model learns to generate text that matches the statistical patterns of the training data. Unlike conditional generation, this task involves directly learning $q(\xb_0|\xb_t)$ without conditioning on any input text. We conduct experiments on the text8 and enwik8 datasets using a decoder-only architecture similar to GPT models. Since unconditional generation does not require encoding input sequences, we employ a 12-layer Transformer decoder without an encoder component.

\noindent\textbf{Datasets.} The natural language generation task is evaluated on two language datasets following \cite{hoogeboom2021argmax}: $\mathtt{text8}$ and $\mathtt{enwik8}$. Both datasets are from Wikipedia, but their contents are highly distinct. In $\mathtt{text8}$, the plain text consists of English words (all the letters are in lower case) and spaces, and it is tokenized into 26 characters and one blank space, resulting in 27 categories. In contrast to the cleanness of $\mathtt{text8}$, $\mathtt{enwik8}$ preserves the original XML dump contents, and there exist various special symbols in its raw text, so its text is tokenized into 1 Byte, resulting in 256 categories. We utilize $\mathtt{text8}$ dataset with sequence length 256 and $\mathtt{enwik8}$ dataset with sequence length 320. The train/val/test splits are $
9\text{e}7/5\text{e}6/5\text{e}5$ for both $\mathtt{text8}$ and $\mathtt{enwik8}$.

\noindent\textbf{Performance Metrics.} Our evaluation of text generation quality relies on the perplexity score. When generating $\mathtt{text8}$ data, we calculate perplexity scores using the GPT2 model, while for enwik8 data generation, we employ the \blue{GPT2-large} model. The sampling speed is measured in seconds.

\noindent\textbf{Baselines.} We compare our proposed \method{} on unconditional text generation task with the vanilla Multinomial Diffusion \citep{hoogeboom2021argmax}.
\begin{wraptable}{r}{0.55\textwidth}
 \vspace{-10pt} 
\centering
\caption{Comparison of different sampling methods for unconditional text generation (multinomial diffusion) on $\mathtt{text8}$ and $\mathtt{enwik8}$ benchmarks. Sampling time is computed by generating a single text sample of length 256 for $\mathtt{text8}$ and length 320 for $\mathtt{enwik8}$, averaged over 10 runs. The blue background represents our algorithms, and the bold number indicates the optimal value.}
\begin{tabular}{c|c|c|g}
\toprule
\multicolumn{2}{c|}{} & Vanilla & \cellcolor{white}DNDM \\ \midrule
\multirow{2}{*}{$\mathtt{text8}$} & Perplexity & 1,465.75 & \textbf{600.02} \\
& Time (s) & 135.9 & \textbf{31.1} \\ \midrule
\multirow{2}{*}{$\mathtt{enwik8}$} & Perplexity & 801.78 & \textbf{556.78} \\
& Time (s) & 602.8 & \textbf{47.4} \\
\bottomrule
\end{tabular}
\label{tab:multinomial}
 \vspace{0pt} 
\end{wraptable}
\noindent\textbf{Results and Discussion.}   Table~\ref{tab:multinomial} displays the performance of our algorithms in text generation tasks. We run the multinomial diffusion model on the $\mathtt{text8}$ dataset for 1000 diffusion steps and on the $\mathtt{enwik8}$ dataset for 4000 diffusion steps. Our \method{}-based algorithms outperform the vanilla sampling algorithm used in \cite{hoogeboom2021argmax}
in terms of both sampling time and perplexity score. Specifically, for the $\mathtt{text8}$ dataset, \method{}-based algorithms are $5$ times faster than the vanilla algorithm. For the $\mathtt{enwik8}$ dataset, \method{}-based algorithms are 14 times faster than the vanilla algorithm.


\section{Conclusion and Future Work}

This paper presents a novel discrete non-Markov diffusion model (\method{}) accompanied by an accelerated sampling algorithm designed to boost sampling speed in a discrete-state space. Our discrete diffusion model incorporates "transition time set" latent variables, establishing itself as an efficacious diffusion and data generation method. Thanks to our acceleration technique, we significantly decrease the number of neural network function evaluations without sacrificing sample quality. We also introduce an infinite-step sampling algorithm, \method{}-C, which provides new insights into bridging the gap between discrete and continuous-time processes for discrete diffusion models. While this study focuses on text generation using non-autoregressive models, a promising direction for future exploration is applying our method to other tasks, such as audio and image generation.

\section*{Acknowledgement}
We thank the anonymous reviewers and area chair for their helpful comments. ZC, HY, YL, YK, JZ, and QG are supported in part by the National Science Foundation CAREER Award 1906169, IIS-2008981, and the Sloan Research Fellowship. The views and conclusions contained in this paper are those of the authors and should not be interpreted as representing any funding agencies.

\bibliography{Diffusion}
\bibliographystyle{ims}

\newpage
\appendix

\section*{Broader Impact} 

This paper presents work that aims to advance the field of diffusion models. We believe this work may enable future applications of synthetic data generation, which may lead to positive impacts. Our experiments demonstrate that the proposed method achieves state-of-the-art performance in the acceleration of the generative model. However, proper controls may be needed whenever applying our method to tasks that involve sensitive data data. There may be other potential societal consequences of our work, none of which we feel must be specifically highlighted here.

\section*{Limitations}

\begin{itemize}[leftmargin=*]

\item The scope of the empirical claims is limited to the text domain with non-auto regressive setting. The applicability and performance of \method{} for other tasks like audio and image generation, as well as with other architectures like auto-regressive GPT models, are not explored and left as future work.

\item While \method{}-C, the infinite-step sampling algorithm, offers new insights into bridging the gap between discrete and continuous-time processes for discrete diffusion models, the sample quality is not guaranteed to be superior to the accelerated algorithm with $1000$ steps. Some intuitions here: the assumption that the neural network can be optimally trained is an ideal case and is often not realized in practice. There is an inherent estimation error associated with the training process. As the number of steps increases, these estimation errors can accumulate, potentially leading to a degradation in performance. This cumulative estimation error might explain why using an infinite number of steps does not necessarily yield better results than a finite number of steps like 1000 in the conditional generation experiments. How to further improve sample quality of infinite steps is interesting but beyond the scope of this paper.

\item This paper focuses on the comparison with discrete Markov diffusion models since it aims to propose an accelerated algorithm for discrete diffusion with DNDM. Other text generation models, such as continuous diffusion models or auto-regressive models, are not considered in this paper.

\item This paper focuses on acceleration while maintaining good sample quality. The hyper parameter regions with poor sample qualities are not explored in this paper.

\end{itemize}

By highlighting these limitations, this paper aims to clearly scope its contributions and spark future work on addressing these important challenges with discrete diffusion models for generative modeling.

\section{Related Work}
\noindent\textbf{Continous Diffusion Models.} Generative modeling via continuous-time stochastic process has been investigated thoroughly in a series of work \citep{movellan2008contrastive, lyu2012interpretation, sohl2009minimum, bengio2014deep, alain2016gsns, alias2017variational, bordes2017learning}. The two lines of probabilistic modeling, \textit{denoising diffusion probabilistic model} \citep{sohl2015deep, ho2020denoising} and \textit{score matching with Langevin dynamics} \citep{song2019generative} are unified by \citet{song2020score} through introducing the SDE framework for SGM. Based on it, subsequent works \citep{dockhorn2021score, nachmani2021non, vahdat2021score} introduced a more complex diffusion process to improve the generation speed and quality. On the other hand, the score-based sampling process is time-consuming and has attracted much attention for improvements in speed \citep{san2021noise, watson2021learning, kong2021fast, karras2022elucidating, song2023consistency}. ``Gotta go fast'' (GGF), an SDE solver with adaptive step size tailored to SGM, is proposed in \citet{jolicoeur2021gotta}. \citet{song2020denoising} introduced a non-Markov diffusion process that corresponds to a deterministic sampling process, enabling the generation of high-quality samples more rapidly. \citet{dockhorn2022genie, liu2022pseudo} proposed a high-order SDE/ODE solver to achieve lower discretization error. \citet{lu2022dpm, zhang2022fast} leveraged the semi-linear structure of reverse ODE to reduce the discretization error and achieve state-of-the-art sampling speed. 

\noindent\textbf{Discrete Diffusion Models.} Research on discrete diffusion models was initiated by \citet{sohl2015deep}, who investigated diffusion processes over binary random variables. The methodology was expanded upon by \citet{ho2020denoising}, integrating categorical random variables through transition matrices with uniform probabilities. Though \citet{song2020denoising} suggested a similar extension in their supplementary content, they abstained from experimenting with this model type. Later on, \citet{austin2021structured} unveiled a more intricate framework for diffusion concerning categorical random variables, enhancing the discrete diffusion models by merging them with Masked language models (MLMs). Contemporary research has furthered this domain by introducing features like editing-based operations \citep{jolicoeur2021gotta, reid2022diffuser}, auto-regressive diffusion models \citep{hoogeboom2021autoregressive, ye2023diffusion}, the evolution of a continuous-time structure \citep{campbell2022continuous}, and the exploration of neural network analogs for learning \citep{sun2022score}. Additionally, \citet{zheng2023reparameterized} introduced a re-parameterized loss and an associated sampling technique, attaining commendable outcomes in fewer iterations. Our contributions run parallel to these aforementioned studies.

\section{Additional details of Discrete Diffusion}\label{sec:morebackground}
In our paper, we treat all the $\xb, \qb_{\mathrm{noise}}$ as a row vector and treat $\ind$ as a column vector with all elements equal $1$.

\subsection{Comparison between D3PM and DNDM}\label{app:ddim}
In Section~\ref{sec: De}, we introduced two different diffusion processes, the Markov process in \eqref{eq:1} and the non-Markov process in \eqref{eq:2}. In this section, we explain why they are different but result in the same joint distribution of $(\xb_0, \xb_t)$ for every time step $t$. Since $\qb(\xb_0)$ keeps the same, we only need to prove that the conditional distribution $\qb(\xb_{t}|\xb_{0})$ is the same for the two processes.

\noindent\textbf{Markov Process.} \ref {eq:1} is a Markov process since $\wb_n$ is independent with $\xb_{t-1}, \ldots, \xb_0$, so $\xb_{t}$ is independent of all the past states given the present state. This can also be inferred from the following distribution, which does not depend on $\xb_0, \ldots, \xb_{t-2}$,
\begin{align}
q(\xb_{t}|\xb_{t-1}) = \mathrm{Cat}\big(\xb_{t}; \pb = \beta_{t}\xb_{t-1} + (1- \beta_{t})\qb_{\mathrm{noise}}\big) \label{simpleQ}. 
\end{align}
Denote $\Qb_{t}:= \beta_{t}\Ib + (1-\beta_t)\ind \qb_{\mathrm{noise}}$, then we have that 
\begin{align*}
\xb_{t-1}\Qb_t  = \beta_{t}\xb_{t-1} + (1-\beta_t)\xb_{t-1}\ind \qb_{\mathrm{noise}} = \beta_{t}\xb_{t-1} + (1- \beta_{t})\qb_{\mathrm{noise}},
\end{align*}
where the last equality holds due to the fact that $\xb_{t-1}$ is a one hot vector and thus $\xb_{t-1}\ind = 1$. Therefore, we can rewrite
\eqref{simpleQ} as $q(\xb_{t}|\xb_{t-1}) =  \mathrm{Cat}\big(\xb_{t}; \pb = \xb_{t-1}\Qb_t  \big)$. Then, it is a Markov process with transition kernel $\Qb_t$. So $q(\xb_{t}|\xb_{0}) = \mathrm{Cat}\big(\xb_{t}; \pb = \xb_{0}\Qb_{0}\ldots\Qb_{t} \big)$ \citep{austin2021structured}. We can then have that
\begin{align*}
\Qb_{0}\ldots\Qb_{t} &= [  \beta_{0}\Ib + (1-\beta_0)\ind \qb_{\mathrm{noise}}]\ldots [  \beta_{t}\Ib + (1-\beta_t)\ind \qb_{\mathrm{noise}}]\\
&= \Pi_{s=0}^{t}\beta_{s}\Ib  + (1- \Pi_{s=0}^{t}\beta_{s})\ind \qb_{\mathrm{noise}},
\end{align*}
where the last equality holds since identity matrix $\Ib$ multiplying any vector equals the vector itself and $\ind \qb_{\mathrm{noise}}\ind \qb_{\mathrm{noise}} = \ind (\qb_{\mathrm{noise}}\ind )\qb_{\mathrm{noise}} = \ind \qb_{\mathrm{noise}}$. Therefore, we have that 
\begin{align*}
q(\xb_{t}|\xb_{0}) = \mathrm{Cat}\big(\xb_t; \pb = \Pi_{s=0}^{t}\beta_{s}\xb_{0} + (1- \Pi_{s=0}^{t}\beta_{s})\qb_{\mathrm{noise}}\big)  = \mathrm{Cat}\big(\xb_t; \pb = \alpha_{t}\xb_{0} + (1- \alpha_{t})\qb_{\mathrm{noise}}\big),
\end{align*}
where the last equality holds due to the definition $\alpha_{t} = \Pi_{s=0}^{t}\beta_{s}$. This gives rise to why the Markov process \eqref{eq:1} results in conditional distribution $q(\xb_{t}|\xb_{0}) = \mathrm{Cat}\big(\xb_t; \pb = \alpha_{t}\xb_{0} + (1- \alpha_{t})\qb_{\mathrm{noise}}\big)$. 

\noindent\textbf{Non-Markov Process.} Recall that our DNDM is defined by 
\begin{align*}
\xb_{t} = b_{t}\xb_{t-1} + (1-b_{t})\wb,
\end{align*}
where $\wb$ is fixed for any time $t$. Therefore, $\wb$ is no longer independent with $\xb_{0}, \ldots, \xb_{t-1}$. Therefore, we can't define the transition kernel and compute $\qb(\xb_{t}|\xb_0)$ by using the property of Markov. Therefore, we need to advance the technique to calculate the conditional distribution.

\begin{proof}[Proof of Theorem~\ref{thm:same distribution}]
By \eqref{eq:2}, we can derive the following explicit expression for a recursive sequence,
\begin{align*}
\xb_{t} &= b_{1}\ldots b_{t}\xb_{0, n} + \sum_{s=1}^{t}(1-b_{s})b_{s+1}\ldots b_{t}\wb\\
&= b_{1}\ldots b_{t}\xb_{0} + (1- b_{1}\ldots b_{t} )\wb\\
&= a_{t}\xb_{0} +  (1- a_{t}) \wb,
\end{align*}
where second equality is by cancellation of terms, the last inequality holds by defining $a_{t} = b_{1}\ldots b_t$. Since $a_t$ either equals to $1$ or $0$. Besides, $a_{t}$ equals $1$ if and only if $b_{1}=b_{2}=\ldots=b_{t}=1$, so we have that $a_{t}$ follows Bernoulli distribution $\mathrm{Bernoulli }(\beta_{1}\ldots \beta_{t}) = \mathrm{Bernoulli }(\alpha_t)$ where $\alpha_t = \Pi_{i=1}^{t}\beta_s$. Therefore, we can conclude that $\qb(\xb_{t}|\xb_0) = \mathrm{Cat}\big(\xb_t; \pb = \alpha_{t}\xb_{0} + (1- \alpha_{t})\qb_{\mathrm{noise}}\big)$, which completes the proof.
\end{proof}

\noindent\textbf{Comparison between D3PM-Absorb and DNDM.} Recall the forward processes of D3PM and DNDM as follows:
\begin{align*}
\mathrm{D3PM:} \quad \xb_{t} &= b_{t}\xb_{t-1} + (1-b_t)\wb_t, \quad \forall t = 1 \dots T,   \\
\mathrm{DNDM:} \quad \xb_{t} &= b_{t}\xb_{t-1} + (1-b_t)\wb, \quad \forall t = 1 \dots T.
\end{align*}
For absorbing diffusion where $\wb = \mathrm{[Mask]}$, DNDM's forward process becomes equivalent to D3PM since $\wb_t = \wb = \mathrm{[Mask]}$ in this special case. However, for multinomial diffusion or other diffusion processes where $\wb_t \neq \wb$, these two processes exhibit different behaviors. In addition, even for absorbing diffusion, our proposed reverse sampling algorithm for DNDM  is still different from that for D3PM. 

To elucidate the key differences between the sampling algorithm in DNDM and that in D3PM for absorbing diffusion, let's directly compare the algorithms:
\begin{itemize}[leftmargin=*]
\item For the D3PM-Absorb algorithm: We begin with an all $\mathrm{[Mask]}$ sequence. At each time step $t$, we sample $\xb_0 \sim p_{\theta}(\xb_0|\xb_t)$. If $\xb_t=\mathrm{[Mask]}$, $\xb_{t-1}$ transitions to $\mathrm{[Mask]}$ with probability $(1-\alpha_{t-1})/(1-\alpha_t)$ and to $\xb_0$ with probability $(\alpha_{t-1} - \alpha_t)/(1-\alpha_t)$. If $x_{t}\not= \mathrm{[Mask]}$, it remains unchanged. 
\item For the DNDM-Absorb algorithm: We also start with an all $\mathrm{[Mask]}$ sequence, but crucially, we first determine the transition time set. During sampling, if $\xb_t=\mathrm{[Mask]}$, the transition probabilities for $\xb_{t-1}$ are identical to D3PM. However, we only sample $\xb_0 \sim p_{\theta}(\xb_0|\xb_t)$ when at least one token needs to change, as determined by our pre-computed transition set. This selective sampling is the key to our algorithm's efficiency.
\end{itemize}

Therefore, you can see that DNDM will skip many steps during the sampling process to avoid function evaluation and save computational cost. Even though the forward process of DNDM is the same as that of D3PM for absorbing diffusion, our DNDM approach introduces an algorithm design in the sampling process by pre-computing the transition time set and selectively applying function evaluations. This distinguishes DNDM from D3PM algorithm, offering a more computationally efficient approach to inference in discrete diffusion.

\noindent\textbf{Comparison between DDIM and DNDM for Multinomial Diffusion.}
While there are similarities between DNDM and DDIM (Appendix A), they are fundamentally different models, and DNDM is not a special case of DDIM. DNDM introduces a novel framework specifically designed for discrete spaces, while DDIM was originally developed for continuous diffusion models. The key differences for multinomial diffusion are as follows.

\begin{itemize}[leftmargin=*]
\item DDIM: Following \citet{song2020denoising} (eq. 19 in Appendix A), $q(\xb_{t-1}|\xb_t,\xb_0)=\text{Cat}(\sigma_t \xb_t + (\alpha_{t-1}-\sigma_t \alpha_t)\xb_0+((1-\alpha_{t-1})-(1-\alpha_t)\sigma_t)\mathbf{1}_K)$. Even with $\sigma_t = \frac{1-\alpha_{t-1}}{1-\alpha_t}$, the process remains stochastic: $q(\xb_{t-1}|\xb_t,\xb_0)=\text{Cat}(\sigma_t \xb_t + (1-\sigma_t)\xb_0)$. This means at every step, there's a probability of choosing $\xb_0$, regardless of whether it has transitioned to $\xb_0$ or not. Unlike Absorbing discrete diffusion, no $\mathrm{[Mask]}$ exists in multinomial diffusion. Therefore, DDIM cannot distinguish whether $\xb_t$ already equals $\xb_0$ or not.  In particular, although the sampling process becomes less stochastic in the DDIM setting, it will still be predicted $\xb_0$ with high probability $1-\sigma_t = \frac{\alpha_{t-1}- \alpha_t}{1-\alpha_t}$.  
\item  DNDM: Achieves full de-randomization using transition time $\tau$, where:
\begin{align}
\xb_{t-1} = \ind(\tau = t)\xb_0 + \ind(\tau \not= t)\xb_{t}, \quad \text{with } P(\tau = t) = \alpha_{t-1} - \alpha_t.
\end{align} 
This crucial difference allows DNDM to achieve full de-randomization once $\tau$ is sampled, leading to a deterministic evolution that DDIM cannot achieve.
\end{itemize}
While DNDM and DDIM are both non-Markov models for multinomial diffusion, their fundamental approaches to and achievements in de-randomization differ significantly in discrete spaces.

\subsection{Training Objective}
\citet{hoogeboom2021argmax} utilized $L_{t}$  derived from the negative variational bound. In detail, 
\begin{align}
L_{t} = \mathrm{KL}\big(\mathrm{Cat}(\xb; \pb = \btheta_{\mathrm{post}}(\xb_t, \xb_0)\big|\mathrm{Cat}(\xb; \pb =\btheta_{\mathrm{post}}(\xb_t, \hat{\xb}_0)\big),   \label{eq:complex}
\end{align}
where $\hat{\xb}_{0} \sim p_{\btheta}(\cdot| \xb_t)$, $\btheta_{\mathrm{post}} = (\beta_{t}\xb_t + (1-\beta_{t})/K \ind^{\top})\odot (\alpha_{t-1}\xb_0 + (1-\alpha_{t-1})/K \ind^{\top})$ and $\btheta_{\mathrm{post}} = (\beta_{t}\xb_t + (1-\beta_{t})/K \ind^{\top})\odot (\alpha_{t-1}\hat{\xb}_0 + (1-\alpha_{t-1})/K \ind^{\top})$. This loss evolves KL divergence between two categorical distributions. 

Building on this foundation, \citet{austin2021structured} introduced an auxiliary denoising objective to strengthen the data predictions $\xb_0$ at each time step. In detail, the auxiliary objective is as follows,
\begin{align*}
\EE_{q(\xb_{t}, \xb_0)}\Big[-\log p_{\btheta}(\xb_0|\xb_t)\Big],    
\end{align*}
where the auxiliary loss term is minimized exactly when $p_{\theta}( \cdot |\xb_{t})$ has all its mass
on the data point $\xb_0$.

Furthering the advancements, \citet{zheng2023reparameterized} put forth a reparametrized loss $L_{t}$  that incorporates a re-weighted parameter $\lambda_t$. The detailed loss is 
\begin{align*}
\overline{L}_{t} = \lambda_{t-1}\EE_{\xb_{t-1},\xb_{t}\sim q(\cdot|\xb_0)}  \text{KL}( q(\xb_{t-1} | \xb_{t}, \xb_{0})|p_\theta^{(t)}(\xb_{t-1} | \xb_{t})). 
\end{align*}
This loss can be related to the standard multi-class cross-entropy loss function, which is also simple and powerful. That's why we consider \citet{zheng2023reparameterized} as the baseline model.

In Section~\ref{sec:continuous}, we consider the continuous-time forward and backward process. Based on that, we were motivated to analyze the infinite limit of the average loss $\lim_{t \rightarrow \infty} \frac{1}{T}\sum_{t=1}^{T}L_{t}$. We find that the new loss can provide a better checkpoint than the loss averaged on the finite step on some tasks. 
{\color{black}
\subsection{Calculation of the Evidence Lower Bound}\label{subsubsec: finite time ELBO}
\subsubsection{Finite Time DNDM}
In this section, we derive the evidence lower bound (ELBO) for our model. The derivatives are inspired by the reasoning in DDIM \citep{song2020denoising}. 
Specifically, We denote the generative process as $p_\theta(\xb_{0:T}|\tau) = p_\theta^{(T)}(\xb_T|\tau) \prod_{t=1}^T p_\theta^{(t)}(\xb_{t-1}|\xb_{t},\tau)$. Here, $p_\theta^{(T)}$ is the pure noise and $p_\theta^{(t)}(\xb_{t-1}|\xb_t,\tau)=q(\xb_{t-1}|\xb_t,\hat{\xb}_0,\tau)$, where $\hat{\xb}_0$ is given by a neural network $p_{\btheta}$, i.e., $\hat{\xb}_0=p_{\btheta}(\xb_t,t)$. Notice that by Jensen's inequality, 
\begin{align}
\log p_\theta(\xb_0) = \log\EE_{\tau \sim \cD_{\tau}}[p_\theta(\xb_0 | \tau)] \geq \EE_{\tau \sim \cD_{\tau}}[\log p_\theta(\xb_0 | \tau)]. \label{eq:ELBO1} 
\end{align}

The evidence lower bound inequality gives
\begin{align}
\log p_\theta(\xb_0 | \tau) \ge \EE_{\xb_{1:T} \sim q(\xb_{1:T}|\xb_0,\tau)} \log \frac{p_{\theta}(\xb_{0:T}|\tau)}{q(\xb_{1:T}|\xb_0,\tau)}. \label{eq:ELBO2} 
\end{align}
Plugging \eqref{eq:ELBO2} into \eqref{eq:ELBO1} gives the following ELBO,
\begin{align*}
\log p_\theta(\xb_0) \ge \EE_{\tau \sim \cD_{\tau}}\EE_{\xb_{1:T} \sim q(\xb_{1:T}|\xb_0,\tau)} \log \frac{p_{\theta}(\xb_{0:T}|\tau)}{q(\xb_{1:T}|\xb_0,\tau)} := \text{ELBO}. 
\end{align*}

We factorize the $p_\theta$ and $q$ by
\begin{align*}
 p_\theta(\xb_{0:T}|\tau) & =  p_\theta^{(T)}(\xb_T | \tau) \prod_{t=1}^T p_\theta^{(t)}(\xb_{t-1} | \xb_{t}, \tau), \\
 q(\xb_{1:T}|\xb_0,\tau) & = q(\xb_T | \xb_0, \tau) \prod_{t=2}^T q(\xb_{t-1} | \xb_{t}, \xb_{0}, \tau).
\end{align*}
Here $q$ admits such a decomposition due to our definition of the diffusion process in \eqref{eq:2}, which introduce the following reverse process:
\begin{align*}
    \xb_{t-1} = \ind(\tau=t)\xb_{0} + \ind(\tau\not= t)\xb_{t}.
\end{align*}
Therefore, $\xb_{1:T}$ is Markovian when conditioned on $\xb_{0}$ and $\tau$. Based on the factorization, we have
\begin{align*}
     \text{ELBO} &= \EE_{\tau \sim \cD_{\tau}}\EE_{\xb_{1:T} \sim q(\xb_{1:T}|\xb_0,\tau)} \Big[ \log p_\theta^{(T)}(\xb_T | \tau)  + \sum_{t=1}^T \log p_\theta^{(t)}(\xb_{t-1} | \xb_{t}, \tau) 
     \\
     & \qquad  - \log q(\xb_T | \xb_0, \tau) - \sum_{t=2}^T \log q(\xb_{t-1} | \xb_{t}, \xb_{0}, \tau) \Big] \\
     & = \EE_{\tau \sim \cD_{\tau}}\EE_{\xb_{1:T} \sim q(\xb_{1:T}|\xb_0,\tau)} \Big[ \log p_\theta^{(1)} (\xb_{0} | \xb_{1}, \tau)  + \sum_{t=2}^{T} \log \frac{p_\theta^{(t)}(\xb_{t-1} | \xb_{t}, \tau)}{q(\xb_{t-1} | \xb_{t}, \xb_{0}, \tau)} \\
     &\qquad   + \log \frac{p_\theta^{(T)}(\xb_T | \tau)}{q(\xb_T | \xb_0, \tau)}\Big] \\
     & = \EE_{\tau \sim \cD_{\tau}}\EE_{\xb_1 \sim q(\cdot|\xb_0,\tau)} \log p_\theta^{(1)} (\xb_{0} | \xb_{1}, \tau)  \\
     & \qquad + \sum_{t=2}^{T} \EE_{\xb_{t-1},\xb_{t}\sim q(\cdot|\xb_0,\tau)} \log \frac{p_\theta^{(t)}(\xb_{t-1} | \xb_{t}, \tau)}{q(\xb_{t-1} | \xb_{t}, \xb_{0}, \tau)} + \text{const} \\
     & = \EE_{\tau \sim \cD_{\tau}} \underbrace{\EE_{\xb_1 \sim q(\cdot|\xb_0,\tau)} \log p_\theta^{(1)} (\xb_{0} | \xb_{1}, \tau)}_{\overline{\cL}_1}  \\
     & \qquad  - \sum_{t=2}^T\EE_{\tau \sim \cD_{\tau}} \underbrace{\EE_{\xb_{t-1},\xb_{t}\sim q(\cdot|\xb_0,\tau)} \text{KL}( q(\xb_{t-1} | \xb_{t}, \xb_{0}, \tau)|p_\theta^{(t)}(\xb_{t-1} | \xb_{t}, \tau))}_{\overline{\cL}_t} + \text{const}.
\end{align*}



By a slight abuse of notations we use $q(\xb_{t-1}|\xb_t,\xb_0),p_\theta^{(t)}(\xb_0|\xb_1)$ to indicate the distribution of the diffusion process defined in \citet{zheng2023reparameterized}, that is, the standard Markov discrete diffusion process.
In particular, we have
\begin{align*}
    & \overline{\cL}_1 = \left\{ \begin{array}{lc}
        \EE_{\xb_1 \sim q(\cdot|\xb_0)} \log p_\theta^{(1)} (\xb_{0} | \xb_{1}), & \tau = 1, \\
         \text{const}, & \tau \neq 1.
    \end{array} \right. \\
    & \overline{\cL}_t = \left\{ \begin{array}{lc}
       \EE_{\xb_{t-1},\xb_{t}\sim q(\cdot|\xb_0)} \text{KL}( q(\xb_{t-1} | \xb_{t}, \xb_{0})|p_\theta^{(t)}(\xb_{t-1} | \xb_{t})), &\tau = t, \\
         0, &\tau \neq t.
    \end{array} \right.
\end{align*}
Thus, we can obtain that
\begin{align*}
 \text{ELBO} = & \PP(\tau = 1)  \cdot \underbrace{\EE_{\xb_1 \sim q(\cdot|\xb_0)} \log p_\theta^{(1)} (\xb_{0} | \xb_{1})}_{\cL_1} \\
 & -  \sum_{t=2}^T \PP(\tau = t) \cdot \underbrace{\EE_{\xb_{t-1},\xb_{t}\sim q(\cdot|\xb_0)}  \text{KL}( q(\xb_{t-1} | \xb_{t}, \xb_{0})|p_\theta^{(t)}(\xb_{t-1} | \xb_{t}))}_{\cL_t}+ \text{const}.
\end{align*}

Here $\cL_t$ matches the loss terms in \citet{zheng2023reparameterized}. In the practical training process, \citet{zheng2023reparameterized} samples $t$ from $\text{Unif}\{1,\cdots,T \}$ in each iteration and optimizes $\lambda_t \cdot \cL_t$, where $\lambda_t$'s are weights. Thus, when we sample $\tau$ and optimize $\cL_\tau$, our ELBO indeed leads to the same training objective as \citet{zheng2023reparameterized} up to reweighting. Since \citet{zheng2023reparameterized} is a parametrization of existing works \citep{austin2021structured,hoogeboom2021argmax}, our training objective indeed aligns with previous discrete diffusion models.
\subsubsection{Continous Time DNDM}
In Section~\ref{subsubsec: finite time ELBO}, we derived an ELBO for DNDM and its accelerated algorithm defined in Section~\ref{sec: De} and \ref{sec:fast}. While for finite sampling steps, we can decompose the diffusion process via the sampling steps $1, \ldots, T$ in \eqref{eq:ELBO2}, it becomes intractable for continuous Time DNDM (Infinite steps $T\rightarrow \infty$). Therefore, we can formulate the ELBO of continuous time DNDM by decomposing the transition times. The idea of decomposition of transition times follows \citet{hoogeboom2021autoregressive}, but their proof is only applicable to absorbing discrete diffusion, while ours can deal with discrete diffusion with various noise $q_{\mathrm{noise}}$ including multinomial diffusion.

In Section~\ref{subsubsec: finite time ELBO}, we only consider the case of a single token $\xb \in \RR^{K}$ for simplicity as we decompose with the sampling steps $T$. In this section, we decompose over the transition time $\tau$. Therefore, we need to consider a sentence with multiple tokens $\xb_{t, 1:N} = [\xb_{t,1}, \ldots, \xb_{t,N}]$ where $\xb_{t,n}$ is the $n$-th token and $N$ is the sequence length. Recall that we defined the transition time set $\cT = \{\tau_{n}\}_{n=1}^{N}$ in Section~\ref{sec:fast}. We arrange $\tau_n$ to obtain an ordered sequence $\tau_{n_k}$, where $0 = \tau_{n_0} < \tau_{n_1} < \tau_{n_2} < \ldots <\tau_{n_N} = T$. Then conditioning on the transition time set $\cT = \{\tau_{1}, \ldots, \tau_{N}\}$, we have that 
\begin{align*}
p_\theta(\xb_{0:T, 1:N}|\cT) = p_\theta(\xb_{\tau_{n_{N}}, 1:N}|\cT) \prod_{s=N,\ldots, 1} p_\theta(\xb_{\tau_{n_{s-1}}, 1:N}|\xb_{\tau_{n_s},1:N}, \cT),
\end{align*}
where we omit the time superscript of $p$ for simplicity. Then, the evidence lower bound inequality gives
\begin{align}
\log p_\theta(\xb_{0,1:N} | \cT) \ge \EE_{\xb_{\tau_{n_1}:T,1:N} \sim q(\xb_{\tau_{n_1}:T,1:N}|\xb_{0,1:N},\cT)} \log \frac{p_{\theta}(\xb_{0:T,1:N}|\cT)}{q(\xb_{\tau_{n_1}:T,1:N}|\xb_{0,1:N},\cT)}. \label{eq:InfELBO2} 
\end{align}
By Jensen's inequality, we have
\begin{align}
\log p_\theta(\xb_{0, 1:N}) = \log\EE_{\tau_{1}, \ldots, \tau_{n} \sim \cD_{\tau}}[p_\theta(\xb_{0, 1:N} | \cT)] \geq \EE_{\tau_{1}, \ldots, \tau_{n}  \sim \cD_{\tau}}[\log p_\theta(\xb_0 | \cT)]. \label{eq:InfELBO1} 
\end{align}

Plugging \eqref{eq:InfELBO2} into \eqref{eq:InfELBO1} gives the following ELBO,
\begin{align*}
\log p_\theta(\xb_{0,1:N}) \ge \EE_{\tau_{1}, \ldots, \tau_{n}  \sim \cD_{\tau}}\EE_{\xb_{\tau_{n_1}:T} \sim q(\xb_{\tau_{n_1}:T}|\xb_0,\cT)} \log \frac{p_{\theta}(\xb_{0:T}|\cT)}{q(\xb_{\tau_{n_1}:T}|\xb_0,\cT)} := \text{ELBO}. 
\end{align*}

We factorize the $p_\theta$ and $q$ by
\begin{align*}
 p_\theta(\xb_{0:T, 1:N}|\cT) & =  p_\theta(\xb_{T, 1:N} | \cT) \prod_{s=N,\ldots,1} p_\theta(\xb_{\tau_{n_{s-1}}, 1:N} | \xb_{\tau_{n_s}, 1:N}, \cT), \\
 q(\xb_{\tau_{n_1}:T, 1:N}|\xb_{0,1:N},\cT) & = q(\xb_{T, 1:N} | \xb_0, \cT) \prod_{s=N,\ldots,2} q(\xb_{\tau_{n_{s-1}}, 1:N} | \xb_{\tau_{n_s}, 1:N}, \xb_{0, 1:N}, \cT).
\end{align*}
Therefore, we have
\begin{align}
    \text{ELBO} &= \EE_{\tau_{1}, \ldots, \tau_{n}  \sim \cD_{\tau}}\EE_{\xb_{\tau_{n_1}:T,1:N} \sim q(\xb_{\tau_{n_1}:T,1:N}|\xb_{0,1;N},\cT)} \Big[ \log p_\theta(\xb_{T, 1:N} | \cT) \notag\\
    &\qquad+ \sum_{s=1}^{N} \log p_\theta(\xb_{\tau_{n_{s-1}}, 1:N} | \xb_{\tau_{n_{s}}, 1:N}, \cT) - \log q(\xb_{T, 1:N} | \xb_{0, 1:N}, \cT) \notag\\
    &\qquad- \sum_{s=2}^{N} \log q(\xb_{\tau_{n_{s-1}}, 1:N} | \xb_{\tau_{n_s}, 1:N}, \xb_{0, 1:N}, \cT) \Big] \notag\\
     & = \EE_{\tau_{1}, \ldots, \tau_{n}  \sim \cD_{\tau}}\EE_{\xb_{\tau_{n_1}:T, 1:N} \sim q(\xb_{\tau_{n_1}:T, 1:N}|\xb_{0, 1:N},\cT)} \Big[ \log p_\theta (\xb_{0, 1:N} | \xb_{1, 1:N}, \cT)\notag\\
     &\qquad + \sum_{s=2}^{N} \log \frac{p_\theta(\xb_{\tau_{n_{s-1}}, 1:N} | \xb_{\tau_{n_{s}}, 1:N}, \cT)}{q(\xb_{\tau_{n_{s-1}}, 1:N} | \xb_{\tau_{n_{s}}, 1:N}, \xb_{0, 1:N}, \cT)}  + \log \frac{p_\theta(\xb_{T, 1:N} | \cT)}{q(\xb_{T, 1:N} | \xb_{0, 1:N}, \cT)}\Big] \notag \\
     & = \EE_{\tau_{1}, \ldots, \tau_{n}  \sim \cD_{\tau}}\EE_{\xb_{1, 1:N} \sim q(\cdot|\xb_{0, 1:N},\cT)} \log p_\theta (\xb_{0, 1:N} | \xb_{1, 1:N}, \cT) \notag\\
     &\qquad + \sum_{s=2}^{N} \EE_{\xb_{\tau_{n_{s-1}}, 1:N},\xb_{\tau_{n_{s}}, 1:N}\sim q(\cdot|\xb_{0, 1:N},\cT)} \log \frac{p_\theta(\xb_{\tau_{n_{s-1}}, 1:N} | \xb_{\tau_{n_{s}}, 1:N}, \cT)}{q(\xb_{\tau_{n_{s-1}}, 1:N} | \xb_{\tau_{n_{s}}, 1:N}, \xb_{0, 1:N}, \cT)} + \text{const} \notag\\
     & = \EE_{\tau_{1}, \ldots, \tau_{n}  \sim \cD_{\tau}}\EE_{\xb_{1, 1:N} \sim q(\cdot|\xb_{0, 1:N},\cT)} \log p_\theta (\xb_{0, 1:N} | \xb_{1, 1:N}, \cT) \notag\\
     &\qquad - \sum_{s=2}^{N}\EE_{\tau_{1}, \ldots, \tau_{n}\sim \cD_{\tau}}\EE_{\xb_{\tau_{n_{s-1}}, 1:N},\xb_{\tau_{n_{s}}, 1:N}\sim q(\cdot|\xb_{0, 1:N},\cT)} \notag \\
     &\qquad\text{KL}( q(\xb_{\tau_{n_{s-1}}, 1:N} | \xb_{\tau_{n_{s}}, 1:N}, \xb_{0, 1:N}, \cT)|p_\theta (\xb_{\tau_{n_{s-1}}, 1:N} | \xb_{\tau_{n_{s}}, 1:N}, \cT)) + \text{const}. \label{eq:Infinity ELBO}
\end{align}

\begin{remark}
\eqref{eq:Infinity ELBO} represents the ELBO utilized by the DNDM-C architecture. As our transition times $\tau_{n}$ are independently and identically drawn from the distribution $\cD_{\tau}$, we are unable to further decompose  \eqref{eq:Infinity ELBO} into a loss function related to the position information $1:N$, as was accomplished by \cite{hoogeboom2021autoregressive}.

\end{remark}

}

\section{Choice of the Transition Time}
Transition time $\tau$ in Definition~\ref{Def:transition time} plays an important role in \method{}. In this section, we provide a deeper discussion of the transition time. We first give a proof of the Theorem~\ref{thm:transition probability}.

\begin{proof}[Proof of Theorem~\ref{thm:transition probability}]
By the definition of $\tau$, we know that $\tau_{n} = t$ is equivalent to $b_{0,n} =1, \ldots, b_{t-1,n} = 1$ and $b_{t,n} = 0$. {\color{black} Since $\{ b_{t,n} \}_{t=0}^T$ is independent for different $n$ by definition, each $\tau_n$ is also independent.} Therefore, we drop the subscript $n$ for simplicity. On the other hand if $b_0 =1, \ldots, b_{t-1} = 1$ and $b_{t} = 0$ we can also conclude that $\tau = t$. Therefore, we have that
\begin{align*}
\mathbb{P}(\tau = t) &= \mathbb{P}(b_0 =1, \ldots, b_{t-1} = 1, b_{t}=0)\\
&=\big[\Pi_{s=1}^{t-1}\beta_{s}\big]\cdot (1-\beta_{t})\\
&= \Pi_{s=1}^{t-1}\beta_{s} - \Pi_{s=1}^{t}\beta_{s} \\
&= \alpha_{t-1} - \alpha_{t},    
\end{align*}
where the second equality is due to $b_s, s = 1,2,\ldots,t$ are independent random variable following $\mathrm{Bernoulli}(\beta_s)$ distribution and the last equality is by the definition of $\alpha_t = \Pi_{s=1}^{t}\beta_{s}$.
\end{proof}
Notice that $\alpha_{t}$ is a decreasing sequence in the $0$ to $1$ range. Therefore, $\mathbb{P}(\tau = t) \in [0,1]$ for any $t \in \{1, \ldots, T\}$. Besides $\sum \mathbb{P}(\tau = t) = \sum_{t=1}^{T}\big(\alpha_{t-1}-\alpha_{t}\big) = \alpha_0 - \alpha_{T} = 1$. Therefore, the derived distribution is valid as long as the $\alpha_{t}$ is decreasing from $1$ to $0$.

\begin{figure}[ht!]
  \centering
  
  \begin{subfigure}{0.45\textwidth}
    \centering
    \includegraphics[width=1.0\linewidth]{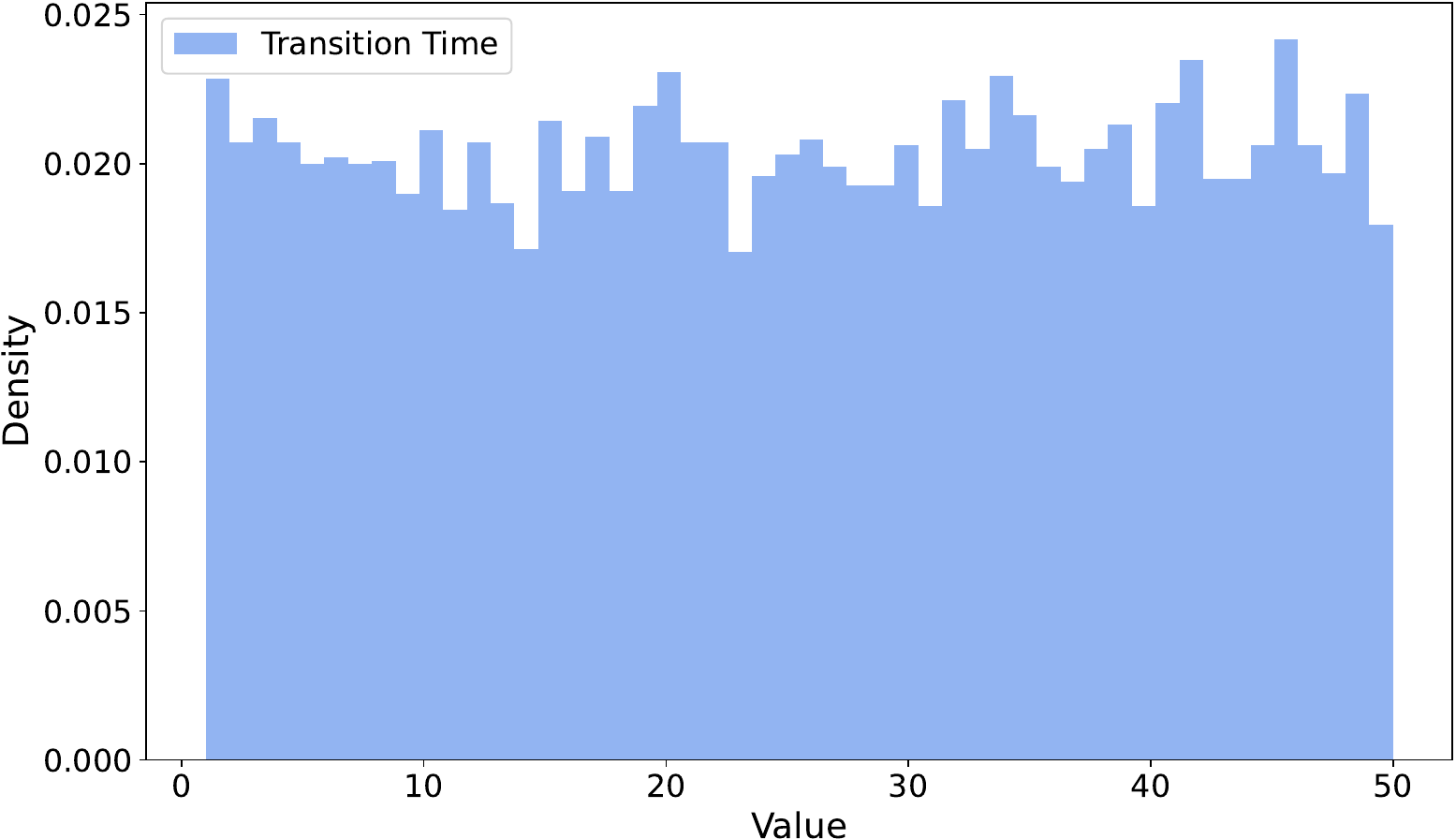}
    \caption{$\alpha_{t} = 1 - t/T$}
    \label{fig:transitiontime1}
  \end{subfigure}
  \hfill 
  \begin{subfigure}{0.45\textwidth}
    \centering
    \includegraphics[width=1.0\linewidth]{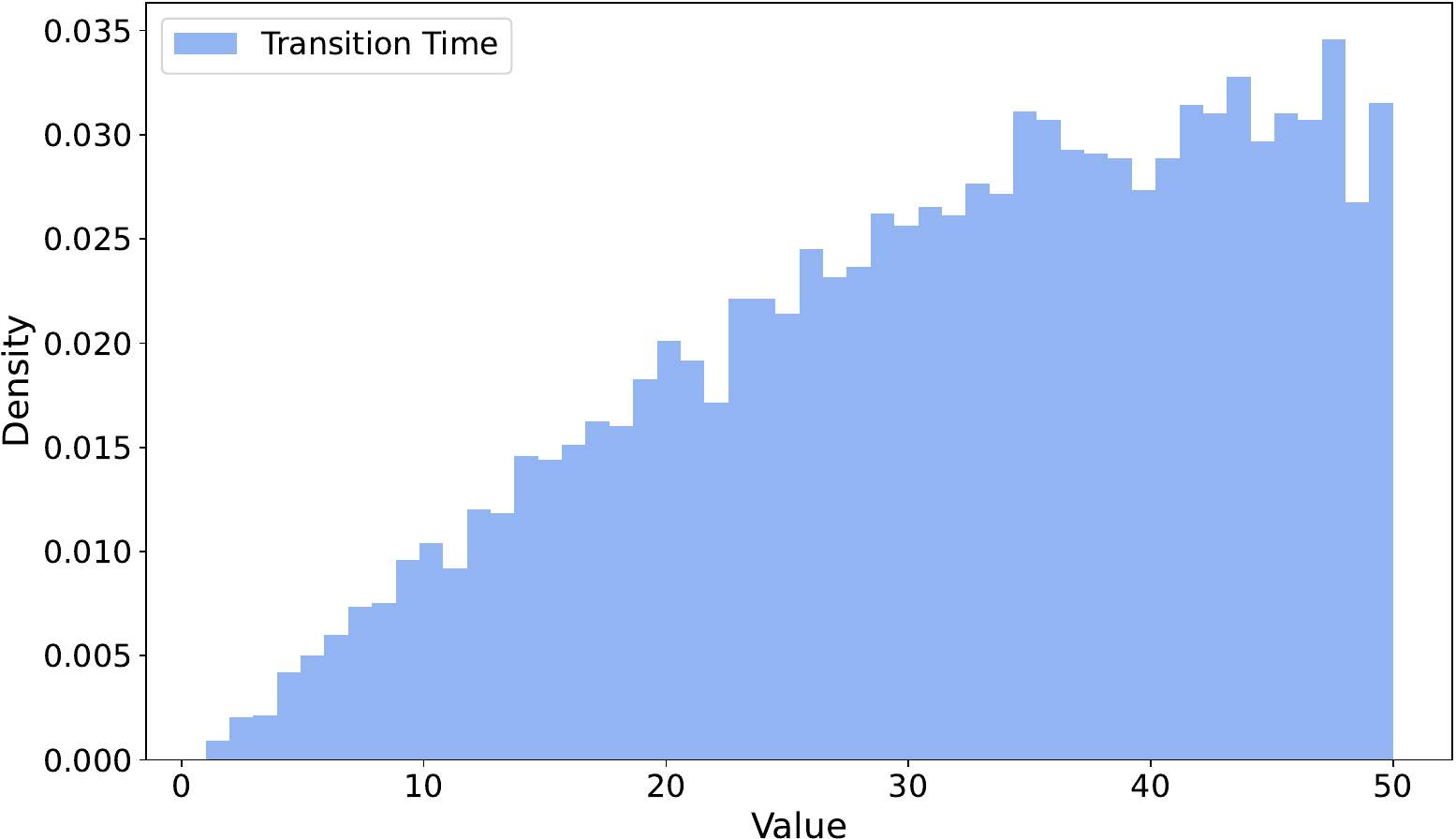}
    \caption{$\alpha_{t} = \cos(\pi*t/2T)$}
    \label{fig:transitiontime2}
  \end{subfigure}
  
  \vspace{0.5cm} 
  
  \begin{subfigure}{0.45\textwidth}
    \centering
    \includegraphics[width=1.0\linewidth]{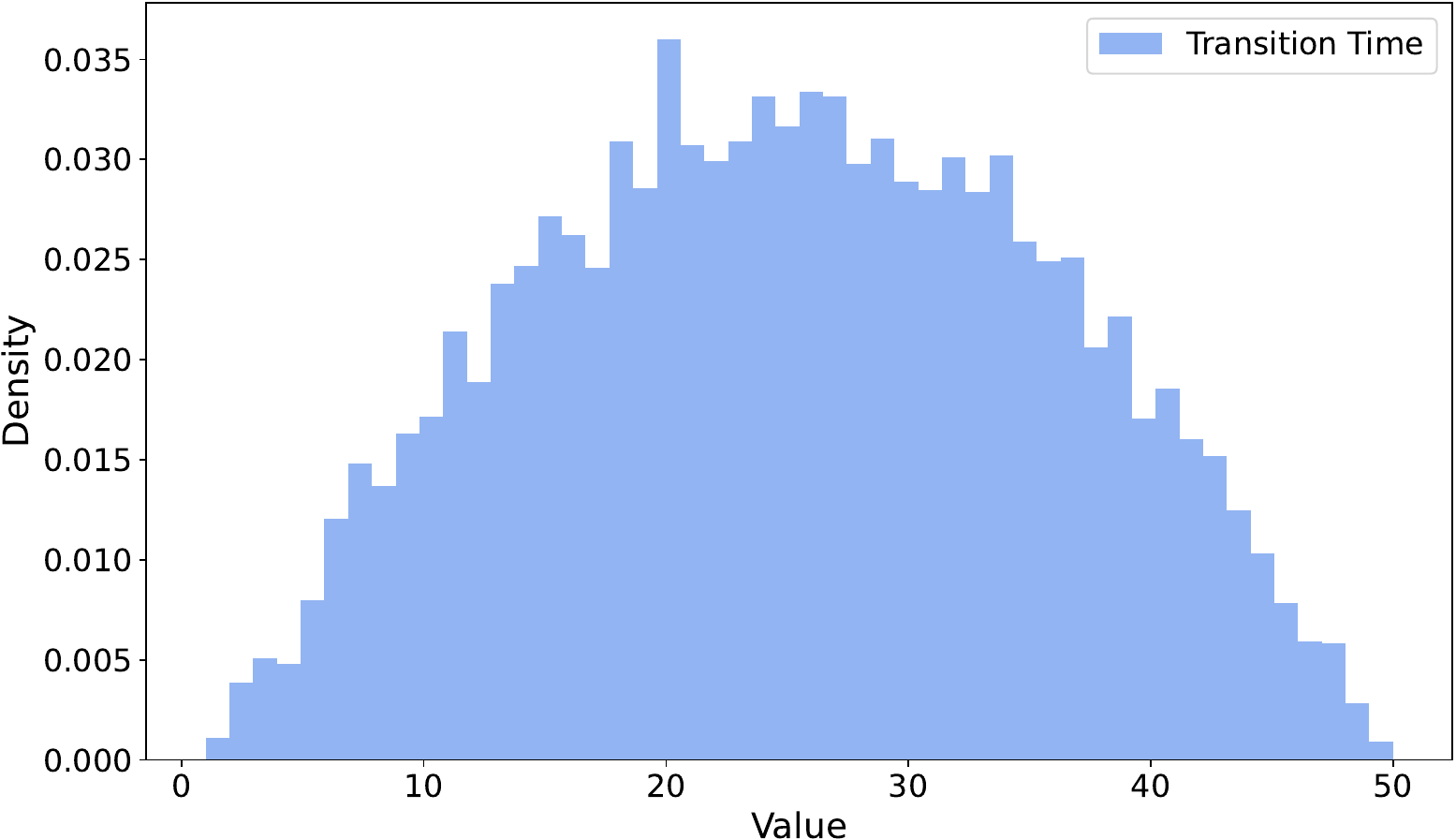}
    \caption{$\alpha_t = \cos^2(\pi*t/2T)$}
    \label{fig:transitiontime3}
  \end{subfigure}
  \hfill 
  \begin{subfigure}{0.45\textwidth}
    \centering
    \includegraphics[width=1.0\linewidth]{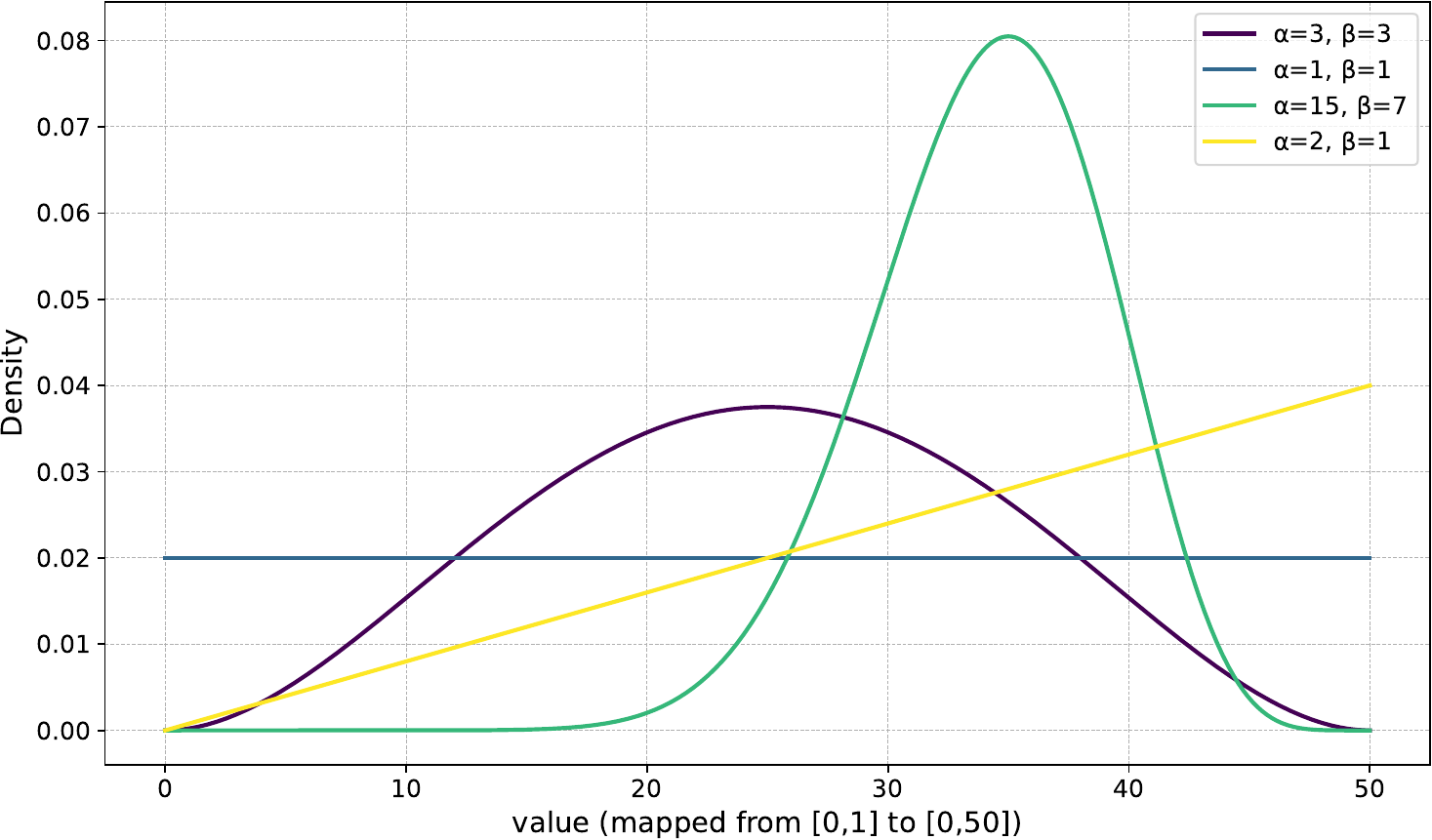}
    \caption{Beta Distribution with Different Parameter}
    \label{fig:transitiontime4}
  \end{subfigure}
  
  \caption{Different distribution of transition time for $T = 50$. $a), b), c)$ The transition time sampled 1K times under the different $\alpha_{t}$ schedule. d) The approximated transition time for $t = 1, \ldots, T$ using different hypter-parameters.}
  \label{fig:transitiontime}
\end{figure}

From Theorem~\ref{thm:transition probability}, we discern that the nature of the diffusion model scheduler, ${\alpha_t}$, clarifies the distribution of $\tau$. 

\noindent\textbf{Linear $\alpha$ schedule.} This is a schedule studied in \citet{austin2021structured}, where $\alpha_t = 1 - t/T$. This will result in $\mathbb{P}(\tau_n = t) = 1/T$ for every $t$ in the range $1$ to $T$. As a result, transition time distributes uniformly across each moment in the set $\{1, \ldots, T\}$. This can be verified in a) of Figure~\ref{fig:transitiontime}.

\noindent\textbf{Cosine $\alpha$ schedule.} This is a schedule studied in \citet{hoogeboom2021argmax}, where $\alpha_t = \cos(\pi*t/2T)$. For numerical consideration of the noise, a small offset $s$ is added, i.e., $\alpha_{t} = f(t)/f(0)$ where $f(t) = \cos((s+t/T)/(1+s)*\pi/2)$. As shown in b) of Figure~\ref{fig:transitiontime}, the transition time will concentrate more on the large $T$. 

\noindent\textbf{Cosine square $\alpha$ schedule.} This is a schedule studied in \citet{zheng2023reparameterized}, where $\alpha_t = \cos^2(\pi*t/2T)$, which motivated by \citet{nichol2021improved}. Again, for numerical consideration of the noise, a small offset $s$ is added, i.e., $\alpha_{t} = f(t)/f(0)$ where $f(t) = \cos^((s+t/T)/(1+s)*\pi/2)$. As shown in c) of Figure~\ref{fig:transitiontime}, the transition time will concentrate more on the middle of the range.

Generally, if we express $\alpha_t$ as $g(t/T)$, then we can simplify to $\mathbb{P}(\tau = t) = g((t-1)/T) - g(t/T)$, which further refines to $(1/T)|g'(t/T)| + o(1/T)$. This indicates that transitions are more likely where $|g'|$ is large. Such a mathematical finding can match our observation in Figure~\ref{fig:transitiontime}. 

In practice, we find that the shape of the transition time doesn't need to match the theoretical prediction schedule exactly. As we can see from d) in Figure~\ref{fig:transitiontime}. A reshaped Beta distribution can approximate all the transition time distributions in a fixed range. We first extract a time $t \in [0,1]$ from a Beta distribution, then adjust these samples to fit by multiplying $T$ and round them to acquire the integer. Our experiment finds that a properly chosen Beta distribution (tuned on the validation set) makes \method{} perform better on the translation tasks. 
\blue{Specifically, the chosen Beta distributions and the searching method are reported in Appendix \ref{Appendix: details}. The performance of the four transition time schedules mentioned above, including the reported Beta distributions for comparison, are listed in Table~\ref{tab: schedules}, where we find the other three schedules affect the performance, and most of their scores are lower than the scores of Beta distribution, but their scores are at least still close to the reported Beta distributions, especially for \method{}-k-absorb and \method{}-absorb. The efficiencies (measured by NFE) are also similar to one another.}

\blue{Additionally, the ablation study on a reasonable range of different Beta distributions with 50 and 1000 sampling steps are shown in Tables~\ref{tab:ablation50} and \ref{tab:ablation1000}, where the BLEU scores and NFE values on the test set of one of the three machine translation datasets, $\mathtt{WMT16}$, are shown for demonstration. The range of Beta distributions covers our chosen Beta schedules based on validation sets and a variety of basic Beta distribution shapes. These results show that the different Beta distributions influence the performance, but most of these choices of parameters still achieve results close to the optimal. Since the Beta distributions of the reported results in Tables~\ref{tab:real} and~\ref{tab:top} are selected using the validation set, they do not always have the highest scores on the test set, but their scores still at least belong to the top tiers according to these tables.}

\noindent\textbf{Another view of the transition time.} In Algorithm~\ref{alg:DNDMv1}, we only need to call the neural network when $t \in \cT$, which can significantly speed up the sampling since we reduce the function call. Notice that after we get the $\xb_{0}$ prediction, we only update the $\xb_{t}$ for those tokens at the transition time. However, \eqref{eq:deterministic} implies that $\xb_{t} = \xb_0$ as long as $\tau > t$. Therefore, instead of only updating the $\xb_{t}$ for those tokens at the transition time, i.e., $\tau = t$, we can also update those tokens with transition time $\tau >= t$. This motivates us to consider a variation presented as Algorithm~\ref{alg:DNDMv2},  which keeps almost the same sampling time but will update the tokens several times rather than just once. Since the tokens now get the chance to be corrected over time. The new Algorithm~\ref{alg:DNDMv2}  will be more robust than Algorithm~\ref{alg:DNDMv1}.

\begin{table}[ht!]
    \centering
 \caption{\blue{The BLEU scores and average number of function evaluations (NFE) values of different distributions of transition time for $1000$ \BB{sampling steps} with batch size 100. The parameters of the Beta distributions in this table are the same as in Tables \ref{tab:real} and \ref{tab:top} and are reported in Appendix \ref{Appendix: details}.}}
\resizebox{\columnwidth}{!}{
\begin{tabular}{l|l|ll|ll|ll|ll}
\toprule
\multicolumn{1}{c|}{\multirow{2}{*}{Datasets}} & \multicolumn{1}{c|}{\multirow{2}{*}{Schedules}} & \multicolumn{2}{c|}{DNDM-multi}                           & \multicolumn{2}{c|}{DNDM-absorb}                          & \multicolumn{2}{c|}{DNDM-k-multi}                         & \multicolumn{2}{c}{DNDM-k-absorb}                        \\ \cmidrule{3-10} 
\multicolumn{1}{c|}{}                          & \multicolumn{1}{c|}{}                           & \multicolumn{1}{c|}{BLEU}  & \multicolumn{1}{c|}{Avg NFE} & \multicolumn{1}{c|}{BLEU}  & \multicolumn{1}{c|}{Avg NFE} & \multicolumn{1}{c|}{BLEU}  & \multicolumn{1}{c|}{Avg NFE} & \multicolumn{1}{c|}{BLEU}  & \multicolumn{1}{c}{Avg NFE} \\ \midrule
\multirow{4}{*}{$\mathtt{IWSLT14}$}                        & Cosine                                          & \multicolumn{1}{l|}{31.72} & 31.71                        & \multicolumn{1}{l|}{32.71} & 31.21                        & \multicolumn{1}{l|}{32.91} & 31.71                        & \multicolumn{1}{l|}{34.50}  & 31.21                        \\
                                                & Cosine$^2$                       & \multicolumn{1}{l|}{31.78} & 31.74                        & \multicolumn{1}{l|}{$\mathbf{32.93}$} & 31.21                        & \multicolumn{1}{l|}{32.78} & 31.74                        & \multicolumn{1}{l|}{34.53} & 31.21                        \\
                                                & Linear $\alpha$                                   & \multicolumn{1}{l|}{31.77} & 31.82                        & \multicolumn{1}{l|}{32.65} & 31.33                        & \multicolumn{1}{l|}{32.83} & 31.82                        & \multicolumn{1}{l|}{34.53} & 31.33                        \\
                                                & Beta (reported)                                   & \multicolumn{1}{l|}{$\mathbf{31.82}$} & $\mathbf{30.33}$                        & \multicolumn{1}{l|}{$\mathbf{32.93}$} & $\mathbf{31.08}$                        & \multicolumn{1}{l|}{$\mathbf{33.15}$} & $\mathbf{30.33}$                        & \multicolumn{1}{l|}{$\mathbf{34.56}$} & $\mathbf{31.08}$                        \\ \midrule
\multirow{4}{*}{$\mathtt{WMT14}$}                          & Cosine                                          & \multicolumn{1}{l|}{$\mathbf{25.80}$} & 39.61                        & \multicolumn{1}{l|}{$\mathbf{26.54}$} & 39.18                        & \multicolumn{1}{l|}{26.63} & 39.61                        & \multicolumn{1}{l|}{27.81} & 39.18                        \\
                                                & Cosine$^2$                       & \multicolumn{1}{l|}{25.52} & 39.48                        & \multicolumn{1}{l|}{26.53} & 39.18                        & \multicolumn{1}{l|}{25.01} & 39.48                        & \multicolumn{1}{l|}{$\mathbf{27.95}$} & 39.18                        \\
                                                & Linear $\alpha$                                   & \multicolumn{1}{l|}{25.58} & 39.97                        & \multicolumn{1}{l|}{26.33} & 39.82                        & \multicolumn{1}{l|}{25.47} & 39.97                        & \multicolumn{1}{l|}{27.63} & 39.82                        \\
                                                & Beta (reported)                                   & \multicolumn{1}{l|}{25.71} & $\mathbf{38.94}$                        & \multicolumn{1}{l|}{26.43} & $\mathbf{38.76}$                        & \multicolumn{1}{l|}{$\mathbf{26.88}$} & $\mathbf{38.94}$                        & \multicolumn{1}{l|}{27.82} & $\mathbf{38.76}$                        \\ \midrule
\multirow{4}{*}{$\mathtt{WMT16}$}                          & Cosine                                          & \multicolumn{1}{l|}{32.71} & 40.50                        & \multicolumn{1}{l|}{33.56} & 40.45                        & \multicolumn{1}{l|}{33.46} & 40.50                        & \multicolumn{1}{l|}{34.37} & 40.45                        \\
                                                & Cosine$^2$                       & \multicolumn{1}{l|}{32.73} & 40.50                        & \multicolumn{1}{l|}{33.51} & 40.45                        & \multicolumn{1}{l|}{33.44} & 40.50                        & \multicolumn{1}{l|}{34.24} & 40.45                        \\
                                                & Linear $\alpha$                                   & \multicolumn{1}{l|}{32.85} & 40.36                        & \multicolumn{1}{l|}{33.46} & 40.36                        & \multicolumn{1}{l|}{33.47} & 40.36                        & \multicolumn{1}{l|}{33.88} & 40.36                        \\
                                                & Beta (reported)                                   & \multicolumn{1}{l|}{$\mathbf{32.86}$} & $\mathbf{38.46}$                        & \multicolumn{1}{l|}{$\mathbf{33.60}$} & $\mathbf{38.27}$                        & \multicolumn{1}{l|}{$\mathbf{33.79}$} & $\mathbf{38.45}$                        & \multicolumn{1}{l|}{$\mathbf{34.38}$} & $\mathbf{38.27}$                        \\ \bottomrule
\end{tabular}    }
    \label{tab: schedules}
\end{table}

\noindent\textbf{Impact of Transition Order.}
We further evaluate the impact of transition order. Building upon the results in Table~\ref{tab:top}, we investigate how the model performance will change if the transition time is influenced by the position of the tokens: from left to right and from right to left. In the left-to-right approach, tokens positioned on the left are transitioned to $\xb_0$ earlier, and vice versa for the right-to-left approach. Our experiments show that the left-to-right approach consistently outperforms the right-to-left approach across all datasets and step counts, as demonstrated in Table~\ref{tab:direction_comparison}.

\begin{table}[t]
\caption{Comparison of left-to-right and right-to-left transition approaches across different datasets and step counts.}
\label{tab:direction_comparison}
\centering
\begin{tabular}{c|c|c|c|c}
\toprule
\textbf{Steps} & \textbf{Direction} & \textbf{IWSLT14} & \textbf{WMT14} & \textbf{WMT16} \\
\midrule
\multirow{2}{*}{25} & Left-to-right & 31.08 & 24.41 & 31.67 \\
& Right-to-left & 30.54 & 23.33 & 31.33 \\
\midrule
\multirow{2}{*}{50} & Left-to-right & 32.87 & 26.46 & 33.37 \\
& Right-to-left & 32.47 & 25.18 & 32.78 \\
\midrule
\multirow{2}{*}{1000} & Left-to-right & 34.45 & 27.93 & 34.43 \\
& Right-to-left & 34.04 & 27.02 & 34.15 \\
\bottomrule
\end{tabular}
\end{table}

This result suggests that the order of token transitions significantly influences the model's performance, with earlier transitions of left-side tokens leading to better generation quality.

{\color{black}
\section{Discussion on the Number of Function Evaluations (NFE).}\label{sec:nfe}
In this section, we discuss the number of function evaluations (NFE) in DNDM. According to \eqref{eq:multi-token version1}, the update of a token $\xb_{t-1, n}$ occurs solely at its designated transition time. Meanwhile, if step $t$ does not coincide with a transition time for any token, we maintain the sentence from the preceding step unchanged: $\xb_{t, 1:N} = \xb_{t-1, 1:N}$. Therefore, our algorithm removes the need of function evaluation for steps outside the set of transition times. Given this structure, our analytical emphasis is on the transition set $\cT$ since function evaluations are required only at times $t$ that are members of $\cT$. Consequently, the NFE is precisely the cardinality of the transition set, denoted by $|\cT|$. In our main paper, we propose a naive upper bound for $|\cT|$ as $\min\{N, T\}$, which effectively demonstrates the speed of our method when $T > N$. Next, we demonstrate that DNDM also reduces the NFE when $T < N$, by providing a precise estimation of $|\cT|$.
\begin{theorem}\label{thm:1}
    Suppose transition time follows distribution $\mathcal{D}_{\tau}$, and consider a sequence of length $N$. Then, the cardinality of the transition set $\mathcal{T} := \{\tau_1, \ldots, \tau_{N}\}$ satisfies:
    \begin{itemize}[leftmargin=*,nosep]
        \item $1 \leq |\mathcal{T}| \leq \min\{N, T\}$,
        \item $\mathbb{E}[|\mathcal{T}|] = [1 - C_{T,N, \mathcal{D}_{\tau}}] \cdot T$, where $C_{T,N, \mathcal{D}_{\tau}}$ is a constant in the range $(0,1)$. Furthermore,
        \begin{align*}
            C_{T,N, \mathcal{D}_{\tau}} = \Big(\sum_{i=1}^{T}(1-p_i)^{N}\Big)/T \geq (1 - 1/T)^{N},
        \end{align*}
where $p_i = \PP(\tau = i)$ for $\tau \sim \cD_{\tau}$, and the equality holds if and only if $\mathcal{D}_{\tau}$ is a uniform distribution.
    \end{itemize}
\end{theorem}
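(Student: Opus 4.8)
The plan is to compute $\mathbb{E}[|\cT|]$ exactly through an indicator decomposition over the \emph{values} $\{1,\dots,T\}$ rather than over the tokens, and then control the resulting sum by convexity. The two-sided bound $1\le |\cT|\le \min\{N,T\}$ is immediate, since $\cT=\{\tau_1,\dots,\tau_N\}$ is a set of at most $N$ distinct values, each lying in $\{1,\dots,T\}$, and it is nonempty because $N\ge 1$. For the expectation, for each $i\in\{1,\dots,T\}$ let $X_i$ be the indicator that value $i$ is hit by some transition time, i.e.\ $X_i=1$ if $i\in\cT$ and $X_i=0$ otherwise, so that $|\cT|=\sum_{i=1}^{T}X_i$. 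By Theorem~\ref{thm:transition probability} the $\tau_n$ are i.i.d.\ with $\mathbb{P}(\tau_n=i)=p_i$, hence $\mathbb{P}(X_i=0)=\prod_{n=1}^{N}\mathbb{P}(\tau_n\neq i)=(1-p_i)^N$, and linearity of expectation gives
\[
\mathbb{E}[|\cT|]=\sum_{i=1}^{T}\bigl(1-(1-p_i)^N\bigr)=T-\sum_{i=1}^{T}(1-p_i)^N=\Bigl(1-\tfrac1T\sum_{i=1}^{T}(1-p_i)^N\Bigr)T,
\]
which is exactly $[1-C_{T,N,\cD_\tau}]\,T$ with $C_{T,N,\cD_\tau}=\tfrac1T\sum_{i=1}^{T}(1-p_i)^N$.

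Next I would place $C_{T,N,\cD_\tau}$ in $(0,1)$, assuming $T\ge 2$: since $0\le (1-p_i)^N\le 1-p_i$, we get $\sum_i (1-p_i)^N\le T-\sum_i p_i = T-1 < T$, so $C_{T,N,\cD_\tau}<1$; and because at most one $p_i$ can equal $1$, at least $T-1\ge 1$ of the summands are strictly positive, so $C_{T,N,\cD_\tau}>0$. For the lower bound, note that $\phi(p):=(1-p)^N$ is convex on $[0,1]$, and strictly convex once $N\ge 2$, as $\phi''(p)=N(N-1)(1-p)^{N-2}>0$ for $p<1$. Applying Jensen's inequality to the uniform average over $i=1,\dots,T$ and using $\sum_{i=1}^{T}p_i=1$,
\[
C_{T,N,\cD_\tau}=\frac1T\sum_{i=1}^{T}\phi(p_i)\ \ge\ \phi\Bigl(\frac1T\sum_{i=1}^{T}p_i\Bigr)=\phi(1/T)=(1-1/T)^N,
\]
which is the claimed inequality; strict convexity forces equality precisely when $p_1=\dots=p_T=1/T$, i.e.\ when $\cD_\tau$ is uniform on $\{1,\dots,T\}$.

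The computation is essentially routine, so I do not expect a serious obstacle. The one point needing care is the equality characterization, which genuinely uses strict convexity of $\phi$ and hence tacitly assumes $N\ge 2$ (when $N=1$, $\phi$ is affine, $|\cT|\equiv 1$, and the statement degenerates); I would flag this assumption rather than try to patch it. A secondary subtlety is the degenerate scheduler where some $p_i=1$: there $\phi$ is evaluated at the boundary point $1$, but a direct check shows $\tfrac1T\sum_i\phi(p_i)=1-1/T>(1-1/T)^N$ for $N\ge 2$, so the strict inequality is preserved and the equality characterization still holds in that case too.
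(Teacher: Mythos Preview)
Your proposal is correct and follows essentially the same approach as the paper: both use the indicator decomposition $|\cT|=\sum_{i=1}^{T}\ind\{i\in\cT\}$, compute $\mathbb{P}(i\in\cT)=1-(1-p_i)^N$ via independence, and then bound the resulting average. Your treatment is in fact more complete than the paper's, which simply asserts the inequality and its equality condition without invoking convexity; your explicit Jensen argument with $\phi(p)=(1-p)^N$ and the discussion of the $N=1$ edge case and the degenerate $p_i=1$ boundary are welcome additions.
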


\begin{proof}
    The first statement is straightforward. For completeness, the proof is provided. Since there are only $N$ transition times (possibly repeated): $\tau_{1}, \ldots, \tau_{N}$, the distinct transition times must satisfy $|\mathcal{T}| \leq N$. Additionally, since $\mathcal{T} \subseteq \{1, \ldots, T\}$, we also have $|\mathcal{T}| \leq T$.

    To prove the second statement, we decompose $\mathcal{T}$ and use the property of expectation. Note that $|\mathcal{T}| = \sum_{i=1}^{T} \ind\{i \in \mathcal{T}\}$. Thus,
    \begin{align}
        \mathbb{E}[|\mathcal{T}|] = \mathbb{E}\bigg[\sum_{i=1}^{T} \ind\{i \in \mathcal{T}\}\bigg] = \sum_{i=1}^{T} \mathbb{P}(i \in \mathcal{T}). \label{eq:decompose}
    \end{align}
    Assuming $\mathbb{P}_{\mathcal{D}_{\tau}}(\tau = i) = p_{i}$, and that $\tau_{n}$ are i.i.d. draws from $\mathcal{D}_{\tau}$, we have
    \begin{align}
        \mathbb{P}(i \in \mathcal{T}) = 1 - \mathbb{P}(i \notin \mathcal{T}) = 1 - (1-p_i)^{N}. \label{eq:each-term}
    \end{align}
    Substituting \eqref{eq:each-term} into \eqref{eq:decompose} yields
    \begin{align*}
        \mathbb{E}[|\mathcal{T}|] &= \sum_{i=1}^{T}\Big[1 - (1-p_i)^{N}\Big] = \Big[1 - \frac{\sum_{i=1}^{T}(1-p_i)^{N}}{T}\Big]\cdot T = [1 - C_{T,N, \mathcal{D}_{\tau}}]\cdot T,
    \end{align*}
    where $C_{T,N,\mathcal{D}_{\tau}} = \Big(\sum_{i=1}^{T}(1-p_i)^{N}\Big)/T$. An upper bound for $C_{T,N,\mathcal{D}_{\tau}}$ is given as
    \begin{align*}
        C_{T,N,\mathcal{D}_{\tau}} = \Big[1 - \frac{\sum_{i=1}^{T}(1-p_i)^{N}}{T}\Big]\cdot T \leq \Big[1 -\Big(1- \frac{1}{T}\Big)^{N}\Big]\cdot T,
    \end{align*}
    where the inequality holds if and only if $p_{i} = 1/T$ for all $i \in [T]$, i.e., $\mathcal{D}_{\tau}$ is a uniform distribution.
\end{proof}
\begin{remark}
    Theorem~\ref{thm:1} suggests that even when $T \leq N$, our method still provides a significant improvement. 
    Specifically, for $T = N \geq 4$, we have $C_{T,N, \mathcal{D}_{\tau}} = (1-1/N)^{N} \geq 0.3$. This implies that our model requires at most $0.7T$ even in the worst case.
    Moreover, if we consider a special scenario where the number of $p_i$ satisfying $p_i < \epsilon$ is more than $M$, then we have $C_{T,N,\mathcal{D}_{\tau}} > M (1 - \epsilon)^N/T$, indicating that with $M$ sufficiently large and $\epsilon$ sufficiently small, $C_{T,N,\mathcal{D}_{\tau}}$ can be pretty close to $1$.
\end{remark}

\begin{remark}
   In practical applications of our model, we employ a beta distribution for $\cD_{\tau}$, which typically exhibits a right-heavy tail. Therefore $C_{T,N, \mathcal{D}_{\tau}}$ tends to be larger than that in the worst-case scenario.
   In Tables~\ref{tab:real2} and \ref{tab:top2}, we list the average NFE for each experiment we run in~\S\ref{sec:experiments}. These results demonstrate a significant reduction in NFE compared to the original counts: for $T=25$, the NFE is only about half of the original count; for $T=50$, it is approximately one-third; and for $T=1000$, it reduces to less than one-twentieth of the original count. 
\end{remark}

\begin{remark}
By Bernoulli's inequality, $(1-p)^{N} > 1 - N \cdot p$ for $1> p > 0$. Therefore, $C_{T,N,\mathcal{D}_{\tau}} > 1 - N/T$, implying that $\mathbb{E}[|\mathcal{T}|] < N$. As $T \rightarrow \infty$, assuming the transition time does not concentrate at a single point, the probability that two transitions occur simultaneously is zero. Consequently, the generation process will sequentially go through each token. Thus, the expected number of function evaluations (NFE), $\mathbb{E}[|\mathcal{T}|]$, will be $N$. In contrast, when $T$ is finite, there is a non-zero probability that multiple transitions happen at the same time. Hence, in this case, the NFE, $|\mathcal{T}|$, is strictly less than $N$
\end{remark}

}


\begin{table}[ht!]
    \centering
     \caption{\blue{BLEU score and the average number of function evaluations (NFE) comparison of multinomial diffusion on machine translation benchmarks $\mathtt{IWSLT14\ DE}$-$\mathtt{EN}$, $\mathtt{WMT14\ EN}$-$\mathtt{DE}$, and $\mathtt{WMT16\ EN}$-$\mathtt{RO}$.  The blue background highlights our algorithms. The average NFE values are calculated by dividing the number of times calling the denoising function (neural network) during generation by the number of batches, where the batch sizes of all experiments are 100.}}
    \resizebox{\columnwidth}{!}{
    \begin{tabular}{c|c|c|c|g|g|c|c|g|g}
    \toprule 
        \multirow{2}{*}{\textbf{Dataset}} & \multirow{2}{*}{\textbf{Steps}} & \multicolumn{2}{c|}{\textbf{RDM-Multi}} & \multicolumn{2}{c|}{\textbf{DNDM-Multi}} & 
        \multicolumn{2}{c|}{\textbf{RDM-$k$-Multi}} & \multicolumn{2}{c}{\textbf{DNDM-$k$-Multi}} \\
        \cmidrule{3-10}
         &  & \textbf{BLEU} & \textbf{Avg NFE} & \cellcolor{white}\textbf{BLEU} & \cellcolor{white}\textbf{Avg NFE} & \textbf{BLEU} & \textbf{Avg NFE} & \cellcolor{white}\textbf{BLEU} & \cellcolor{white}\textbf{Avg NFE} \\
        \midrule
        \multirow{4}{*}{\centering$\mathtt{IWSLT14}$}  & 25 &\textbf{31.26} & 25 &30.95 &\textbf{9.03} & \textbf{32.82} & 25 & 32.30 &\textbf{9.03} \\
        
        & 50 & \textbf{31.50} & 50 & 31.45& \textbf{14.07}& \textbf{32.82} & 50 & 32.80 & \textbf{14.07}\\
        
        & 1000 & 31.69 & 1000 &\textbf{31.82} & \textbf{30.33} & 32.64 & 1000 &  \textbf{33.15} & \textbf{30.33} \\

        & $\infty $ &-  & -& \textbf{31.89} & \textbf{32.73} &  -& -&\textbf{33.44}  & \textbf{32.73} \\
        \midrule
        \multirow{4}{*}{\centering$\mathtt{WMT14}$} & 25 & \textbf{25.25} & 25 &25.01 &  \textbf{13.52} & \textbf{26.03} & 25 & 25.98& \textbf{13.52}\\
        
        & 50 & \textbf{25.75} & 50 & 25.33& \textbf{20.58} & 26.14 & 50 & \textbf{26.37} & \textbf{20.58} \\
        
        & 1000 & 25.66 & 1000 & \textbf{25.71} & \textbf{38.94} & 25.82 & 1000 &  \textbf{26.88} & \textbf{38.94}\\

        & $\infty $  & - &- & \textbf{24.79} & \textbf{40.67} & - &- & \textbf{26.39} & \textbf{40.67} \\
        \midrule
        \multirow{4}{*}{\centering$\mathtt{WMT16}$} & 25 &\textbf{32.29} &25 & 31.97& \textbf{8.5} & \textbf{33.12} & 25 & 32.94 & \textbf{8.5}\\
        
        & 50 & \textbf{32.53} & 50&32.50 & \textbf{14.73} & \textbf{33.41} & 50 & 33.26 & \textbf{14.73}\\
        
        & 1000 & 32.63 & 1000 & \textbf{32.86}& \textbf{38.45}& 33.67 & 1000 & \textbf{33.79} & \textbf{38.45}\\

        & $\infty $ &-  &- & \textbf{32.91} & \textbf{41.64} &-  &- & \textbf{33.86} & \textbf{41.64} \\
    \bottomrule
    \end{tabular}}
    \label{tab:real2}
\end{table}

\begin{table}[ht!]
 \caption{\blue{BLEU score and the average number of function evaluations (NFE) comparison of absorbing diffusion on machine translation benchmarks $\mathtt{IWSLT14\ DE}$-$\mathtt{EN}$, $\mathtt{WMT14\ EN}$-$\mathtt{DE}$, and $\mathtt{WMT16\ EN}$-$\mathtt{RO}$.  The blue background highlights our algorithms. The average NFE values are calculated by dividing the number of times calling the denoising function (neural network) during generation by the number of batches, where the batch sizes of all experiments are 100.}}
    \centering
   \resizebox{\columnwidth}{!}{
    \begin{tabular}{c|c|c|c|g|g|c|c|g|g}
    \toprule 
        \multirow{2}{*}{\textbf{Dataset}} & \multirow{2}{*}{\textbf{Steps}} & \multicolumn{2}{c|}{\textbf{RDM-Absorb}} & \multicolumn{2}{c|}{\textbf{DNDM-Absorb}} & 
        \multicolumn{2}{c|}{\textbf{RDM-$k$-Absorb}} & \multicolumn{2}{c}{\textbf{DNDM-$k$-Absorb}} \\
        \cmidrule{3-10}
         &  & \textbf{BLEU} & \textbf{Avg NFE} & \cellcolor{white}\textbf{BLEU} & \cellcolor{white}\textbf{Avg NFE} & \textbf{BLEU} & \textbf{Avg NFE} & \cellcolor{white}\textbf{BLEU} & \cellcolor{white}\textbf{Avg NFE} \\
        \midrule
        \multirow{4}{*}{\centering$\mathtt{IWSLT14}$} & 25 & 31.58 & 25 & \textbf{32.43} & \textbf{13.81} & \textbf{34.50} & 25 & 34.14 & \textbf{13.81} \\
        
        & 50 & 31.80 & 50 & \textbf{32.63} & \textbf{19.24}  & \textbf{34.58} & 50 & 34.34 & \textbf{19.24}  \\
        
        & 1000 & 31.91 & 1000 & \textbf{32.93} & \textbf{31.08}  & \textbf{34.60} & 1000 & 34.56 & \textbf{31.08}  \\

        & $\infty $ & - & - &  \textbf{33.03} & \textbf{32.07}  & - & - & \textbf{34.65} & \textbf{32.07}  \\
        \midrule
        \multirow{4}{*}{\centering$\mathtt{WMT14}$} & 25 & 24.97 & 25  & \textbf{25.79} & \textbf{15.09} & \textbf{27.50} & 25  & 27.18 & \textbf{15.09} \\
        
        & 50 & 24.95 & 50 & \textbf{26.10} & \textbf{22.45}  & \textbf{27.73} & 50 & 27.66 & \textbf{22.45}  \\
        
        & 1000 & 25.22 & 1000 & \textbf{26.43} & \textbf{38.76}  & 27.75 & 1000 & \textbf{27.82} & \textbf{38.76}  \\

        & $\infty $ & -  & - &  \textbf{26.50} & \textbf{40.39}  & - & - & \textbf{27.50} & \textbf{40.39}  \\
        \midrule
        \multirow{4}{*}{\centering$\mathtt{WMT16}$} & 25 & 32.86 & 25 & \textbf{33.20} & \textbf{13.91} & 33.92 & 25  & \textbf{33.96} & \textbf{13.91} \\
        
        & 50 & 32.93 & 50  & \textbf{33.30} &   \textbf{20.95}  & 34.10 & 50 & \textbf{34.20} & \textbf{20.95} \\
        
        & 1000 & 33.25 & 1000 & \textbf{33.60} & \textbf{38.27} & \textbf{34.44} & 1000 & 34.38 & \textbf{38.27}  \\

        & $\infty $ & - & - &  \textbf{33.42} & \textbf{41.59}  &  - & - & \textbf{34.41} & \textbf{41.59}  \\
    \bottomrule
    \end{tabular}
    }
    \label{tab:top2}
\end{table}

\section{Discrete Non-Markov Diffusion Model with Top-k Transition Time (\method{}-K).} \label{Appendix: Top}

\begin{figure}[ht]
    \centering
    \begin{minipage}{0.49\textwidth}
    \vspace{-55pt}
        \begin{algorithm}[H]
        \caption{Sampling From DNDM (Version 2)}
        \begin{algorithmic}[1]\label{alg:DNDMv2}
        \REQUIRE Trained prediction function $p_{\btheta}$, $\qb_{\mathrm{noise}}$, $\cD_{\tau}$
        
        \FOR{$n = 1 \ldots N$} 
        \STATE Initiate each token $\xb_{T,n} \sim \qb_{\mathrm{noise}}$
        \STATE Initiate the transition time $\tau_{n} \sim \cD_{\tau}$
        \ENDFOR
        
        \STATE Collect transition time set $\cT = \{\tau_{n}\}_{n=1}^{N}$
        
        \FOR{$t = T \ldots 1$}
        \IF{$t \in \cT$}
        \STATE Generate $\tilde{\xb}_{0,1:N}$ from $p_{\btheta}(\cdot|\xb_{t, 1:N})$
        
        \FOR{$n = 1 \ldots N$} 
        \STATE Update $\xb_{t-1,n}$ if $\tau_{n}\geq t$
        \ENDFOR
        
        \ELSE  
        \STATE Update $\xb_{t-1,1:N} = \xb_{t,1:N}$
        \ENDIF
        \ENDFOR
        
        \STATE \textbf{Return} $\xb_{0,1:N}$
        \end{algorithmic}
        \end{algorithm}
    \end{minipage}
    \hfill
    \begin{minipage}{0.49\textwidth}
   
\begin{algorithm}[H]
\caption{Sampling From \method{}-K}
\begin{algorithmic}\label{alg:DNDMK}
\STATE \textbf{Input:} Trained prediction function $p_{\btheta}$, $\qb_{\mathrm{noise}}$ and $\cD_{\tau}$
\FOR{$n = 1 \ldots N$} 
\STATE Initiate each token $\xb_{T,n} \sim \qb_{\mathrm{noise}}$
\STATE Initiate the top K number $\{K_{t}\}$
\STATE Initiate an empty set $U = \{\}$, which includes the index of the tokens that have been updated.
\ENDFOR
\FOR{$t = T \ldots 1$}
\IF{$K_{t-1} > K_{t}$}
\STATE Calculate the $\cP = \mathrm{argtop}_{K_t}\{s_{t,n}\}_{n=1}^{N}$;
\STATE Generate $\tilde{\xb}_{0,1:N}$ from  $p_{\btheta}(\cdot|\xb_{t, 1:N})$
\STATE Update $\xb_{t-1,n} = \tilde{\xb}_{0,n}$ for all $n$ in the set $\cP$ but not in the set $U$ (top score but not updated yet)
\STATE Update the set $U$ by appending the index of the updated tokens
\ELSE  
\STATE Update $\xb_{t-1,1:N} = \xb_{t,1:N}$;
\ENDIF
\ENDFOR
\STATE \textbf{Return} $\xb_{0,1:N}$.
\end{algorithmic}
\end{algorithm}
    \end{minipage}
\end{figure}

Recent works have demonstrated that the quality of samples can be enhanced by utilizing supplementary information derived from the neural network \citep{ghazvininejad2019mask, savinov2021step, chang2022maskgit, he2022diffusionbert}. Very recently, \citet{zheng2023reparameterized} applied this idea in their RDM framework and can achieve significant performance improvement. Specifically, after decoding $\hat{\xb}_{0, 1:N}$ from transformer $p_{\theta}(\cdot |\xb_{t, 1:N})$, the score corresponding to this decoded token from the transformer's last layer, is also recorded and denote as $s_{t,n}$. Tokens with high scores are more likely to be selected for updates. 

Inspired by \citet{zheng2023reparameterized}, we introduce the discrete non-Markov discrete diffusion Model with top-K transition time (DNDM-K). Instead of directly determining which token gets updated at step $t$ by first drawing transition time $\tau \sim \cD_{\tau}$, we employ a two-step process. 
\begin{enumerate}[leftmargin=*,nosep]
\item We first compute $K_{t} = \sum_{n=1}^{N}\ind(\tau_{n} \geq t)$. $k_t$ represents how many tokens should be decoded at the current step.
\item Compare $K_{t-1}$ and $K_{t}$, if $K_{t-1} = K_{t}$. There is no transition time at time $t$, we just update $\xb_{t-1,1:N} = \xb_{t,1:N}$. If $K_{t-1} > K_{t}$, Then there exist transition time at time $t$, we calculate and select the indexes with top-$K_{t-1}$ scores. Then we update those tokens if it hasn't been updated yet.
\end{enumerate}
 Subsequently, we will only update those tokens with the highest $K_t$ score that hasn't been changed yet. Since the function evaluation occurs only when $K_{t}$ changes, \method{}-K can give an accelerated sampling algorithm. The details are presented in Algorithm~\ref{alg:DNDMK}.

\section{Experiment details} \label{Appendix: details}

\subsection{Conditional Text Generation}\label{app:cond_text_gen}


\noindent\textbf{Parameter choices.} In all experiments, the batch size is chosen to be $100$. For RDM and RDM-$k$, our hyperparameter settings follow the original paper \citep{zheng2023reparameterized} except for the batch size. Before the sampling, we used the saved checkpoint of trained models provided by the authors for discrete sampling experiments, and we trained the corresponding models for continuous sampling experiments. 

For finite-step \method{}, the transition times are determined by the schedule, and we approximate the schedule with a Beta distribution $\text{Beta}(\alpha, \beta)$ (please refer to Section~\ref{sec:fast} for detailed explanation). The $\alpha$ and $\beta$ values are selected by applying grid search on the validation sets. Based on the BLEU scores on the validation sets, we have selected $\text{Beta}(15,7)$ for Multinormial Diffusion on $\mathtt{IWSLT14}$, $\text{Beta}(3,3)$ for Absorbing Diffusion on both $\mathtt{IWSLT14}$ and $\mathtt{WMT14}$, $\text{Beta}(\blue{5,3})$ for Multinormial Diffusion on $\mathtt{WMT14}$ and Absorbing Diffusion on $\mathtt{WMT16}$, and $\text{Beta}(20,7)$ for Multinormial Diffusion on $\mathtt{WMT16}$. 

For infinite-steps (continuous-step) diffusion (\method{}-C), the transition timestamps are sampled from $\text{Beta}(\alpha, \beta)$, where the choice of $(\alpha, \beta)$ are chosen from $(100.0, 4.0)$ or $(17.0, 4.0)$, based on the performance comparison on the validation set. In the end we choose $\text{Beta}(17,4)$ for $\mathtt{IWSLT14}$ and $\text{Beta}(100, 4)$ for $\mathtt{WMT14}$ and $\mathtt{WMT16}$.


\blue{We conduct a performance comparison based on varying configurations of the Beta and Alpha distributions. The results of these comparisons are presented in Tables~\ref{tab:ablation50} and \ref{tab:ablation1000}. Furthermore, to evaluate the efficacy of discrete versus continuous step schemes, we also conduct an ablation study under the same set of parameters $(100, 4)$ in Table~\ref{tab:100_4}.}

\begin{table}[ht!]
    \centering
      \caption{\blue{BLEU scores on dataset $\mathtt{WMT16}$  from the ablation study of other different Beta$(\alpha, \beta)$ distributions of the transition time with 1000 \BB{sampling steps}.}}
\resizebox{\columnwidth}{!}{
\begin{tabular}{c|c|cccccccccc}
\toprule
\multirow{2}{*}{Model}         & \multirow{2}{*}{Alpha} & \multicolumn{10}{c}{Beta}                                                                                                                                                                                                                                                 \\ \cmidrule{3-12} 
                               &                        & \multicolumn{1}{c|}{3}     & \multicolumn{1}{c|}{5}     & \multicolumn{1}{c|}{7}     & \multicolumn{1}{c|}{9}     & \multicolumn{1}{c|}{11}    & \multicolumn{1}{c|}{13}    & \multicolumn{1}{c|}{15}    & \multicolumn{1}{c|}{17}    & \multicolumn{1}{c|}{19}    & 21    \\ \midrule
\multirow{3}{*}{DNDM-k-Multi}  & 3                      & \multicolumn{1}{c|}{33.47} & \multicolumn{1}{c|}{33.67} & \multicolumn{1}{c|}{33.62} & \multicolumn{1}{c|}{33.77} & \multicolumn{1}{c|}{$\mathbf{33.87}$} & \multicolumn{1}{c|}{33.64} & \multicolumn{1}{c|}{33.73} & \multicolumn{1}{c|}{33.60}  & \multicolumn{1}{c|}{33.68} & 33.56 \\
                               & 5                      & \multicolumn{1}{c|}{33.18} & \multicolumn{1}{c|}{33.47} & \multicolumn{1}{c|}{33.68} & \multicolumn{1}{c|}{33.53} & \multicolumn{1}{c|}{33.71} & \multicolumn{1}{c|}{33.69} & \multicolumn{1}{c|}{33.73} & \multicolumn{1}{c|}{33.72} & \multicolumn{1}{c|}{33.74} & 33.82 \\
                               & 7                      & \multicolumn{1}{c|}{32.99} & \multicolumn{1}{c|}{33.20}  & \multicolumn{1}{c|}{33.49} & \multicolumn{1}{c|}{33.56} & \multicolumn{1}{c|}{33.58} & \multicolumn{1}{c|}{33.61} & \multicolumn{1}{c|}{33.67} & \multicolumn{1}{c|}{33.72} & \multicolumn{1}{c|}{33.78} & 33.83 \\ \midrule
\multirow{3}{*}{DNDM-Multi}    & 3                      & \multicolumn{1}{c|}{32.73} & \multicolumn{1}{c|}{32.66} & \multicolumn{1}{c|}{32.74} & \multicolumn{1}{c|}{32.82} & \multicolumn{1}{c|}{32.77} & \multicolumn{1}{c|}{$\mathbf{32.92}$} & \multicolumn{1}{c|}{32.80}  & \multicolumn{1}{c|}{32.81} & \multicolumn{1}{c|}{32.76} & 32.86 \\
                               & 5                      & \multicolumn{1}{c|}{32.32} & \multicolumn{1}{c|}{32.62} & \multicolumn{1}{c|}{32.70}  & \multicolumn{1}{c|}{32.80}  & \multicolumn{1}{c|}{32.83} & \multicolumn{1}{c|}{32.83} & \multicolumn{1}{c|}{32.90}  & \multicolumn{1}{c|}{32.95} & \multicolumn{1}{c|}{32.91} & 32.87 \\
                               & 7                      & \multicolumn{1}{c|}{32.35} & \multicolumn{1}{c|}{32.35} & \multicolumn{1}{c|}{32.53} & \multicolumn{1}{c|}{32.67} & \multicolumn{1}{c|}{32.75} & \multicolumn{1}{c|}{32.78} & \multicolumn{1}{c|}{32.86} & \multicolumn{1}{c|}{32.80}  & \multicolumn{1}{c|}{32.86} & 32.88 \\ \midrule
\multirow{3}{*}{DNDM-k-Absorb} & 3                      & \multicolumn{1}{c|}{34.19} & \multicolumn{1}{c|}{34.38} & \multicolumn{1}{c|}{34.34} & \multicolumn{1}{c|}{34.22} & \multicolumn{1}{c|}{34.21} & \multicolumn{1}{c|}{34.24} & \multicolumn{1}{c|}{34.07} & \multicolumn{1}{c|}{34.31} & \multicolumn{1}{c|}{$\mathbf{34.42}$} & 34.36 \\
                               & 5                      & \multicolumn{1}{c|}{32.15} & \multicolumn{1}{c|}{33.99} & \multicolumn{1}{c|}{34.29} & \multicolumn{1}{c|}{34.30}  & \multicolumn{1}{c|}{34.29} & \multicolumn{1}{c|}{34.40}  & \multicolumn{1}{c|}{34.40}  & \multicolumn{1}{c|}{34.24} & \multicolumn{1}{c|}{34.30}  & 34.22 \\
                               & 7                      & \multicolumn{1}{c|}{27.67} & \multicolumn{1}{c|}{32.87} & \multicolumn{1}{c|}{33.94} & \multicolumn{1}{c|}{34.28} & \multicolumn{1}{c|}{34.27} & \multicolumn{1}{c|}{34.38} & \multicolumn{1}{c|}{34.31} & \multicolumn{1}{c|}{34.29} & \multicolumn{1}{c|}{34.38} & 34.40  \\ \midrule
\multirow{3}{*}{DNDM-Absorb}   & 3                      & \multicolumn{1}{c|}{33.53} & \multicolumn{1}{c|}{33.60}  & \multicolumn{1}{c|}{33.67} & \multicolumn{1}{c|}{33.71} & \multicolumn{1}{c|}{33.71} & \multicolumn{1}{c|}{33.70}  & \multicolumn{1}{c|}{33.58} & \multicolumn{1}{c|}{33.63} & \multicolumn{1}{c|}{33.53} & 33.54 \\
                               & 5                      & \multicolumn{1}{c|}{32.70}  & \multicolumn{1}{c|}{33.33} & \multicolumn{1}{c|}{33.52} & \multicolumn{1}{c|}{33.60}  & \multicolumn{1}{c|}{33.66} & \multicolumn{1}{c|}{33.73} & \multicolumn{1}{c|}{33.70}  & \multicolumn{1}{c|}{$\mathbf{33.74}$} & \multicolumn{1}{c|}{33.72} & $\mathbf{33.74}$ \\
                               & 7                      & \multicolumn{1}{c|}{30.56} & \multicolumn{1}{c|}{32.65} & \multicolumn{1}{c|}{33.28} & \multicolumn{1}{c|}{33.37} & \multicolumn{1}{c|}{33.51} & \multicolumn{1}{c|}{33.52} & \multicolumn{1}{c|}{33.61} & \multicolumn{1}{c|}{33.67} & \multicolumn{1}{c|}{33.63} & 33.67 \\ \bottomrule
\end{tabular}    }
    \label{tab:ablation1000}
\end{table}
\begin{table}[ht!]
 \caption{\blue{BLEU scores on dataset $\mathtt{WMT16}$ from the ablation study of other different Beta$(\alpha, \beta)$ distributions of the transition time with 50 \BB{sampling steps}.}}
    \centering
    \resizebox{\columnwidth}{!}{
\begin{tabular}{c|c|cccccccccc}
\toprule
\multirow{2}{*}{Model}         & \multirow{2}{*}{Alpha} & \multicolumn{10}{c}{Beta}                                                                                                                                                                                                                                          \\ \cmidrule{3-12}  
                               &                               & \multicolumn{1}{c|}{3}     & \multicolumn{1}{c|}{5}     & \multicolumn{1}{c|}{7}     & \multicolumn{1}{c|}{9}     & \multicolumn{1}{c|}{11}    & \multicolumn{1}{c|}{13}    & \multicolumn{1}{c|}{15}    & \multicolumn{1}{c|}{17}    & \multicolumn{1}{c|}{19}    & 21    \\ \midrule
\multirow{3}{*}{DNDM-k-Multi}  & 3                             & \multicolumn{1}{c|}{33.31} & \multicolumn{1}{c|}{33.47} & \multicolumn{1}{c|}{33.39} & \multicolumn{1}{c|}{33.48} & \multicolumn{1}{c|}{33.29} & \multicolumn{1}{c|}{33.23} & \multicolumn{1}{c|}{33.25} & \multicolumn{1}{c|}{33.27} & \multicolumn{1}{c|}{33.11} & 33.17 \\
                               & 5                             & \multicolumn{1}{c|}{32.93} & \multicolumn{1}{c|}{33.28} & \multicolumn{1}{c|}{33.29} & \multicolumn{1}{c|}{$\mathbf{33.58}$} & \multicolumn{1}{c|}{33.45} & \multicolumn{1}{c|}{33.21} & \multicolumn{1}{c|}{33.40}  & \multicolumn{1}{c|}{33.49} & \multicolumn{1}{c|}{33.16} & 33.19 \\
                               & 7                             & \multicolumn{1}{c|}{32.61} & \multicolumn{1}{c|}{32.98} & \multicolumn{1}{c|}{33.31} & \multicolumn{1}{c|}{33.20}  & \multicolumn{1}{c|}{33.27} & \multicolumn{1}{c|}{33.41} & \multicolumn{1}{c|}{33.39} & \multicolumn{1}{c|}{33.53} & \multicolumn{1}{c|}{33.35} & 33.08 \\ \midrule
\multirow{3}{*}{DNDM-Multi}    & 3                             & \multicolumn{1}{c|}{32.63} & \multicolumn{1}{c|}{32.46} & \multicolumn{1}{c|}{32.44} & \multicolumn{1}{c|}{32.56} & \multicolumn{1}{c|}{32.59} & \multicolumn{1}{c|}{32.55} & \multicolumn{1}{c|}{32.37} & \multicolumn{1}{c|}{32.33} & \multicolumn{1}{c|}{32.22} & 32.23 \\
                               & 5                             & \multicolumn{1}{c|}{32.31} & \multicolumn{1}{c|}{32.43} & \multicolumn{1}{c|}{32.66} & \multicolumn{1}{c|}{32.64} & \multicolumn{1}{c|}{$\mathbf{32.68}$} & \multicolumn{1}{c|}{32.55} & \multicolumn{1}{c|}{32.55} & \multicolumn{1}{c|}{32.44} & \multicolumn{1}{c|}{32.35} & 32.30  \\
                               & 7                             & \multicolumn{1}{c|}{31.95} & \multicolumn{1}{c|}{32.11} & \multicolumn{1}{c|}{32.22} & \multicolumn{1}{c|}{32.26} & \multicolumn{1}{c|}{32.54} & \multicolumn{1}{c|}{32.52} & \multicolumn{1}{c|}{32.50}  & \multicolumn{1}{c|}{32.58} & \multicolumn{1}{c|}{32.48} & 32.41 \\ \midrule
\multirow{3}{*}{DNDM-k-Absorb} & 3                             & \multicolumn{1}{c|}{34.05} & \multicolumn{1}{c|}{34.2}  & \multicolumn{1}{c|}{34.31} & \multicolumn{1}{c|}{34.37} & \multicolumn{1}{c|}{34.15} & \multicolumn{1}{c|}{34.05} & \multicolumn{1}{c|}{34.06} & \multicolumn{1}{c|}{33.77} & \multicolumn{1}{c|}{33.81} & 33.84 \\
                               & 5                             & \multicolumn{1}{c|}{32.30}  & \multicolumn{1}{c|}{34.08} & \multicolumn{1}{c|}{34.30}  & \multicolumn{1}{c|}{$\mathbf{34.38}$} & \multicolumn{1}{c|}{34.26} & \multicolumn{1}{c|}{34.23} & \multicolumn{1}{c|}{34.09} & \multicolumn{1}{c|}{34.06} & \multicolumn{1}{c|}{34.02} & 34.13 \\
                               & 7                             & \multicolumn{1}{c|}{27.39} & \multicolumn{1}{c|}{32.64} & \multicolumn{1}{c|}{33.71} & \multicolumn{1}{c|}{34.18} & \multicolumn{1}{c|}{34.02} & \multicolumn{1}{c|}{34.33} & \multicolumn{1}{c|}{34.31} & \multicolumn{1}{c|}{34.17} & \multicolumn{1}{c|}{34.12} & 34.19 \\ \midrule
\multirow{3}{*}{DNDM-Absorb}   & 3                             & \multicolumn{1}{c|}{33.26} & \multicolumn{1}{c|}{33.30}  & \multicolumn{1}{c|}{33.29} & \multicolumn{1}{c|}{33.24} & \multicolumn{1}{c|}{33.23} & \multicolumn{1}{c|}{32.97} & \multicolumn{1}{c|}{33.06} & \multicolumn{1}{c|}{32.85} & \multicolumn{1}{c|}{32.89} & 32.63 \\
                               & 5                             & \multicolumn{1}{c|}{32.47} & \multicolumn{1}{c|}{33.08} & \multicolumn{1}{c|}{33.31} & \multicolumn{1}{c|}{33.22} & \multicolumn{1}{c|}{$\mathbf{33.41}$} & \multicolumn{1}{c|}{33.25} & \multicolumn{1}{c|}{33.15} & \multicolumn{1}{c|}{33.27} & \multicolumn{1}{c|}{33.04} & 32.98 \\
                               & 7                             & \multicolumn{1}{c|}{30.34} & \multicolumn{1}{c|}{32.27} & \multicolumn{1}{c|}{33.27} & \multicolumn{1}{c|}{33.03} & \multicolumn{1}{c|}{33.16} & \multicolumn{1}{c|}{33.14} & \multicolumn{1}{c|}{33.27} & \multicolumn{1}{c|}{33.11} & \multicolumn{1}{c|}{33.11} & 33.07 \\ \bottomrule
\end{tabular}    }
    \label{tab:ablation50}
\end{table}

\begin{table}[ht!]
  \caption{\blue{The BLEU scores on dataset $\mathtt{WMT16}$ 
 with Beta(100,4) as the transition time schedule for discrete sampling or the distribution to sample transition timestamps for continuous sampling.}}
\centering
\begin{tabular}{|c|l|l|l|l|}
\hline
Steps    & DNDM-k-multi & DNDM-k-absorb & DNDM-multi & DNDM-absorb \\ \hline
50       & 31.60         & 31.74         & 30.39      & 29.69      \\
1000     & 33.59        & 34.37         & 32.87      & 33.52      \\
$\infty $ & 33.86        & 34.41         & 32.91      & 33.42      \\ \hline
\end{tabular}    

    \label{tab:100_4}
\end{table}

\noindent\textbf{Continuous time vs discrete time diffusions. }
To test our hypothesis that the continuous-time sampler will produce more accurate results in reverse sampling if our $\xb_0$ estimator consistently approximates the true $\xb_0$ over time, we conduct various sampling experiments using a shared pre-trained neural network.
For discrete-time sampling, we consider three cases: $T=25, 50, 1000$. In each case, we rescale the interval $[0, T]$ to $[0, 50]$ and divide it into $T$ fractions. In contrast, for continuous-time sampling, we directly sample from a continuous distribution over the interval $[0, 50]$ without any partitioning.

\noindent\textbf{Training approach.} 
In machine translation tasks, the neural network is designed to learn $q(\xb_0|\xb_t, \zb)$, where $\zb$ represents the embedding of the source text obtained using transformer encoder layers. For a fair comparison, we employ the same neural network structure as our baseline, with detailed architecture specifications available in Section E.2 of~\citet{zheng2023reparameterized}.
Furthermore, given that the primary focus of this paper is the speed and effectiveness of our sampling algorithm, we omit the training procedure and instead use a state-of-the-art diffusion-based pretrained checkpoint from~\citet{zheng2023reparameterized}.
In the Appendix, we present additional results of continuous sampling based on a continuously trained checkpoint. In this setting, we rescale our network input to the interval $[0, 1]$ and uniformly sample from this interval. The rest of the architecture follows that of~\citet{zheng2023reparameterized}.

\blue{
\noindent\textbf{Performance on WMT14.} Our work primarily focuses on the sampling process, and for the training, we utilized a pretrained checkpoint trained on 50 steps. In our sampling experiments we noticed that our method does not work ideally on $\mathtt{WMT14}$, this could be possibly attributed to the fact that the training performance on $\mathtt{WMT14}$ was not ideal.
Specifically, when we performed sampling using 1000 steps, the network was trained with exposure to only 50 time steps, specifically at intervals of 20 (0, 20, 40, ..., 980, 1000). As a result, when we apply our model to generation using 1000 steps, the checkpoint NN has only been explicitly trained on these intervals. While we generally assume that the network can still provide a good estimate for the untrained steps, this might not hold under some hard scenarios.
Considering the longer training time and poorer performance of $\mathtt{WMT14}$, it is likely that the training performance is insufficient for us to rely on those unseen steps. In a word, the model's trained checkpoint may not be robust enough to effectively handle unseen steps, especially for timesteps 1000 or infinite timesteps.
}

\subsection{Unconditional Text Generation}

\textbf{Parameter choices.} We recover the checkpoints of the multinomial diffusion model employing the provided code by \citet{hoogeboom2021argmax}. We train 12-layer Transformers for both $\mathtt{text8}$ and enwik8 datasets for 500 epochs with the cosine schedule. For the $\mathtt{text8}$ dataset, we utilize a training batch size of 256, while for the $\mathtt{enwik8}$ dataset, we use a batch size of 128.  During training, we employ a learning rate of 0.0001, a weight decay parameter of 0.99, and the Adam optimizer.

\section{Additional Experiments}\label{sec:add_exp}
In this section, we present additional experimental results. We begin by plotting the relationship between computational time and the number of sampling steps, using the absorbing diffusion in $\mathtt{IWSLT14}$ as an example. Figure~\ref{fig:step-time} displays the growth of computational time for absorbing diffusion (yellow and orange lines), RDM-absorbing diffusion, and our model DNDM-Absorb and DNDM-T-Absorb (green and blue lines).
\begin{figure}[ht]
    \centering
    \includegraphics[width=1.0\textwidth]{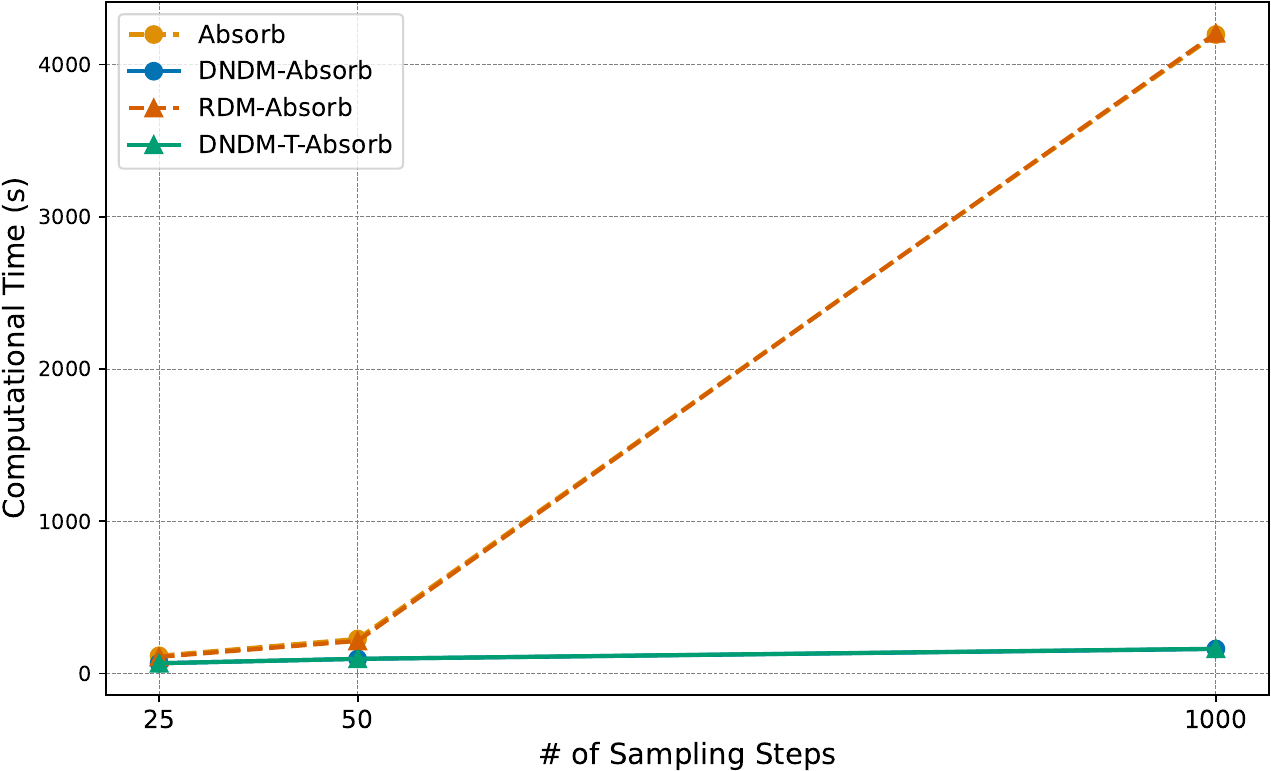} 
    \caption{The growth of computational time with the increase of the sampling steps}
    \label{fig:step-time}
\end{figure}
We see from Figure~\ref{fig:step-time} that previous algorithms, including absorbing diffusion and RDM-absorbing diffusion all suffer from linear growth of computational time.

\subsection{Continuous Training}\label{sec:cont}

In Section~\ref{sec:exp_text}, we introduce the DNDM-C algorithm, designed for continuous-time, over discrete-time algorithms. However, this algorithm assumes that we have learned a sufficiently accurate neural network at any timestamp $t \in [0, 1]$. Using the checkpoint trained with 50 discrete time partitions might not suffice for the purpose of continuous sampling.
In this section, we investigate the performance of continuous sampling when training is also done continuously.

\begin{table}[ht!]
    \centering
        \caption{Continuous Training + Continuous Sampling}
    \begin{tabular}{c|c|cccc}
    \toprule
    \multirow{2}{*}{Dataset} & \multirow{2}{*}{Step scheme} & \multicolumn{2}{c|}{C-DNDM-Multi} & \multicolumn{2}{c}{C-DNDM-Absorb} \\ \cmidrule{3-6}
                             &                              & \multicolumn{1}{c|}{Default} & Top-k & \multicolumn{1}{c|}{Default} & Top-k \\ \midrule
    \texttt{IWSLT14}         & Continuous                   & \multicolumn{1}{c|}{\textbf{32.07}} & \textbf{33.57} & \multicolumn{1}{c|}{32.80} & 34.52 \\ \midrule
    \texttt{WMT16}           & Continuous                   & \multicolumn{1}{c|}{\textbf{33.48}} & 33.71 & \multicolumn{1}{c|}{\textbf{33.50}} & 34.36 \\ \bottomrule
    \end{tabular}

    \label{tab:cont_and_cont}
\end{table}

In Table~\ref{tab:cont_and_cont}, we summarize the performance of DNDM-C based on a neural network estimated continuously during training time. This involves sampling time uniformly from $[0, 1]$ during training, and the forward process follows~\eqref{eq:continuous_forward} in Section~\ref{sec:continuous}. The training objective remains the same as in discrete-time training.
In Table~\ref{tab:cont_and_cont} we list the result of IWSLT14 and WMT16 with continuous training followed by continuous sampling.
In addition, we compare the value with the corresponding value during discrete training and continuous sampling in Section~\ref{sec:exp_text} and mark every item that improves in bold. As demonstrated in Table~\ref{tab:cont_and_cont}, there is room for enhancement in the overall sampling scores by training the neural network in a complete space of timestamps.

\blue{
\subsection{Comparison with more generative models}
}
\blue{
In our study, a key aspect of evaluating our fast discrete generative model involves comparisons with prior work known for speed in sampling with minimal steps. Specifically, we draw a direct comparison with the Mask-Predict~\citep{ghazvininejad2019mask}, which is notable for its ability to generate high-quality results within just 10 iterations. The results are shown in Table~\ref{tab:maskpredict_wmt16}. All experiments were conducted on the same GPU and within the same machine setup.
}

\begin{table}[ht!]
    \caption{\blue{The performance comparison on $\mathtt{WMT16}$ of DNDM with Mask-Predict~\citep{ghazvininejad2019mask}. We align the number of sampling steps used in Mask-Predict with a similar number of function evaluations (NFE) in our DNDM algorithm. We see that our Algorithm runs faster, with better BLEU score.}}
    \label{tab:maskpredict_wmt16}
    \centering
    \begin{tabular}{|c|c|c|c|c|c|c|c|c|c|c|c|}
    \hline  \multicolumn{3}{|c|}{Mask-Predict} &\multicolumn{4}{|c|}{DNDM-Absorb} & \multicolumn{4}{|c|}{DNDM-k-Absorb} \\
        \hhline{---------}
           \textbf{Steps} & \textbf{BLEU} & \textbf{Time} & \textbf{Steps} & \textbf{BLEU} & \textbf{Time} &\textbf{NFE} & \textbf{Steps} & \textbf{BLEU} & \textbf{Time}  & \textbf{NFE}
           \\\hline 10& 33.08& 49.25 & 25 & 33.20 & 41.2& 13.91 & 25& 33.96 & 41.4 & 13.91\\
        \hline  15 & 33.06 & 67.94 & 50 & 33.30 &  62.5 &20.95 &50 & 34.20 & 62.7 & 20.95\\
        \hline  25 & 33.16 & 111.89 & 1000 & 33.60 & 121.3 & 38.27 & 1000 & 34.38 & 122.7 & 38.27\\
        \hline 40 & 33.10 & 169.95 & $\infty$ & 33.42 & 121.8 &41.59 & $\infty$ & 34.41 & 121.9 & 41.59\\
        \hline
    \end{tabular}
\end{table}

\subsection{Samples from the multinomial text models}

\noindent \textbf{Conditional Generation.} For DNDM-Multi trained on $\mathtt{IWSLT14}$, we provide a full generation process with 100 steps in Figure~\ref{fig:iwslt14_generation}. A token ending with $\texttt{@@}$ indicates it is an incomplete word; it will be concatenated with the following token to form a complete word. For example, $``\texttt{fel@@ lo@@ ws}''$ means $``\texttt{fellows}''$.
We can see that after $t=39$, the generate sentence converges.
\begin{figure}[ht]
\centering
\blue{
\begin{minipage}{\linewidth}
\small{
\textbf{t = 100} \\
\textbf{[noise]} \textbf{[noise]} \textbf{[noise]} \textbf{[noise]} \textbf{[noise]} \textbf{[noise]} \textbf{[noise]} \textbf{[noise]} \textbf{[noise]} \textbf{[noise]} \textbf{[noise]} \textbf{[noise]} \textbf{[noise]} \textbf{[noise]} \textbf{[noise]} \textbf{[noise]} \textbf{[noise]} \textbf{[noise]} \textbf{[noise]} \textbf{[noise]} \\[3pt]
\textbf{t = 79} \\
\textbf{[noise]} \textbf{[noise]} \textbf{[noise]} \textbf{[noise]} \textbf{[noise]} \textbf{[noise]} \textbf{[noise]} \textbf{[noise]} \textbf{[noise]} \textbf{[noise]} \textbf{[noise]} \textbf{[noise]} \textbf{[noise]} \textbf{[noise]} \textbf{[noise]} \textbf{[noise]} \textbf{[noise]} \texttt{year} \textbf{[noise]} \\[3pt]
\textbf{t = 78} \\
\textbf{[noise]} \textbf{[noise]} \textbf{[noise]} \textbf{[noise]} \textbf{[noise]} \textbf{[noise]} \textbf{[noise]} \textbf{[noise]} \textbf{[noise]} \texttt{we} \textbf{[noise]} \textbf{[noise]} \textbf{[noise]} \textbf{[noise]} \textbf{[noise]} \textbf{[noise]} \textbf{[noise]} \textbf{[noise]} \texttt{year} \textbf{[noise]} \\[3pt]
\textbf{t = 77} \\
\textbf{[noise]} \textbf{[noise]} \textbf{[noise]} \textbf{[noise]} \textbf{[noise]} \textbf{[noise]} \textbf{[noise]} \textbf{[noise]} \texttt{and we} \textbf{[noise]} \textbf{[noise]} \textbf{[noise]} \textbf{[noise]} \textbf{[noise]} \textbf{[noise]} \textbf{[noise]} \textbf{[noise]} \texttt{year} \textbf{[noise]} \\[3pt]
\textbf{t = 75} \\
\textbf{[noise]} \textbf{[noise]} \textbf{[noise]} \textbf{[noise]} \textbf{[noise]} \textbf{[noise]} \textbf{[noise]} \textbf{[noise]} \texttt{and we} \textbf{[noise]} \textbf{[noise]} \textbf{[noise]} \textbf{[noise]} \texttt{govern@@} \textbf{[noise]} \textbf{[noise]} \texttt{year} \textbf{[noise]} \\[3pt]
\textbf{t = 74} \\
\texttt{we} \textbf{[noise]} \textbf{[noise]} \textbf{[noise]} \texttt{lo@@} \textbf{[noise]} \textbf{[noise]} \textbf{[noise]} \texttt{and we} \textbf{[noise]} \textbf{[noise]} \textbf{[noise]} \textbf{[noise]} \texttt{govern@@} \textbf{[noise]} \textbf{[noise]} \texttt{year} \textbf{[noise]} \\[3pt]
\textbf{t = 73} \\
\texttt{we} \textbf{[noise]} \textbf{[noise]} \texttt{fel@@ lo@@} \textbf{[noise]} \textbf{[noise]} \textbf{[noise]} \texttt{and we let} \textbf{[noise]} \textbf{[noise]} \textbf{[noise]} \textbf{[noise]} \texttt{govern@@} \textbf{[noise]} \textbf{[noise]} \texttt{year} \textbf{[noise]} \\[3pt]
\textbf{t = 71} \\
\texttt{we} \textbf{[noise]} \textbf{[noise]} \texttt{fel@@ lo@@} \textbf{[noise]} \textbf{[noise]} \textbf{[noise]} \texttt{and we let} \textbf{[noise]} \textbf{[noise]} \textbf{[noise]} \textbf{[noise]} \texttt{govern@@} \textbf{[noise]} \texttt{every year} \textbf{[noise]} \\[3pt]
\textbf{t = 67} \\
\texttt{we} \textbf{[noise]} \textbf{[noise]} \texttt{fel@@ lo@@} \textbf{[noise]} \textbf{[noise]} \textbf{[noise]} \texttt{and we let them} \textbf{[noise]} \textbf{[noise]} \texttt{city govern@@} \textbf{[noise]} \texttt{every year} . \\[3pt]
\textbf{t = 66} \\
\texttt{we} \textbf{[noise]} \textbf{[noise]} \texttt{fel@@ lo@@ ws} \textbf{[noise]} \textbf{[noise]} \texttt{and we let them work} \textbf{[noise]} \texttt{city govern@@} \textbf{[noise]} \texttt{every year} . \\[3pt]
\textbf{t = 64} \\
\texttt{we} \textbf{[noise]} \textbf{[noise]} \texttt{fel@@ lo@@ ws} \textbf{[noise]} \textbf{[noise]} \texttt{and we let them work} \textbf{[noise]} \texttt{city govern@@ ance every year} . \\[3pt]
\textbf{t = 61} \\
\texttt{we} \textbf{[noise]} \textbf{[noise]} \texttt{fel@@ lo@@ ws} \textbf{[noise]} \textbf{[noise]} \texttt{and we let them work with city govern@@ ance every year} . \\[3pt]
\textbf{t = 60} \\
\texttt{we} \textbf{[noise]} \textbf{[noise]} \texttt{fel@@ lo@@ ws} \textbf{[noise]} \texttt{year and we let them work with city govern@@ ance every year} . \\[3pt]
\textbf{t = 58} \\
\texttt{we} \textbf{[noise]} \textbf{[noise]} \texttt{fel@@ lo@@ ws every year and we let them work with city govern@@ ance every year} . \\[3pt]
\textbf{t = 52} \\
\texttt{we} \textbf{[noise]} \texttt{some fel@@ lo@@ ws every year and we let them work with city govern@@ ance every year} . \\[3pt]
\textbf{t = 39} \\
\texttt{we choose some fel@@ lo@@ ws every year and we let them work with city governance every year.} \\[3pt]
\textbf{t = 0} \\
\texttt{we choose some fel@@ lo@@ ws every year and we let them work with city governance every year.} \\
}
\end{minipage}}
\caption{Text in the Generation Process}
\label{fig:iwslt14_generation}
\end{figure}


\newpage

\section*{NeurIPS Paper Checklist}

\begin{enumerate}

\item {\bf Claims}
    \item[] Question: Do the main claims made in the abstract and introduction accurately reflect the paper's contributions and scope?
    \item[] Answer: \answerYes{} 
    \item[] Justification: The contributions are summarized as three points at the end of the introduction. The scope is fast sampling via discrete non-Markov diffusion
models, provided in the abstract.
    \item[] Guidelines:
    \begin{itemize}
        \item The answer NA means that the abstract and introduction do not include the claims made in the paper.
        \item The abstract and/or introduction should clearly state the claims made, including the contributions made in the paper and important assumptions and limitations. A No or NA answer to this question will not be perceived well by the reviewers. 
        \item The claims made should match theoretical and experimental results, and reflect how much the results can be expected to generalize to other settings. 
        \item It is fine to include aspirational goals as motivation as long as it is clear that these goals are not attained by the paper. 
    \end{itemize}

\item {\bf Limitations}
    \item[] Question: Does the paper discuss the limitations of the work performed by the authors?
    \item[] Answer: \answerYes{} 
    \item[] Justification: We add a limitation section in front of the Appendix.
    \item[] Guidelines:
    \begin{itemize}
        \item The answer NA means that the paper has no limitation while the answer No means that the paper has limitations, but those are not discussed in the paper. 
        \item The authors are encouraged to create a separate "Limitations" section in their paper.
        \item The paper should point out any strong assumptions and how robust the results are to violations of these assumptions (e.g., independence assumptions, noiseless settings, model well-specification, asymptotic approximations only holding locally). The authors should reflect on how these assumptions might be violated in practice and what the implications would be.
        \item The authors should reflect on the scope of the claims made, e.g., if the approach was only tested on a few datasets or with a few runs. In general, empirical results often depend on implicit assumptions, which should be articulated.
        \item The authors should reflect on the factors that influence the performance of the approach. For example, a facial recognition algorithm may perform poorly when image resolution is low or images are taken in low lighting. Or a speech-to-text system might not be used reliably to provide closed captions for online lectures because it fails to handle technical jargon.
        \item The authors should discuss the computational efficiency of the proposed algorithms and how they scale with dataset size.
        \item If applicable, the authors should discuss possible limitations of their approach to address problems of privacy and fairness.
        \item While the authors might fear that complete honesty about limitations might be used by reviewers as grounds for rejection, a worse outcome might be that reviewers discover limitations that aren't acknowledged in the paper. The authors should use their best judgment and recognize that individual actions in favor of transparency play an important role in developing norms that preserve the integrity of the community. Reviewers will be specifically instructed to not penalize honesty concerning limitations.
    \end{itemize}

\item {\bf Theory Assumptions and Proofs}
    \item[] Question: For each theoretical result, does the paper provide the full set of assumptions and a complete (and correct) proof?
    \item[] Answer: \answerYes{} 
    \item[] Justification: Theorems 3.1, 3.5, and D.1 are clearly stated, well-organized with consistent numbering, and supported by rigorous proofs that establish their validity.
    \item[] Guidelines:
    \begin{itemize}
        \item The answer NA means that the paper does not include theoretical results. 
        \item All the theorems, formulas, and proofs in the paper should be numbered and cross-referenced.
        \item All assumptions should be clearly stated or referenced in the statement of any theorems.
        \item The proofs can either appear in the main paper or the supplemental material, but if they appear in the supplemental material, the authors are encouraged to provide a short proof sketch to provide intuition. 
        \item Inversely, any informal proof provided in the core of the paper should be complemented by formal proofs provided in appendix or supplemental material.
        \item Theorems and Lemmas that the proof relies upon should be properly referenced. 
    \end{itemize}

    \item {\bf Experimental Result Reproducibility}
    \item[] Question: Does the paper fully disclose all the information needed to reproduce the main experimental results of the paper to the extent that it affects the main claims and/or conclusions of the paper (regardless of whether the code and data are provided or not)?
    \item[] Answer: \answerYes{} 
    \item[] Justification: The paper provides detailed information on the experimental setup, model architecture, and training procedures. The authors have submitted their training code along with the main paper, which enables reproducibility of the main results. The code and detailed instructions allow other researchers to replicate the key findings of the paper. 
    \item[] Guidelines: In addition to experiment and implementation details on appendix, we submit our training and evaluation codes when submtting our main paper.
    \begin{itemize}
        \item The answer NA means that the paper does not include experiments.
        \item If the paper includes experiments, a No answer to this question will not be perceived well by the reviewers: Making the paper reproducible is important, regardless of whether the code and data are provided or not.
        \item If the contribution is a dataset and/or model, the authors should describe the steps taken to make their results reproducible or verifiable. 
        \item Depending on the contribution, reproducibility can be accomplished in various ways. For example, if the contribution is a novel architecture, describing the architecture fully might suffice, or if the contribution is a specific model and empirical evaluation, it may be necessary to either make it possible for others to replicate the model with the same dataset, or provide access to the model. In general. releasing code and data is often one good way to accomplish this, but reproducibility can also be provided via detailed instructions for how to replicate the results, access to a hosted model (e.g., in the case of a large language model), releasing of a model checkpoint, or other means that are appropriate to the research performed.
        \item While NeurIPS does not require releasing code, the conference does require all submissions to provide some reasonable avenue for reproducibility, which may depend on the nature of the contribution. For example
        \begin{enumerate}
            \item If the contribution is primarily a new algorithm, the paper should make it clear how to reproduce that algorithm.
            \item If the contribution is primarily a new model architecture, the paper should describe the architecture clearly and fully.
            \item If the contribution is a new model (e.g., a large language model), then there should either be a way to access this model for reproducing the results or a way to reproduce the model (e.g., with an open-source dataset or instructions for how to construct the dataset).
            \item We recognize that reproducibility may be tricky in some cases, in which case authors are welcome to describe the particular way they provide for reproducibility. In the case of closed-source models, it may be that access to the model is limited in some way (e.g., to registered users), but it should be possible for other researchers to have some path to reproducing or verifying the results.
        \end{enumerate}
    \end{itemize}

\item {\bf Open access to data and code}
    \item[] Question: Does the paper provide open access to the data and code, with sufficient instructions to faithfully reproduce the main experimental results, as described in supplemental material?
    \item[] Answer: \answerYes{} 
    \item[] Justification: All the datasets are public and can be open accessed. Our codebase will be available in public upon acceptance. 
    
    \item[] Guidelines:
    \begin{itemize}
        \item The answer NA means that paper does not include experiments requiring code.
        \item Please see the NeurIPS code and data submission guidelines (\url{https://nips.cc/public/guides/CodeSubmissionPolicy}) for more details.
        \item While we encourage the release of code and data, we understand that this might not be possible, so “No” is an acceptable answer. Papers cannot be rejected simply for not including code, unless this is central to the contribution (e.g., for a new open-source benchmark).
        \item The instructions should contain the exact command and environment needed to run to reproduce the results. See the NeurIPS code and data submission guidelines (\url{https://nips.cc/public/guides/CodeSubmissionPolicy}) for more details.
        \item The authors should provide instructions on data access and preparation, including how to access the raw data, preprocessed data, intermediate data, and generated data, etc.
        \item The authors should provide scripts to reproduce all experimental results for the new proposed method and baselines. If only a subset of experiments are reproducible, they should state which ones are omitted from the script and why.
        \item At submission time, to preserve anonymity, the authors should release anonymized versions (if applicable).
        \item Providing as much information as possible in supplemental material (appended to the paper) is recommended, but including URLs to data and code is permitted.
    \end{itemize}

\item {\bf Experimental Setting/Details}
    \item[] Question: Does the paper specify all the training and test details (e.g., data splits, hyperparameters, how they were chosen, type of optimizer, etc.) necessary to understand the results?
    \item[] Answer: \answerYes{} 
    \item[] Justification: We provide these details on Appendix (D, E, F).
    \item[] Guidelines:
    \begin{itemize}
        \item The answer NA means that the paper does not include experiments.
        \item The experimental setting should be presented in the core of the paper to a level of detail that is necessary to appreciate the results and make sense of them.
        \item The full details can be provided either with the code, in appendix, or as supplemental material.
    \end{itemize}

\item {\bf Experiment Statistical Significance}
    \item[] Question: Does the paper report error bars suitably and correctly defined or other appropriate information about the statistical significance of the experiments?
    \item[] Answer: \answerYes{} 
    \item[] Justification: Confidence intervals are provided in the experiments.
    \item[] Guidelines:
    \begin{itemize}
        \item The answer NA means that the paper does not include experiments.
        \item The authors should answer "Yes" if the results are accompanied by error bars, confidence intervals, or statistical significance tests, at least for the experiments that support the main claims of the paper.
        \item The factors of variability that the error bars are capturing should be clearly stated (for example, train/test split, initialization, random drawing of some parameter, or overall run with given experimental conditions).
        \item The method for calculating the error bars should be explained (closed form formula, call to a library function, bootstrap, etc.)
        \item The assumptions made should be given (e.g., Normally distributed errors).
        \item It should be clear whether the error bar is the standard deviation or the standard error of the mean.
        \item It is OK to report 1-sigma error bars, but one should state it. The authors should preferably report a 2-sigma error bar than state that they have a 96\% CI, if the hypothesis of Normality of errors is not verified.
        \item For asymmetric distributions, the authors should be careful not to show in tables or figures symmetric error bars that would yield results that are out of range (e.g. negative error rates).
        \item If error bars are reported in tables or plots, The authors should explain in the text how they were calculated and reference the corresponding figures or tables in the text.
    \end{itemize}

\item {\bf Experiments Compute Resources}
    \item[] Question: For each experiment, does the paper provide sufficient information on the computer resources (type of compute workers, memory, time of execution) needed to reproduce the experiments?
    \item[] Answer: \answerYes{} 
    \item[] Justification: We provided detailed information about the computation resources in Section 4: a single NVIDIA258
RTX A6000 GPU with 48 GB memory.
    \item[] Guidelines:
    \begin{itemize}
        \item The answer NA means that the paper does not include experiments.
        \item The paper should indicate the type of compute workers CPU or GPU, internal cluster, or cloud provider, including relevant memory and storage.
        \item The paper should provide the amount of compute required for each of the individual experimental runs as well as estimate the total compute. 
        \item The paper should disclose whether the full research project required more compute than the experiments reported in the paper (e.g., preliminary or failed experiments that didn't make it into the paper). 
    \end{itemize}
    
\item {\bf Code Of Ethics}
    \item[] Question: Does the research conducted in the paper conform, in every respect, with the NeurIPS Code of Ethics \url{https://neurips.cc/public/EthicsGuidelines}?
    \item[] Answer: \answerYes{} 
    \item[] Justification: We have checked NeurIPS Code of Ethics. Our submission satisfies all the requirement.
    \item[] Guidelines:
    \begin{itemize}
        \item The answer NA means that the authors have not reviewed the NeurIPS Code of Ethics.
        \item If the authors answer No, they should explain the special circumstances that require a deviation from the Code of Ethics.
        \item The authors should make sure to preserve anonymity (e.g., if there is a special consideration due to laws or regulations in their jurisdiction).
    \end{itemize}

\item {\bf Broader Impacts}
    \item[] Question: Does the paper discuss both potential positive societal impacts and negative societal impacts of the work performed?
    \item[] Answer: \answerYes{} 
    \item[] Justification: We provide Broader Impacts Section in the beginning of Appendix.
    \item[] Guidelines:
    \begin{itemize}
        \item The answer NA means that there is no societal impact of the work performed.
        \item If the authors answer NA or No, they should explain why their work has no societal impact or why the paper does not address societal impact.
        \item Examples of negative societal impacts include potential malicious or unintended uses (e.g., disinformation, generating fake profiles, surveillance), fairness considerations (e.g., deployment of technologies that could make decisions that unfairly impact specific groups), privacy considerations, and security considerations.
        \item The conference expects that many papers will be foundational research and not tied to particular applications, let alone deployments. However, if there is a direct path to any negative applications, the authors should point it out. For example, it is legitimate to point out that an improvement in the quality of generative models could be used to generate deepfakes for disinformation. On the other hand, it is not needed to point out that a generic algorithm for optimizing neural networks could enable people to train models that generate Deepfakes faster.
        \item The authors should consider possible harms that could arise when the technology is being used as intended and functioning correctly, harms that could arise when the technology is being used as intended but gives incorrect results, and harms following from (intentional or unintentional) misuse of the technology.
        \item If there are negative societal impacts, the authors could also discuss possible mitigation strategies (e.g., gated release of models, providing defenses in addition to attacks, mechanisms for monitoring misuse, mechanisms to monitor how a system learns from feedback over time, improving the efficiency and accessibility of ML).
    \end{itemize}
    
\item {\bf Safeguards}
    \item[] Question: Does the paper describe safeguards that have been put in place for responsible release of data or models that have a high risk for misuse (e.g., pretrained language models, image generators, or scraped datasets)?
    \item[] Answer: \answerNA{} 
    \item[] Justification: The paper poses no related risks.
    \item[] Guidelines:
    \begin{itemize}
        \item The answer NA means that the paper poses no such risks.
        \item Released models that have a high risk for misuse or dual-use should be released with necessary safeguards to allow for controlled use of the model, for example by requiring that users adhere to usage guidelines or restrictions to access the model or implementing safety filters. 
        \item Datasets that have been scraped from the Internet could pose safety risks. The authors should describe how they avoided releasing unsafe images.
        \item We recognize that providing effective safeguards is challenging, and many papers do not require this, but we encourage authors to take this into account and make a best faith effort.
    \end{itemize}

\item {\bf Licenses for existing assets}
    \item[] Question: Are the creators or original owners of assets (e.g., code, data, models), used in the paper, properly credited and are the license and terms of use explicitly mentioned and properly respected?
    \item[] Answer: \answerYes{} 
    \item[] Justification: All used code, data and models in this project are properly cited.
    \item[] Guidelines:
    \begin{itemize}
        \item The answer NA means that the paper does not use existing assets.
        \item The authors should cite the original paper that produced the code package or dataset.
        \item The authors should state which version of the asset is used and, if possible, include a URL.
        \item The name of the license (e.g., CC-BY 4.0) should be included for each asset.
        \item For scraped data from a particular source (e.g., website), the copyright and terms of service of that source should be provided.
        \item If assets are released, the license, copyright information, and terms of use in the package should be provided. For popular datasets, \url{paperswithcode.com/datasets} has curated licenses for some datasets. Their licensing guide can help determine the license of a dataset.
        \item For existing datasets that are re-packaged, both the original license and the license of the derived asset (if it has changed) should be provided.
        \item If this information is not available online, the authors are encouraged to reach out to the asset's creators.
    \end{itemize}

\item {\bf New Assets}
    \item[] Question: Are new assets introduced in the paper well documented and is the documentation provided alongside the assets?
    \item[] Answer: \answerNA{} 
    \item[] Justification: The paper does not release new assets.
    \item[] Guidelines:
    \begin{itemize}
        \item The answer NA means that the paper does not release new assets.
        \item Researchers should communicate the details of the dataset/code/model as part of their submissions via structured templates. This includes details about training, license, limitations, etc. 
        \item The paper should discuss whether and how consent was obtained from people whose asset is used.
        \item At submission time, remember to anonymize your assets (if applicable). You can either create an anonymized URL or include an anonymized zip file.
    \end{itemize}

\item {\bf Crowdsourcing and Research with Human Subjects}
    \item[] Question: For crowdsourcing experiments and research with human subjects, does the paper include the full text of instructions given to participants and screenshots, if applicable, as well as details about compensation (if any)? 
    \item[] Answer: \answerNA{} 
    \item[] Justification: The paper does not involve crowdsourcing nor research with human subjects.
    \item[] Guidelines:
    \begin{itemize}
        \item The answer NA means that the paper does not involve crowdsourcing nor research with human subjects.
        \item Including this information in the supplemental material is fine, but if the main contribution of the paper involves human subjects, then as much detail as possible should be included in the main paper. 
        \item According to the NeurIPS Code of Ethics, workers involved in data collection, curation, or other labor should be paid at least the minimum wage in the country of the data collector. 
    \end{itemize}

\item {\bf Institutional Review Board (IRB) Approvals or Equivalent for Research with Human Subjects}
    \item[] Question: Does the paper describe potential risks incurred by study participants, whether such risks were disclosed to the subjects, and whether Institutional Review Board (IRB) approvals (or an equivalent approval/review based on the requirements of your country or institution) were obtained?
    \item[] Answer: \answerNA{} 
    \item[] Justification: The paper does not involve crowdsourcing nor research with human subjects.
    \item[] Guidelines:
    \begin{itemize}
        \item The answer NA means that the paper does not involve crowdsourcing nor research with human subjects.
        \item Depending on the country in which research is conducted, IRB approval (or equivalent) may be required for any human subjects research. If you obtained IRB approval, you should clearly state this in the paper. 
        \item We recognize that the procedures for this may vary significantly between institutions and locations, and we expect authors to adhere to the NeurIPS Code of Ethics and the guidelines for their institution. 
        \item For initial submissions, do not include any information that would break anonymity (if applicable), such as the institution conducting the review.
    \end{itemize}

\end{enumerate}

\end{document}